\theoremstyle{plain}
\newtheorem{theorem}{Theorem}[section]
\newtheorem{proposition}[theorem]{Proposition}
\newtheorem{lemma}[theorem]{Lemma}
\theoremstyle{definition}
\newtheorem{definition}[theorem]{Definition}
\newtheorem{assumption}[theorem]{Assumption}
\theoremstyle{remark}
\renewcommand{\paragraph}[1]{\textbf{#1}~~}
\newcommand{\eg}{\textit{e.g.}}
\newcommand{\ie}{\textit{i.e.}}
\newcommand{\wrt}{\textit{w.r.t.\ }}
\newcommand{\npcname}{\textrm{NPC}}
\newcommand{\rnpcname}{\textrm{RNPC}}
\title{Understanding and Improving Adversarial Robustness of Neural Probabilistic Circuits}
\author{%
  Weixin Chen \\
  University of Illinois Urbana-Champaign \\
  \texttt{weixinc2@illinois.edu} \\
  \And
  Han Zhao \\
  University of Illinois Urbana-Champaign \\
  \texttt{hanzhao@illinois.edu} \\
  % \AND
  % Coauthor \\
  % Affiliation \\
  % Address \\
  % \texttt{email} \\
  % \And
  % Coauthor \\
  % Affiliation \\
  % Address \\
  % \texttt{email} \\
  % \And
  % Coauthor \\
  % Affiliation \\
  % Address \\
  % \texttt{email} \\
}
\begin{document}

\maketitle

\begin{abstract}
Neural Probabilistic Circuits (\npcname s), a new class of concept bottleneck models, comprise an attribute recognition model and a probabilistic circuit for reasoning.
By integrating the outputs from these two modules, \npcname s produce compositional and interpretable predictions.
While offering enhanced interpretability and high performance on downstream tasks, the neural-network-based attribute recognition model remains a black box. 
This vulnerability allows adversarial attacks to manipulate attribute predictions by introducing carefully crafted, subtle perturbations to input images, potentially compromising the final predictions.
In this paper, we theoretically analyze the adversarial robustness of \npcname~and demonstrate that it only depends on the robustness of the attribute recognition model and is independent of the robustness of the probabilistic circuit.
Moreover, we propose \rnpcname, the first robust neural probabilistic circuit against adversarial attacks on the recognition module. \rnpcname~introduces a novel class-wise integration for inference, ensuring a robust combination of outputs from the two modules.
Our theoretical analysis demonstrates that \rnpcname~exhibits provably improved adversarial robustness compared to \npcname. Empirical results on image classification tasks show that \rnpcname~achieves superior adversarial robustness compared to existing concept bottleneck models while maintaining high accuracy on benign inputs.
The code is available at \href{https://github.com/uiuctml/RNPC}{https://github.com/uiuctml/RNPC}.
\end{abstract}

\section{Introduction} \label{sec:intro}
Deep Neural Networks (DNNs) exhibit superior performance across a range of downstream tasks.
However, DNNs are often criticized for their lack of interpretability, making it hard to understand the decision-making process, especially when they are deployed in high-stakes domains, such as legal justice and healthcare~\citep{stop}. 
Concept Bottleneck Models (CBMs)~\citep{cbm, dcr, probcbm, posthoc_cbm, labelfree_cbm} are a class of models that attempt to improve model interpretability by incorporating human-understandable binary concepts (\eg, white color) as an intermediate layer, followed by simple predictors such as linear models. 
This bottleneck enables model predictions to be interpreted using the predicted concepts due to the simplicity of the linear predictors on top of the concepts.
While demonstrating improved~interpretability, CBMs usually suffer from a performance drop compared to DNNs. 
Recently, a new class of concept bottleneck models, Neural Probabilistic Circuits (\npcname s)~\citep{npc}, has been introduced, which offers a promising balance between model interpretability and task performance.
\npcname~consists of an attribute recognition model and a probabilistic circuit~\citep{probabilistic_circuit}.
The attribute recognition model predicts various interpretable categorical attributes (\eg, color) from an input image. The probabilistic circuit supports tractable joint, marginal, and conditional inference over these attributes and the class variable.
By integrating the probability of each instantiation of attributes and the conditional probability of a specific class given that instantiation, \npcname~generates the prediction score for the class.

Despite enhanced transparency in the model architecture, the attribute recognition model within \npcname, implemented using a neural network, remains a black box.
This raises the threat of malicious attacks targeting the attribute recognition model.
Adversarial attacks~\citep{fgsd_attack, pgd_attack, cw_attack}, a typical type of attack against neural networks, attempt to manipulate model predictions by applying carefully crafted, imperceptible perturbations to the input images. 
Such attacks against the attribute recognition model can mislead attribute predictions, potentially compromising \npcname's performance on downstream tasks.

In this paper, we theoretically analyze \npcname's robustness against these attacks, understanding how the robustness of individual modules affects that of the overall model.
Surprisingly, we show that the robustness of the overall model only depends on the robustness of the attribute recognition model, and including a probabilistic circuit does not impact the robustness of the overall model. This is in sharp contrast to the compositional nature of \npcname's estimation error, as demonstrated in~\citet[Theorem 2]{npc}. 
% Compared with other variants of concept bottleneck models, this means that adversarial robustness could be achieved for free by using probabilistic circuits on top of the intermediate concepts, rather than linear predictors.
This means that adversarial robustness can be achieved for free by using probabilistic circuits on top of intermediate concepts, rather than the linear predictors used in conventional CBMs.

% Motivated by the above observation, we propose Robust Neural Probabilistic Circuits (\rnpcname s), the first robust concept bottleneck model, which adopts the same model architecture as \npcname s while introducing a novel class-wise integration approach for inference.
To further improve the adversarial robustness of \npcname, we propose the Robust Neural Probabilistic Circuit (\rnpcname), which adopts the same model architecture as \npcname~while introducing a novel class-wise integration approach for inference.
Specifically, we first partition the attribute space by class, where each class corresponds to a set of high-probability attribute instantiations, and then define the neighborhood for each class to allow perturbations. Rather than focusing on individual attribute instantiations, \rnpcname~integrates the probability over the neighborhood of each class and the conditional probability of a target (class) given the high probability region of that class. 

Theoretically, we show that such class-wise integration enables \rnpcname~to achieve improved adversarial robustness compared to \npcname. We also perform an analysis of \rnpcname's performance on benign inputs. Similar to \npcname, the estimation error of \rnpcname~is compositional and bounded by a linear combination of errors from its individual modules. Moreover, we provide an explicit characterization to quantify the trade-off between \rnpcname's adversarial robustness and benign performance.

Empirical results on diverse image classification datasets demonstrate that \rnpcname~outperforms existing concept bottleneck models in robustness against three types of adversarial attacks across various attack budgets while maintaining high accuracy on benign inputs.
Additionally, we conduct extensive ablation studies, including analyzing the impact of the number of attacked attributes and examining the effect of spurious correlations among various attributes.
% Overall, our empirical findings highlight that the interpretable model \rnpcname~effectively enhances adversarial robustness while delivering superior benign task performance.

Our main contributions are threefold. 
\textbf{1)} We propose the first robust neural probabilistic circuit, named \rnpcname, against adversarial attacks on the attribute recognition model. In particular, \rnpcname~introduces a novel class-wise integration approach for inference, ensuring a robust combination of outputs from different modules.
\textbf{2)} Theoretically, we demonstrate that:
\underline{a)} The robustness of \npcname~and \rnpcname~depends only on the robustness of the attribute recognition model, and introducing a probabilistic circuit on top of the attribute recognition model is free for robustness.
\underline{b)} \rnpcname~is guaranteed to achieve higher robustness than \npcname~under certain conditions.
\underline{c)} \rnpcname~maintains a compositional estimation error on benign inputs.
\underline{d)} There exists a trade-off between \rnpcname's adversarial robustness and benign performance.
\textbf{3)} Empirically, we show that \rnpcname~achieves superior robustness against diverse adversarial attacks compared to various concept bottleneck models, while maintaining high accuracy on benign inputs.

\section{Preliminaries} \label{sec:prelim}
\subsection{Neural probabilistic circuits}
A Neural Probabilistic Circuit (\npcname)~\citep{npc} consists of an attribute recognition model and a probabilistic circuit.
Let $X\in\mathcal{X}$, $Y\in\mathcal{Y}$, $A_k\in\mathcal{A}_k$ denote the input variable, the class variable, and the $k$-th attribute variable, respectively, with their lowercase letters representing the corresponding instantiations.
Consider $K$ attributes, $A_1, \ldots, A_K$, or $A_{1:K}$ in short.
The neural-network-based attribute recognition model takes an image $x$ as input and outputs probability vectors for various attributes. The $k$-th probability vector is denoted as $\left( \mathbb{P}_{\theta_k}(A_k=a_k \mid X=x) \right)_{a_k\in\mathcal{A}_k}$, where $\theta_k$ represents the model parameters related to the $k$-th attribute.
The probabilistic circuit~\citep{probabilistic_circuit} learns the joint distribution of $Y$ and $A_{1:K}$, while also supporting tractable conditional inference such as $\mathbb{P}_w(Y \mid A_{1:K})$, where $w$ represents the circuit's parameters.
Combining the outputs from both the attribute recognition model and the probabilistic circuit, \npcname's prediction score for class $y$ is interpretable, which is the sum of the probability of each instantiation of attributes, weighted by the conditional probability of $y$ given this instantiation, \ie, 
{
\small
\begin{align}
    &\mathbb{P}_{\theta, w}\left(Y=y \mid X=x\right) = \sum_{a_{1:K}} \prod_{k=1}^K \mathbb{P}_{\theta_k}\left(A_k=a_k \mid X=x\right)
    \cdot \mathbb{P}_{w}\left(Y=y \mid A_{1:K} = a_{1:K}\right), \label{eq:npc}
\end{align}
}
where $\theta$ denotes all parameters of the attribute recognition model.
An illustration of \npcname~is in Fig \ref{fig:framework}.

\begin{figure*}[tb]
    \centering
    \includegraphics[width=0.90\linewidth]{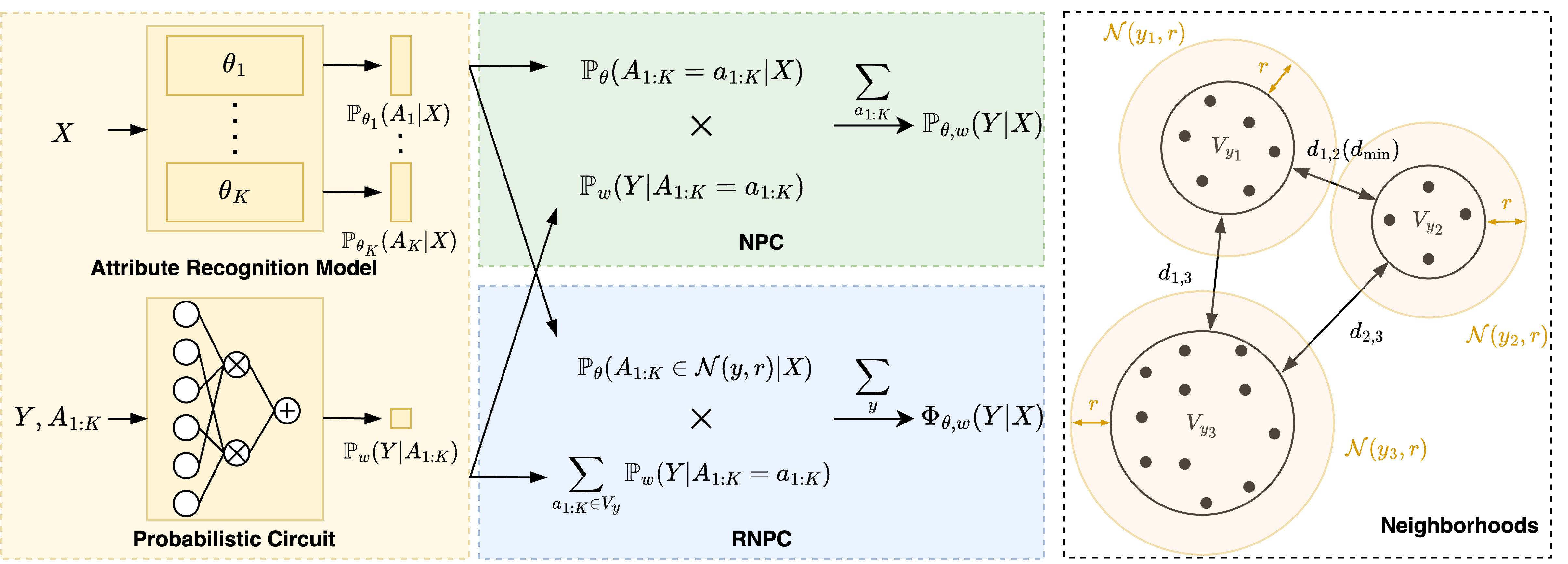}
    \caption{
    \small
    \textbf{Left:} Model architectures and inference procedures of \npcname~and \rnpcname.
    \npcname~and \rnpcname~share an attribute recognition model and a probabilistic circuit.
    During inference, \npcname~employs a node-wise integration to integrate the outputs of the two modules, producing predictions for downstream tasks. In contrast, \rnpcname~adopts a class-wise integration, which leads to more robust task predictions.
    \textbf{Right:} Illustration of three classes in the attribute space. $V_y$ represents the set of attribute nodes with high probabilities. $d_{\min}$ is the minimum inter-class distance and the radius $r$ is defined as $\lfloor\frac{d_{\min}-1}{2}\rfloor$.
    $\mathcal{N}(y, r)$ denotes the neighborhood of class $y$ with radius $r$.
    }
    \label{fig:framework}
\end{figure*}

\subsection{Threat model}
% % original
% Consider a norm-bounded untargeted adversarial attack against the attribute recognition model.
% Specifically, the attacker aims to slightly perturb the input image such that the attribute recognition model's predictions for one or more attributes become incorrect, deviating from the ground-truth labels.
% Let the perturbed input be denoted as $\tilde{X}$, where $\tilde{X}\in \mathbb{B}_p(X, \ell):=\{\tilde{X} \in \mathcal{X}: \Vert \tilde{X}-X \Vert_p \leqslant \ell\}$ and $\ell$ is the budget of the attack under the $p$-norm.
% 
% new
Consider a white-box, norm-bounded, untargeted adversarial attack against the attribute recognition model. Given an input $(x, a_{1:K})$, the attacker seeks to find a perturbed input $\tilde{x}$ within an $\ell_p$-norm ball of radius $\ell$ centered at $x$, such that the \textit{attribute recognition model}'s predictions for \textit{one or more} attributes become incorrect. Assume $m$ attacked attributes $A_{i_1:i_m}$, the attack objective is $\max_{\tilde{x} \in \mathbb{B}_p(x, \ell)} \frac{1}{m} \sum_{k=1}^m \mathcal{L}\left( \mathbb{P}_{\theta_{i_k}}(A_{i_k} \mid X=\tilde{x}), a_{i_k} \right)$ where $\mathbb{B}_p(x, \ell):=\{\tilde{x} \in \mathcal{X}: \Vert \tilde{x}-x \Vert_p \leqslant \ell\}$ and $\mathcal{L}$ denotes a per-attribute loss function, which may vary depending on the chosen attack method.
\section{Understanding the adversarial robustness of neural probabilistic circuits}
\citet{npc} have shown that under Assumption \ref{assump:sufficient} and \ref{assump:complete}, the \textit{estimation error} of \npcname~is compositional, \ie, it can be upper bounded by a linear combination of the error of the attribute recognition model and the error of the probabilistic circuit.
In this section, we delve into the \textit{adversarial robustness} of \npcname, exploring how the adversarial robustness of the attribute recognition model as well as the probabilistic circuit affects that of \npcname.

\begin{assumption}[Sufficient attributes~\citep{npc}]
    The class label $Y$ and the input $X$ are conditionally independent given the attributes $A_{1:K}$, \ie, $Y \perp X \mid A_{1:K}$. 
    \label{assump:sufficient}
\end{assumption}

\begin{assumption}[Complete information~\citep{npc}]
    Given any input, all attributes are conditionally mutually independent, \ie, $A_1\perp A_2\perp\cdots\perp A_K \mid X$.
    \label{assump:complete}
\end{assumption}

\begin{definition}
    The \textit{prediction perturbation} of \npcname~against an adversarial attack on the attribute recognition model is defined as the worst-case total variance (TV) distance between the class distributions conditioned on the vanilla and perturbed inputs, \ie,
    {
    \small
    \begin{align*}
        &\Delta_{\theta, w}^{\npcname}\footnotemark := \mathbb{E}_X\left[\max_{\tilde{X}\in \mathbb{B}_p(X, \ell)}~d_{\mathrm{TV}}\left(\mathbb{P}_{\theta, w}(Y \mid X), \mathbb{P}_{\theta, w}(Y \mid \tilde{X})\right)\right].
    \end{align*}
    }
    \label{def:npc_robustness}
\end{definition}
\footnotetext{For uncluttered notation, we omit the dependency on \(\ell\) if it is clear from the context.}

The metric quantifies the adversarial robustness of \npcname, with a lower value signifying stronger robustness.
Based on this definition, the following theorem decomposes the adversarial robustness of \npcname~under the assumption of complete information.
% Under the assumption of complete information, the following theorem decomposes \npcname's adversarial robustness.

\begin{theorem}[Adversarial robustness of \npcname s]
Under Assumption \ref{assump:complete}, the prediction perturbation of \npcname~is bounded by the worst-case TV distance between the overall attribute distributions conditioned on the vanilla and perturbed inputs, which is further bounded by the sum of the worst-case TV distances for each attribute, \ie,

\resizebox{\textwidth}{!}{%
    \begin{minipage}{\textwidth}
    \begin{align*}
        \Delta_{\theta, w}^{\npcname}
        &\leqslant
        \mathbb{E}_X\left[\max_{\tilde{X}\in \mathbb{B}_p(X, \ell)}~d_{\mathrm{TV}}\left(\mathbb{P}_{\theta}\left(A_{1:K} \mid X\right), \mathbb{P}_{\theta}\left(A_{1:K} \mid \tilde{X}\right)\right)\right] 
        \leqslant \sum_{k=1}^{K} \mathbb{E}_X\left[\max_{\tilde{X}\in \mathbb{B}_p(X, \ell)}~d_{\mathrm{TV}}\left(\mathbb{P}_{\theta_k}\left(A_k \mid X\right), \mathbb{P}_{\theta_k}\left(A_k \mid \tilde{X}\right)\right)\right].
    \end{align*}
    \end{minipage}
}
\label{thm:npc_perturbation}
\end{theorem}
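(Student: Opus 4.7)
The two inequalities correspond to two distinct structural facts: the first is a \emph{data processing}-type contraction, and the second is \emph{tensorization} of total variation over product distributions (enabled by Assumption~\ref{assump:complete}). I will establish them in this order, and then the final bound follows from monotonicity of $\max$ and linearity of expectation.

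\textbf{Step 1 (first inequality, data processing).} The plan is to view \eqref{eq:npc} as the application of a fixed Markov kernel $K(y\mid a_{1:K}) := \mathbb{P}_w(Y=y\mid A_{1:K}=a_{1:K})$ to the attribute distribution, so that $\mathbb{P}_{\theta,w}(Y\mid X=x)$ is the pushforward of $\mathbb{P}_\theta(A_{1:K}\mid X=x)$ through $K$. Since the kernel does not depend on $x$, it is the \emph{same} kernel applied to both the vanilla distribution at $x$ and the perturbed distribution at $\tilde x$. I will then invoke the data processing inequality for total variation: for any two distributions $\mu, \nu$ on $\mathcal A_{1:K}$ and any Markov kernel $K$, $d_{\mathrm{TV}}(K\mu, K\nu) \leqslant d_{\mathrm{TV}}(\mu, \nu)$. (Quick derivation: $|K\mu(y)-K\nu(y)| = |\sum_{a} K(y\mid a)(\mu(a)-\nu(a))| \leqslant \sum_a K(y\mid a)|\mu(a)-\nu(a)|$; summing over $y$ and swapping sums gives $\sum_a |\mu(a)-\nu(a)| = 2 d_{\mathrm{TV}}(\mu,\nu)$.) Applying this pointwise in $x$ and $\tilde x$, taking $\max$ over $\tilde X\in\mathbb{B}_p(X,\ell)$, and then expectation over $X$ gives the first inequality.

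\textbf{Step 2 (second inequality, tensorization under Assumption~\ref{assump:complete}).} Under complete information, for any conditioning input $x$, $\mathbb{P}_\theta(A_{1:K}\mid X=x) = \prod_{k=1}^K \mathbb{P}_{\theta_k}(A_k\mid X=x)$, and similarly at $\tilde x$. The plan is to use the standard tensorization bound
\[
 d_{\mathrm{TV}}\!\left(\textstyle\prod_k p_k,\; \prod_k q_k\right) \;\leqslant\; \sum_{k=1}^{K} d_{\mathrm{TV}}(p_k, q_k),
\]
which follows either from a coupling argument (couple each coordinate independently with optimal marginal couplings; a union bound over coordinate disagreements) or from a telescoping hybrid argument ($\prod p_k \to p_1\prod_{k\geq 2}q_k \to \cdots \to \prod q_k$, bounding each hop by a single marginal TV). Applying this with $p_k = \mathbb{P}_{\theta_k}(A_k\mid X)$ and $q_k = \mathbb{P}_{\theta_k}(A_k\mid \tilde X)$ yields, for each fixed $(X,\tilde X)$,
\[
 d_{\mathrm{TV}}\!\left(\mathbb{P}_\theta(A_{1:K}\mid X),\, \mathbb{P}_\theta(A_{1:K}\mid \tilde X)\right) \leqslant \sum_{k=1}^K d_{\mathrm{TV}}\!\left(\mathbb{P}_{\theta_k}(A_k\mid X),\, \mathbb{P}_{\theta_k}(A_k\mid \tilde X)\right).
\]

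\textbf{Step 3 (closing the inequality).} Finally, I will take $\max_{\tilde X\in \mathbb{B}_p(X,\ell)}$ on both sides; since the max of a sum is at most the sum of maxes, the right-hand side is bounded by $\sum_k \max_{\tilde X} d_{\mathrm{TV}}(\mathbb{P}_{\theta_k}(A_k\mid X),\mathbb{P}_{\theta_k}(A_k\mid \tilde X))$. Taking $\mathbb{E}_X$ and using linearity of expectation produces the claimed sum over $k$ of per-attribute worst-case TV distances.

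\textbf{Main obstacle.} Neither step is combinatorially hard; the only place care is needed is Step~2, where one must be explicit that Assumption~\ref{assump:complete} applies to both the clean input $X$ and the adversarial input $\tilde X$ (so that both conditional distributions factorize), and that the tensorization bound holds uniformly in the choice of $\tilde X$ so that the subsequent $\max$ and $\mathbb{E}_X$ operations pass through. Step~1 is essentially an observation once \eqref{eq:npc} is rewritten as a kernel pushforward, and no assumption is required for it.
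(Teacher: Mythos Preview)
Your proposal is correct and follows essentially the same approach as the paper's proof: the paper proves the first inequality by expanding the TV distance, pulling the kernel $\mathbb{P}_w(Y\mid A_{1:K})$ outside the absolute value via the triangle inequality, and summing over $y$ (exactly your ``quick derivation'' of data processing), and proves the second by the standard tensorization bound for TV over the product $\prod_k \mathbb{P}_{\theta_k}(A_k\mid X)$. The only cosmetic difference is that you name these steps as ``data processing'' and ``tensorization'' while the paper carries out the same computations explicitly; your observation that Step~1 does not require Assumption~\ref{assump:complete} is correct and slightly sharper than the paper's presentation, which invokes the factorization from the first line.
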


We denote the first bound as $\Lambda_{\npcname}$.
Theorem \ref{thm:npc_perturbation} demonstrates that the prediction perturbation of \npcname~is upper bounded by that of the attribute recognition model. Different from typical DNNs whose robustness is influenced by the robustness of each layer~\citep{xiaoyi2020layer, rahnama2020robust}, the robustness of \npcname~depends solely on that of the attribute recognition model. 
Adding a probabilistic circuit on top does not affect the robustness of \npcname. 
Note that this is in sharp contrast to typical CBMs, where the linear-layer-based predictors adversely affect the robustness of the overall model~\citep{robustness_cbm2}.

\section{Improving the adversarial robustness of neural probabilistic circuits} \label{sec:improve}
In this section, we propose Robust Neural Probabilistic Circuits (\rnpcname s). \rnpcname s introduce a novel approach for integrating the outputs from the attribute recognition model and the probabilistic circuit, leading to class predictions that are provably more robust than those of \npcname s.

\subsection{Notation and definitions} \label{sec:notation}
Let $D:=\left\{\left(x, a_{1: K}, y\right)\right\}$ denote a dataset, and let $V:= \left\{ a_{1:K}: \mathbb{P}_D(A_{1:K}=a_{1:K}) \geqslant \gamma \right\}$ denote the corresponding set in the attribute space that has a high probability mass, specifically larger than a constant $\gamma \geq 0$.
We partition $V$ according to the most probable class $a_{1:K}$ is in, \ie, if $y^*=\arg\max_{y\in\mathcal{Y}} \mathbb{P}_D(Y=y\mid A_{1:K}=a_{1:K})$, then $a_{1:K}\in V_{y^*}$.
Overall, $V = \bigcup_{y\in\mathcal{Y}}V_y$ and $V_i \cap V_j = \emptyset,~\forall i \neq j$.
Let $\Omega$ denote the whole attribute space, and let $V^c:=\Omega \backslash V$ denote the complement of $V$, which is the set of attribute instantiations with a probability mass at most $\gamma$. In the following, we mainly focus on the attribute set $V$ and name each $a_{1:K}$ as an (attribute) node.
The Hamming distance between two nodes, say $a_{1:K}$ and $a_{1:K}^{\prime}$, is defined as the number of attributes in which the two nodes differ, \ie, $\operatorname{Ham}(a_{1:K}, a'_{1:K}):= \sum_{k=1}^K \mathbb{I}(a_k \neq a_k^{\prime})$.

\begin{definition}
The \textit{inter-class distance} between class $i$ and class $j$ is defined as the minimum Hamming distance between nodes of $V_i$ and nodes of $V_j$, \ie, $d_{i, j} := \min _{v_i \in V_i, v_j \in V_j}\left\{\operatorname{Ham}\left(v_i, v_j\right)\right\}$.
\end{definition}

\begin{definition}
The \textit{minimum inter-class distance} of an attribute set $V$ is $d_{\min } := \min _{i, j \in\mathcal{Y}, i \neq j}\left\{d_{i, j}\right\}$. The \textit{radius} of an attribute set $V$ is $r := \lfloor\frac{d_{\min}-1}{2}\rfloor$.
\end{definition}

% \begin{assumption}
%     For the given dataset, $d_{\min } \geqslant 1$.
% \end{assumption}
% The assumption is valid because we rely on attributes to identify an input. If $d_{\min } = 0$, \ie, two classes share the same set of attributes' values with high probability, then it would be hard to distinguish these two classes using attributes.

\begin{definition}
The \textit{neighborhood} of class $y$ with radius $r$ is defined as the union of $V_y$ and the nodes from $V^c$ whose distance from $V_y$ is not larger than $r$, \ie, $\mathcal{N}(y,r) := V_y~\bigcup~\left\{ a_{1:K}^c\in V^c: \min_{a_{1:K}\in V_y} \operatorname{Ham}\left( a_{1:K}^c, a_{1:K} \right) \leqslant r \right\}$.
\end{definition}

% Figure \ref{fig:framework} illustrates the attribute set containing nodes of three classes and their neighborhoods.
Figure \ref{fig:framework} illustrates the high-probability attribute nodes and neighborhoods of three classes.

\subsection{Robust neural probabilistic circuits} \label{sec:rnpc}
\rnpcname s employ the same model architecture as \npcname s, but use a novel inference procedure that potentially leads to more robust predictions for downstream tasks.

\paragraph{Model architecture and training.}
The architecture of \rnpcname~is the same as that of \npcname, consisting of an attribute recognition model and a probabilistic circuit.
These two modules are trained independently. 
Specifically, the attribute recognition model is trained by minimizing the sum of cross-entropy losses over all attributes, \ie, \( \min_\theta -\frac{1}{|D|} \sum_{(x, a_{1:K}) \in D} \sum_{k=1}^K \log \mathbb{P}_{\theta_k}(A_k = a_k \mid X=x) \). 
Following~\citet{npc}, the structure of the probabilistic circuit is learned using the LearnSPN algorithm~\citep{learnspn}, and its parameters are optimized with the CCCP algorithm~\citep{cccp}.
Note that \npcname~and \rnpcname~share the same trained attribute recognition model and the same learned probabilistic circuit; the only difference between them lies in the inference procedure.

\paragraph{Intuition.} 
Interpreting $\prod_{k=1}^K \mathbb{P}_{\theta_k}\left(A_k=a_k \mid X\right)$ as the weight of an attribute node $a_{1:K}$ and $\mathbb{P}_{w}\left(Y \mid A_{1:K} = a_{1:K}\right)$ as the contribution of this node to $Y$, \npcname's prediction $\mathbb{P}_{\theta, w}\left(Y \mid X\right)$ becomes a weighted sum of all nodes' contributions to $Y$. While such predictions are generally accurate, they remain vulnerable to adversarial attacks. 
For example, consider an input image $x$ depicting a no-entry sign, with the attribute label $a_{1:K}^* = (\text{red}, \text{circle}, \text{slash})$. For a well-trained model, both the weight of $a_{1:K}^*$ and its contribution to $y_{\text{no-entry}}$ are typically high, leading $y_{\text{no-entry}}$ to be the most probable class. 
However, if an attacker attacks any $m$ attributes, say $A_{1:m}$, \ie, perturbing the predicted probabilities for these attributes, the weight of $a_{1:K}^*$ will decrease due to the reduction in $\mathbb{P}_{\theta_k}\left(A_k=a_k^* \mid X\right)$ for $k \in [m]$. Consequently, the weight of the set $\{a_{1:K}: a_{1:m} \neq a_{1:m}^*, a_{m+1:K} = a_{m+1:K}^*\}$ increases, which lies within $\mathcal{N}(y_{\text{no-entry}}, r)$ when $m \leqslant r$. 
However, the contributions of these nodes (\eg, $(\text{blue}, \text{circle}, \text{slash})$) to $y_{\text{no-entry}}$ could be very small, possibly causing $y_{\text{no-entry}}$ no longer the most probable class. 
Nevertheless, if those ``shifted'' weights can be aggregated and aligned with the high contribution of $a_{1:K}^*$ to $y_{\text{no-entry}}$, the adverse impact of the attacks can be alleviated.

\paragraph{Inference.}
Inspired by the example above, we propose a novel inference procedure that robustly integrates the output of the attribute recognition model and the output of the probabilistic circuit to produce the final predictions.
In particular, instead of adopting \npcname's node-wise integration, we introduce the following class-wise integration,
{
\small
\begin{align}
    &\Phi_{\theta, w}(Y\mid X) = \sum_{\tilde{y} \in \mathcal{Y}} \left( \mathbb{P}_{\theta}\left(A_{1:K} \in \mathcal{N}(\tilde{y}, r) \mid X\right) \cdot \sum_{a_{1:K} \in V_{\tilde{y}}} \mathbb{P}_{w}\left(Y \mid A_{1:K}=a_{1:K}\right) \right). \label{eq:rnpc_unml}
\end{align}
}
% {
% \small
% \begin{align}
%     &\Phi_{\theta, w}(Y\mid X) = \label{eq:rnpc_unml}\\
%     &\sum_{\tilde{y} \in \mathcal{Y}} \left( \mathbb{P}_{\theta}\left(A_{1:K} \in \mathcal{N}(\tilde{y}, r) \mid X\right) \cdot \sum_{a_{1:K} \in V_{\tilde{y}}} \mathbb{P}_{w}\left(Y \mid A_{1:K}=a_{1:K}\right) \right). \nonumber
% \end{align}
% }

% Equation (\ref{eq:rnpc_unml}) characterizes an inference procedure that integrates the weight of (neighborhood of) each class (\ie, $\mathbb{P}_{\theta}\left(A_{1:K} \in \mathcal{N}(\tilde{y}, r) \mid X\right)$) with the contribution of (high-probability nodes of) this class to $Y$ (\ie, $\sum_{a_{1:K} \in V_{\tilde{y}}} \mathbb{P}_{w}\left(Y \mid A_{1:K}=a_{1:K}\right)$).
Equation (\ref{eq:rnpc_unml}) characterizes an inference procedure that integrates the weight of (neighborhood of) each class with the contribution of (high-probability nodes of) this class to $Y$.
Thus, \rnpcname's prediction $\Phi_{\theta, w}(Y\mid X)$ becomes a weighted sum of all classes' contributions to $Y$. In particular, $\Phi_{\theta, w}$ represents an unnormalized probability. The corresponding partition function and normalized probability are denoted as $Z_{\theta}(X) $ $ = \sum_{y\in \mathcal{Y}}\Phi_{\theta, w}(Y=y\mid X) $ $= \sum_{\tilde{y} \in \mathcal{Y}} \left( \mathbb{P}_{\theta}\left(A_{1:K} \in \mathcal{N}(\tilde{y}, r) \mid X\right) \cdot \left|V_{\tilde{y}}\right| \right)$ and $\hat{\Phi}_{\theta, w}(Y\mid X) = \Phi_{\theta, w}(Y\mid X)/Z_{\theta}(X)$, respectively. 
% Figure \ref{fig:framework} highlights the difference in the inference procedure between \npcname~and~\rnpcname.

In an adversarial attack, input perturbations result in perturbations in predicted attribute probabilities. If such an attack shifts the probabilities originally assigned to nodes of $V_y$ to any other nodes within $\mathcal{N}(y, r)$, the class-wise integration ensures that the weight of class $y$ is barely affected.
Meanwhile, the conditional probabilities generated by the probabilistic circuit remain benign. Consequently, the predictions produced by \rnpcname~are robust.

\paragraph{Complexity.}
Let $|f_k|$ denote the size of the $k$-th neural network in the attribute recognition model, and let $|S|$ be the size of the probabilistic circuit (\ie, the number of edges).
We have the following proposition that compares the computational complexities of inference in \npcname~and \rnpcname.
\begin{proposition}
\label{prop:complexity}
    The computational complexities of inference in \npcname~and \rnpcname~are respectively $O\left(\sum_{k=1}^K |f_k| + |S|\cdot \prod_{k=1}^K |\mathcal{A}_k|\right)$ and $O\left(\sum_{k=1}^K |f_k| + |S|\cdot|V|\right)$, with $|V|\leqslant \prod_{k=1}^K |\mathcal{A}_k|$.
\end{proposition}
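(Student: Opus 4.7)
The plan is to decompose each inference procedure into three stages---attribute recognition, enumeration over attribute configurations, and probabilistic-circuit evaluation---and then sum their costs, leveraging the factorization granted by Assumption \ref{assump:complete}.

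First I would handle the preprocessing shared by both models: one forward pass through the $K$ attribute networks caches all marginals $\mathbb{P}_{\theta_k}(A_k\mid X)$ in time $O(\sum_{k=1}^K |f_k|)$. Under Assumption \ref{assump:complete}, any joint probability $\mathbb{P}_\theta(A_{1:K}=a_{1:K}\mid X)$ is the product of these cached marginals, and so costs $O(K)$ per query after the forward pass. Next, for \npcname~I would expand Equation~(\ref{eq:npc}) directly: it is a sum of $\prod_{k=1}^K |\mathcal{A}_k|$ terms, each demanding one cache lookup for the attribute weight plus one fully-instantiated query on the probabilistic circuit, which runs in $O(|S|)$ time on a smooth and decomposable circuit with $|S|$ edges. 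Adding the preprocessing and absorbing the per-term $O(K)$ cost into $O(|S|)$ yields the claimed \npcname~complexity.

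For \rnpcname~I would group the terms in Equation~(\ref{eq:rnpc_unml}) by class. The inner sum $\sum_{a_{1:K}\in V_{\tilde{y}}}\mathbb{P}_w(Y\mid A_{1:K}=a_{1:K})$ needs $|V_{\tilde{y}}|$ circuit queries at cost $O(|S|)$ each; because $\{V_y\}_{y\in\mathcal{Y}}$ partitions $V$ by construction in Section~\ref{sec:notation}, aggregating across classes gives exactly $O(|S|\cdot|V|)$ circuit work. The remaining neighborhood probability $\mathbb{P}_\theta(A_{1:K}\in\mathcal{N}(\tilde{y},r)\mid X)$ is a sum of factorized products of cached marginals over $\mathcal{N}(\tilde{y},r)$; the neighborhoods are pairwise disjoint across classes because $r=\lfloor(d_{\min}-1)/2\rfloor$ is strictly less than half the minimum inter-class Hamming distance, so the enumeration stays inside $\Omega$ without double counting. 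Finally, $|V|\leq\prod_{k=1}^K|\mathcal{A}_k|$ is immediate from $V\subseteq\Omega$.

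The main obstacle will be arguing that the neighborhood-probability computation truly fits inside the claimed $O(|S|\cdot|V|)$ budget, because $|\mathcal{N}(\tilde{y},r)|$ can exceed $|V_{\tilde{y}}|$ once one includes nodes of $V^c$ at Hamming distance up to $r$ from $V_{\tilde{y}}$. My planned fix is amortization: for each node of $V$ one emits only the radius-$r$ Hamming ball around it, so the total enumeration is dominated (up to factors treated as instance constants) by the circuit-query cost already charged to $|S|\cdot|V|$. If the amortization is not tight in a given regime, I would fall back to listing the neighborhood enumeration as a separate additive term and noting that it is subdominant whenever the circuit queries drive the runtime---the standard regime for tractable probabilistic-circuit inference---so the proposition's bound remains valid.
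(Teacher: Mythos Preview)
Your decomposition---forward pass through the $K$ attribute networks, then counting circuit queries---is exactly the paper's argument. The paper simply observes that Equation~(\ref{eq:npc}) requires $\prod_k|\mathcal{A}_k|$ conditional-inference calls on the circuit, Equation~(\ref{eq:rnpc_unml}) requires only $|V|=\sum_{\tilde{y}}|V_{\tilde{y}}|$ such calls, each call costs $O(|S|)$, and $V\subseteq\Omega$ gives the final inequality.

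Where you differ is in rigor rather than method: the paper's proof does not discuss the cost of computing $\mathbb{P}_\theta(A_{1:K}\in\mathcal{N}(\tilde{y},r)\mid X)$ at all. It tacitly treats the circuit queries as the dominant term and omits any accounting for the neighborhood enumeration you flag as the ``main obstacle.'' So your amortization argument and the fallback of recording the enumeration as a subdominant additive term are not needed to match the paper---the paper simply does not raise the issue. Your more careful bookkeeping is a genuine refinement, but for the purpose of reproducing the paper's stated complexity, you can drop that paragraph and the proof goes through verbatim.
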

The detailed proof is deferred to Appendix \ref{app:complexity}. 
% \Cref{prop:complexity} shows that inference in RNPC is at least as efficient as in NPC. 
% Moreover, \rnpcname~is more efficient than \npcname~in terms of the inference complexity; the detailed analysis is provided to 
\Cref{prop:complexity} shows that \rnpcname~is more efficient than \npcname~in terms of the inference complexity.

% \weixin{A comparison of the computational complexity between \npcname~and \rnpcname~is presented in Appendix \ref{app:complexity}.}\han{I feel this part is important so let's move it into the main text. If it takes too much space, we can summarize the comparison into a proposition and then defer the proofs to the appendix.}

\subsection{Theoretical analysis}
In this section, we provide a theoretical analysis of the adversarial robustness and benign performance of \rnpcname. Similar to Definition \ref{def:npc_robustness}, we first define a metric to quantify its adversarial robustness.
\begin{definition}
    The \textit{prediction perturbation} of \rnpcname~against an adversarial attack on the attribute recognition model is defined as the worst-case TV distance between the class distributions conditioned on the vanilla and perturbed inputs, \ie,
    {
    \small
    \begin{align*}
        \Delta_{\theta, w}^{\rnpcname} := \mathbb{E}_X\left[\max_{\tilde{X}\in \mathbb{B}_p(X, \ell)}~d_{\mathrm{TV}}\left(\hat{\Phi}_{\theta, w}(Y \mid X), \hat{\Phi}_{\theta, w}(Y \mid \tilde{X})\right)\right].
    \end{align*}
    }
\end{definition}

\subsubsection{Adversarial robustness of \rnpcname s}

\begin{lemma}[Adversarial robustness of \rnpcname s]
    The prediction perturbation of \rnpcname~is bounded by the worst-case change in probabilities within a neighborhood caused by the attack, \ie,
    {
    \small
    \begin{align*}
    \Delta_{\theta, w}^{\rnpcname}
    \leqslant 
    \mathbb{E}_X\left[\max_{\tilde{X}\in \mathbb{B}_p(X, \ell)}~ \left\{ \max_{\tilde{y}\in\mathcal{Y}} \left| 1-\frac{\mathbb{P}_{\theta}(A_{1:K}\in \mathcal{N}(\tilde{y}, r)\mid \tilde{X})}{\mathbb{P}_{\theta}(A_{1:K}\in \mathcal{N}(\tilde{y}, r)\mid X)} \right| \right\} \right] .
    \end{align*}
    }
    \label{thm:rnpc_perturbation}
\end{lemma}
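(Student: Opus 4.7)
The plan is to reinterpret $\hat{\Phi}_{\theta,w}(Y\mid X)$ as a mixture whose components do not depend on $X$, apply the convexity of $d_{\mathrm{TV}}$ to reduce the target quantity to the TV distance between the mixing weights on $\mathcal{Y}$, and then bound that TV by a short case analysis on the normalization ratio. Concretely, set $p_{\tilde y}(X):=\mathbb{P}_\theta(A_{1:K}\in\mathcal{N}(\tilde y,r)\mid X)$, $c_{\tilde y}(y):=\sum_{a\in V_{\tilde y}}\mathbb{P}_w(Y=y\mid A_{1:K}=a)$, $\pi_{\tilde y}(y):=c_{\tilde y}(y)/|V_{\tilde y}|$, and $q_{\tilde y}(X):=p_{\tilde y}(X)|V_{\tilde y}|/Z_\theta(X)$. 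Since $\sum_y\mathbb{P}_w(Y=y\mid A=a)=1$ for every $a$, we have $\sum_y c_{\tilde y}(y)=|V_{\tilde y}|$, which makes $\pi_{\tilde y}$ a valid distribution on $\mathcal{Y}$ and $q(X)$ a valid distribution on $\mathcal{Y}$. Equation~(\ref{eq:rnpc_unml}) then rewrites as $\hat{\Phi}_{\theta,w}(Y\mid X)=\sum_{\tilde y}q_{\tilde y}(X)\,\pi_{\tilde y}(Y)$, and because the components $\{\pi_{\tilde y}\}$ are shared between $X$ and $\tilde X$, the standard convexity inequality for TV gives $d_{\mathrm{TV}}(\hat{\Phi}_{\theta,w}(Y\mid X),\hat{\Phi}_{\theta,w}(Y\mid\tilde X))\le d_{\mathrm{TV}}(q(X),q(\tilde X))$.

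Next, introduce $\rho_{\tilde y}:=p_{\tilde y}(\tilde X)/p_{\tilde y}(X)$ and $\bar\rho:=Z_\theta(\tilde X)/Z_\theta(X)$. A direct computation verifies $\bar\rho=\sum_{\tilde y}q_{\tilde y}(X)\rho_{\tilde y}$ and $q_{\tilde y}(\tilde X)=\rho_{\tilde y}\,q_{\tilde y}(X)/\bar\rho$, so $d_{\mathrm{TV}}(q(X),q(\tilde X))$ admits the two equivalent one-sided forms $\sum_{\rho_{\tilde y}>\bar\rho}q_{\tilde y}(X)(\rho_{\tilde y}-\bar\rho)/\bar\rho$ and $\sum_{\rho_{\tilde y}<\bar\rho}q_{\tilde y}(X)(\bar\rho-\rho_{\tilde y})/\bar\rho$. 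Let $M:=\max_{\tilde y}|1-\rho_{\tilde y}|$, so that $\rho_{\tilde y},\bar\rho\in[1-M,1+M]$. If $\bar\rho\ge 1$, I use $\rho_{\tilde y}-\bar\rho\le(1+M)-\bar\rho$ in the first sum and conclude $d_{\mathrm{TV}}(q(X),q(\tilde X))\le(1+M)/\bar\rho-1\le M$; if $\bar\rho\le 1$, I use $\bar\rho-\rho_{\tilde y}\le\bar\rho-(1-M)$ in the second sum and conclude $d_{\mathrm{TV}}(q(X),q(\tilde X))\le 1-(1-M)/\bar\rho\le M$. In either case the bound equals $M=\max_{\tilde y}|1-\rho_{\tilde y}|$, and taking $\max_{\tilde X\in\mathbb{B}_p(X,\ell)}$ and then $\mathbb{E}_X$ on both sides finishes the proof.

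The main obstacle is the last step. A crude estimate via Cauchy--Schwarz combined with a Popoviciu-type variance bound yields only $d_{\mathrm{TV}}(q(X),q(\tilde X))\le M/(2\bar\rho)$, which can exceed $M$ when $\bar\rho<1/2$. The key trick is to exploit the two equivalent one-sided expressions for $d_{\mathrm{TV}}(q,q')$ and to select whichever expression matches the side of~$1$ on which $\bar\rho$ lies, so that the matching endpoint of $[1-M,1+M]$ cancels against $\bar\rho$ in the final algebra. The mixture step is comparatively routine, hinging only on the marginalization identity $\sum_y\mathbb{P}_w(Y=y\mid A=a)=1$, which is what upgrades $\pi_{\tilde y}$ to a probability distribution and enables the TV-convexity reduction.
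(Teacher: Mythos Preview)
Your proof is correct and follows a genuinely different route from the paper's. The paper works directly on the unnormalized scores: it puts $d_{\mathrm{TV}}(\hat\Phi(\cdot\mid X),\hat\Phi(\cdot\mid\tilde X))$ over the common denominator $Z_\theta(X)Z_\theta(\tilde X)$, adds and subtracts $Z_\theta(\tilde X)\Phi_{\theta,w}(Y\mid\tilde X)$, and applies the triangle inequality to split into $\tfrac{1}{2Z_\theta(X)}\bigl[\sum_Y|\Phi(Y\mid X)-\Phi(Y\mid\tilde X)|+|Z_\theta(X)-Z_\theta(\tilde X)|\bigr]$. Both pieces are bounded by $\sum_{\tilde y}|p_{\tilde y}(X)-p_{\tilde y}(\tilde X)|\,|V_{\tilde y}|$, and dividing by $Z_\theta(X)=\sum_{\tilde y}p_{\tilde y}(X)|V_{\tilde y}|$ followed by the mediant inequality $\tfrac{\sum a_ib_i}{\sum c_ib_i}\le\max_i a_i/c_i$ yields $\max_{\tilde y}|1-\rho_{\tilde y}|$ in one line. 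In your notation, the paper's intermediate bound is simply $\sum_{\tilde y}q_{\tilde y}(X)|1-\rho_{\tilde y}|$, which is trivially at most $M$. Your mixture reformulation is more structural: it makes immediately transparent \emph{why} the circuit parameters $w$ vanish from the bound (they live only in the $X$-independent components $\pi_{\tilde y}$), and the convexity reduction to $d_{\mathrm{TV}}(q(X),q(\tilde X))$ is clean. The price is that your endgame---the case split on $\bar\rho\gtrless 1$ paired with the two one-sided TV expressions---does more work than the paper's single mediant step.

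One minor gap: in the case $\bar\rho<1$, your chain $1-(1-M)/\bar\rho\le M$ is equivalent to $(1-M)(1-\bar\rho)\ge 0$, which fails when $M>1$ and $\bar\rho<1$. The final conclusion $d_{\mathrm{TV}}\le M$ is unaffected since $d_{\mathrm{TV}}\le 1<M$ there, but you should handle $M\ge 1$ as a separate trivial case.
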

We denote this bound as $\Lambda_{\rnpcname}$. Similar to \npcname, the adversarial robustness of \rnpcname~depends solely on the attribute recognition model and is not influenced by the probabilistic circuit. In particular, one can see from the above bound that, as long as the perturbation does not change the probabilities of attributes encoded in each ball $\mathcal{N}(\tilde{y}, r)$, the upper bound is 0, \ie, the prediction is robust.
% Nevertheless, \rnpcname~provides a different upper bound on the prediction perturbation against the attack. Specifically, this bound focuses on the change in probabilities within a neighborhood, rather than the TV distance between the vanilla and perturbed attribute distributions (\ie, half the sum of changes in individual node probabilities).

\subsubsection{Comparison in adversarial robustness}
\npcname~and \rnpcname~adopt node-wise and class-wise integration approaches during inference, respectively, leading to different upper bounds on the prediction perturbation. 
Here, we investigate the relationship between these bounds and compare the adversarial robustness of \npcname~and \rnpcname.

% Let the prediction perturbation bounds for \npcname~and \rnpcname~be denoted as,

% \resizebox{0.5\textwidth}{!}{%
%     \begin{minipage}{0.5\textwidth}
%     \begin{align*}
%         &\Lambda_{\npcname} := \mathbb{E}_X\left[\max_{\tilde{X}\in \mathbb{B}_p(X, \ell)}~d_{\mathrm{TV}}\left(\mathbb{P}_{\theta}(A_{1:K} \mid X), \mathbb{P}_{\theta}(A_{1:K} \mid \tilde{X})\right)\right], \\
%         &\Lambda_{\rnpcname} := \mathbb{E}_X\left[\max_{\tilde{X}\in \mathbb{B}_p(X, \ell)}~ \left\{ \max_{\tilde{y}\in\mathcal{Y}} \left| 1-\frac{\mathbb{P}_{\theta}(A_{1:K}\in \mathcal{N}(\tilde{y}, r)\mid \tilde{X})}{\mathbb{P}_{\theta}(A_{1:K}\in \mathcal{N}(\tilde{y}, r)\mid X)} \right| \right\} \right].
%     \end{align*}
%     \end{minipage}
% }

\begin{theorem}[Comparison in adversarial robustness]
    Consider a $p$-norm-bounded adversarial attack with a budget of $\ell$.
    Assume the attribute recognition model $f_\theta$ is randomized and satisfies $\epsilon$-Differential Privacy (DP) with respect to the $p$-norm.
    Let the probability of an attribute taking a specific value correspond to the expected model output, \ie, $\mathbb{P}_{\theta_k}(A_k=a_k \mid X) = \mathbb{E}[f_{\theta_k}(X)_{a_k}]$, where the expectation is taken over the randomness within the model.
    Under Assumption \ref{assump:complete}, the following holds:
    $\Lambda_{\npcname} \leqslant \frac{|\mathcal{A}_1|\ldots|\mathcal{A}_K|}{2}\alpha_\epsilon$ and $\Lambda_{\rnpcname} \leqslant \alpha_\epsilon$, where $\alpha_\epsilon := \max\{1-e^{-K\epsilon}, e^{K\epsilon}-1\}$.
    Moreover, there exist instances where both inequalities simultaneously hold as equalities.
    \label{thm:comparison}
\end{theorem}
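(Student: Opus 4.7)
The plan is to bound $\Lambda_{\npcname}$ and $\Lambda_{\rnpcname}$ via a single DP-style concentration argument and then construct an instance realizing both as equalities. First I would lift the $\epsilon$-DP property of $f_\theta$ to the attribute probabilities: since taking expectations of $[0,1]$-valued components preserves DP (the standard post-processing fact), for every $\tilde X\in\mathbb{B}_p(X,\ell)$ and every $(k,a_k)$,
\[
e^{-\epsilon}\,\mathbb{P}_{\theta_k}(A_k=a_k\mid X)\;\leqslant\;\mathbb{P}_{\theta_k}(A_k=a_k\mid \tilde X)\;\leqslant\;e^{\epsilon}\,\mathbb{P}_{\theta_k}(A_k=a_k\mid X).
\]
Multiplying these $K$ inequalities and using Assumption~\ref{assump:complete} to factorize $\mathbb{P}_\theta(a_{1:K}\mid X)=\prod_k\mathbb{P}_{\theta_k}(A_k=a_k\mid X)$ lifts the bound to the joint: $e^{-K\epsilon}\mathbb{P}_\theta(a_{1:K}\mid X)\leqslant \mathbb{P}_\theta(a_{1:K}\mid\tilde X)\leqslant e^{K\epsilon}\mathbb{P}_\theta(a_{1:K}\mid X)$, whence $\bigl|\mathbb{P}_\theta(a_{1:K}\mid X)-\mathbb{P}_\theta(a_{1:K}\mid\tilde X)\bigr|\leqslant \alpha_\epsilon\,\mathbb{P}_\theta(a_{1:K}\mid X)\leqslant \alpha_\epsilon$.

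For $\Lambda_\npcname$ I would then expand the TV distance as $\tfrac12\sum_{a_{1:K}}\bigl|\mathbb{P}_\theta(a_{1:K}\mid X)-\mathbb{P}_\theta(a_{1:K}\mid \tilde X)\bigr|$, apply the pointwise bound $\alpha_\epsilon$, and sum over the $|\mathcal{A}_1|\cdots|\mathcal{A}_K|$ cells, then take the attack supremum and the outer expectation. For $\Lambda_\rnpcname$ I would rewrite
\[
\frac{\mathbb{P}_\theta(A_{1:K}\in\mathcal{N}(\tilde y,r)\mid\tilde X)}{\mathbb{P}_\theta(A_{1:K}\in\mathcal{N}(\tilde y,r)\mid X)}\;=\;\sum_{a_{1:K}\in\mathcal{N}(\tilde y,r)}\frac{\mathbb{P}_\theta(a_{1:K}\mid X)}{\mathbb{P}_\theta(A_{1:K}\in\mathcal{N}(\tilde y,r)\mid X)}\cdot\frac{\mathbb{P}_\theta(a_{1:K}\mid\tilde X)}{\mathbb{P}_\theta(a_{1:K}\mid X)},
\]
recognize the right-hand side as a convex combination (the mixing weights sum to $1$) of per-cell ratios, each of which lies in $[e^{-K\epsilon},e^{K\epsilon}]$. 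The convex combination therefore also lies in this interval, so $\bigl|1-\text{ratio}\bigr|\leqslant\alpha_\epsilon$, and taking the inner $\max_{\tilde y}$ followed by the outer expectation delivers the $\Lambda_\rnpcname$ bound.

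For the simultaneous-equality claim I would construct an explicit instance by implementing $f_\theta$ with a randomization mechanism that saturates DP along a chosen pair $(X^*,\tilde X^*)$ (for example, a Laplace or exponential mechanism hitting the DP boundary), which under Assumption~\ref{assump:complete} forces every joint ratio to $e^{K\epsilon}$ in the same direction. To realize both tightness conditions at once I would arrange (i) the perturbation so that every cell inside some $\mathcal{N}(\tilde y,r)$ shifts monotonically (making the convex combination in the $\Lambda_\rnpcname$ bound attain its endpoint), and (ii) the attribute space and class partition to degenerate sufficiently (e.g., small $K$ with mass concentrated on a few cells whose perturbations individually approach $\alpha_\epsilon$) that the per-cell summation for $\Lambda_\npcname$ becomes sharp. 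The main obstacle is precisely this simultaneous tightness: the cellwise bound $|P-Q|\leqslant\alpha_\epsilon$ is loose whenever the constraint $\sum_{a}\mathbb{P}_\theta(a\mid X)=1$ spreads mass across many cells, while the neighborhood bound is tight under coherent shifts, so the construction must sit in a boundary regime where the attribute cube effectively collapses onto a single neighborhood -- the summation and the convex-combination bounds only coincide there.
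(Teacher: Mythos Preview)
Your derivation of the two upper bounds matches the paper's: lift the $\epsilon$-DP guarantee to per-coordinate multiplicative bounds via the expected-output-stability property, multiply over $k$ under Assumption~\ref{assump:complete} to obtain $e^{-K\epsilon}\leqslant \mathbb{P}_\theta(a_{1:K}\mid\tilde X)/\mathbb{P}_\theta(a_{1:K}\mid X)\leqslant e^{K\epsilon}$ for every joint cell, then sum over $\Omega$ (for $\Lambda_{\npcname}$) or pass to the neighborhood ratio (for $\Lambda_{\rnpcname}$). Your convex-combination phrasing for the neighborhood ratio is a cosmetic variant of the paper's direct summation of $\mathbb{P}_\theta(a_{1:K}\mid X)\leqslant e^{K\epsilon}\mathbb{P}_\theta(a_{1:K}\mid\tilde X)$ over $a_{1:K}\in\mathcal{N}(\tilde y,r)$.

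The divergence is in the tightness construction. The paper does \emph{not} saturate DP multiplicatively; it builds an explicit \emph{additive} instance. One class $\tilde y$ carries all the mass, $\mathbb{P}_\theta(A_{1:K}\in\mathcal{N}(\tilde y,r)\mid X)=1$, and under the attack every cell's probability is prescribed to move by exactly $\pm\alpha_\epsilon$, with the signs chosen so that the neighborhood total drops from $1$ to $1-\alpha_\epsilon$ and the complement rises by $\alpha_\epsilon$ (concretely: $n$ cells up and $n{+}1$ down inside $\mathcal{N}(\tilde y,r)$; $m{+}1$ up and $m$ down outside). Then $|\mathbb{P}_\theta(a\mid X)-\mathbb{P}_\theta(a\mid\tilde X)|=\alpha_\epsilon$ for every $a$, so the TV sum equals $\tfrac{|\mathcal{A}_1|\cdots|\mathcal{A}_K|}{2}\alpha_\epsilon$; and $\bigl|1-(1-\alpha_\epsilon)/1\bigr|=\alpha_\epsilon$ gives $\Lambda_{\rnpcname}=\alpha_\epsilon$. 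This does not rely on the degenerate ``small $K$, few cells'' regime you sketch, and crucially it does not force all cell ratios to coincide at $e^{K\epsilon}$: the $\Lambda_{\rnpcname}$ bound is hit not at the convex-combination endpoint but because the denominator $\mathbb{P}_\theta(A_{1:K}\in\mathcal{N}(\tilde y,r)\mid X)$ equals $1$, so an additive drop of $\alpha_\epsilon$ already saturates $|1-\text{ratio}|$. Your diagnosis of the obstacle (uniform multiplicative saturation conflicts with summing to one) is correct, but the remedy the paper takes is to switch to per-cell additive shifts of size $\alpha_\epsilon$ rather than to collapse the attribute cube.
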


The above theorem establishes the relationship between $\Lambda_{\npcname}$ and $\Lambda_{\rnpcname}$ under the condition of DP.
Specifically, compared to $\Lambda_{\rnpcname}$, $\Lambda_{\npcname}$ is bounded by an exponentially larger value that scales exponentially with the number of attributes.
This larger bound potentially leads to significantly weaker adversarial robustness for \npcname, highlighting the robustness improvement achieved by \rnpcname. 
% We provide a more in-depth and direct comparison between the two bounds in Theorem \ref{thm:direct_comparison} in Appendix \ref{app:comparison}.
% In addition,  provides a more direct comparison between the two bounds.
% A more direct comparison between the bounds is provided in Theorem \ref{thm:direct_comparison}.

\subsubsection{Benign task performance of \rnpcname s}
% In this section, we focus on \rnpcname s' benign performance on downstream tasks and answer the following questions:
% \emph{Does \rnpcname~exhibit a compositional estimation error similar to \npcname?}
% Specifically, \textit{is the estimation error of \rnpcname~bounded by a linear combination of the errors from the attribute recognition model and the probabilistic circuit?}
% Furthermore, \textit{while Theorem \ref{thm:comparison} indicates the robustness improvement of \rnpcname, is there a trade-off in its prediction accuracy?}

In this section, we focus on \rnpcname's benign task performance and answer the following questions:
\emph{Does \rnpcname~exhibit a compositional estimation error similar to \npcname?}
% Specifically, \textit{is the estimation error of \rnpcname~bounded by a linear combination of the errors from the attribute recognition model and the probabilistic circuit?}
Furthermore, \textit{while Theorem \ref{thm:comparison} indicates the robustness improvement of \rnpcname, is there a trade-off in its prediction accuracy?}

% Let $\hat{\Phi}^*$ denote the optimal \rnpcname, with its expression given in Proposition \ref{prop:optimal} in Appendix \ref{app:benign}.

% \begin{proposition}[Optimal \rnpcname s]
%     Under Assumption \ref{assump:sufficient}, the optimal \rnpcname~w.r.t. the expected TV distance between the predicted distribution $\hat{\Phi}_{\theta, w}(Y\mid X)$ and the ground-truth distribution $\mathbb{P}^*(Y\mid X)$ is $\hat{\Phi}^*(Y\mid X) := \frac{\Phi^*(Y\mid X)}{Z^*(X)}$, where $Z^*(X) := \sum_{\tilde{y}} \mathbb{P}^*(A_{1:K}\in\mathcal{N}(\tilde{y},r)\mid X) \cdot |V_{\tilde{y}}|$ and
%     \resizebox{0.49\textwidth}{!}{%
%     \begin{minipage}{0.49\textwidth}
%     \begin{align*}
%     \Phi^*(Y\mid X) &:= \sum_{\tilde{y}} \mathbb{P}^*(A_{1:K}\in\mathcal{N}(\tilde{y},r)\mid X)\cdot \sum_{\tilde{a}_{1:K}\in V_{\tilde{y}}}\mathbb{P}^*(Y\mid A_{1:K}=\tilde{a}_{1:K}).
%     % Z^*(X) &:= \sum_{\tilde{y}} \mathbb{P}^*(A_{1:K}\in\mathcal{N}(\tilde{y},r)\mid X) \cdot |V_{\tilde{y}}|.
%     \end{align*}
%     \end{minipage}
%     }

%     \label{prop:optimal}
% \end{proposition}
\begin{proposition}[Optimal \rnpcname s]
    The optimal \rnpcname~\wrt the expected TV distance between the predicted distribution $\hat{\Phi}_{\theta,w}(Y\mid X)$ and the ground-truth distribution $\mathbb{P}^*(Y\mid X)$ is $\hat{\Phi}^*(Y\mid X) := \Phi^*(Y\mid X)/Z^*(X)$, where
    % \resizebox{\textwidth}{!}{%
    % \begin{minipage}{\textwidth}
    % \begin{align*}
    % \Phi^*(Y\mid X) &:= \sum_{\tilde{y}} \mathbb{P}^*(A_{1:K}\in\mathcal{N}(\tilde{y},r)\mid X)\cdot \sum_{\tilde{a}_{1:K}\in V_{\tilde{y}}}\mathbb{P}^*(Y\mid A_{1:K}=\tilde{a}_{1:K}),
    % % Z^*(X) &:= \sum_{\tilde{y}} \mathbb{P}^*(A_{1:K}\in\mathcal{N}(\tilde{y},r)\mid X) \cdot |V_{\tilde{y}}|.
    % \end{align*}
    % \end{minipage}
    % }
    {
    \small
    \begin{align*}
    \Phi^*(Y\mid X) &:= \sum_{\tilde{y}} \left( \mathbb{P}^*(A_{1:K}\in\mathcal{N}(\tilde{y},r)\mid X)\cdot \sum_{\tilde{a}_{1:K}\in V_{\tilde{y}}}\mathbb{P}^*(Y\mid A_{1:K}=\tilde{a}_{1:K}) \right),
    % Z^*(X) &:= \sum_{\tilde{y}} \mathbb{P}^*(A_{1:K}\in\mathcal{N}(\tilde{y},r)\mid X) \cdot |V_{\tilde{y}}|.
    \end{align*}
    }
    and $Z^*(X)$ is the partition function. Here, $\mathbb{P}^*$ denotes the respective ground-truth distributions.
    \label{prop:optimal}
\end{proposition}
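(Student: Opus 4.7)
The plan is to derive the expression for $\hat{\Phi}^*$ by substituting the ground-truth module distributions into the RNPC inference formula, and then to verify that this choice minimizes the expected total variation distance to $\mathbb{P}^*(Y\mid X)$ over the RNPC parameter space.

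\textbf{Step 1 (form).} Fix an input $X$. By \cref{eq:rnpc_unml}, the unnormalized RNPC prediction $\Phi_{\theta, w}(Y\mid X)$ depends on the two modules only through two families of aggregated quantities: the neighborhood probabilities $\mathbb{P}_\theta(A_{1:K} \in \mathcal{N}(\tilde{y}, r) \mid X)$ for each $\tilde{y}\in\mathcal{Y}$, and the summed conditionals $\sum_{a_{1:K} \in V_{\tilde{y}}} \mathbb{P}_w(Y \mid A_{1:K} = a_{1:K})$. Substituting the ground-truth distributions $\mathbb{P}^*$ in place of $\mathbb{P}_\theta$ and $\mathbb{P}_w$ directly yields $\Phi^*(Y\mid X)$ as defined in the proposition; summing over $Y$ gives $Z^*(X)$, and normalization produces $\hat{\Phi}^*(Y\mid X)$. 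This verifies that $\hat{\Phi}^*$ is indeed a valid RNPC, realized by the parameter choice $(\mathbb{P}^*, \mathbb{P}^*)$.

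\textbf{Step 2 (optimality).} For any alternative $(\theta, w)$, I would show $\mathbb{E}_X[d_{\mathrm{TV}}(\hat{\Phi}_{\theta,w}, \mathbb{P}^*)] \geq \mathbb{E}_X[d_{\mathrm{TV}}(\hat{\Phi}^*, \mathbb{P}^*)]$. Since the RNPC prediction sees the modules only through the two aggregated families from Step 1, the optimization reduces to choosing these aggregates. A perturbation argument then shows that shifting either aggregate away from its ground-truth value cannot reduce the expected TV distance: the deviation propagates linearly through the unnormalized $\Phi_{\theta,w}$, and the normalization preserves this directional discrepancy in an $L^1$ sense, so the expected TV distance can only grow (or stay fixed when the shift lies in the kernel of the RNPC map).

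The main obstacle will be Step 2. Unlike NPC, whose prediction under ground-truth modules recovers $\mathbb{P}^*(Y\mid X)$ exactly (making the TV distance trivially zero and optimality immediate), the class-wise integration of RNPC does not in general recover $\mathbb{P}^*(Y\mid X)$, so the irreducible error $d_{\mathrm{TV}}(\hat{\Phi}^*, \mathbb{P}^*)$ is strictly nonzero. The optimality argument must therefore carefully track how perturbations of the two aggregate families interact with the ratio structure in $\hat{\Phi}_{\theta,w} = \Phi_{\theta,w}/Z_\theta$. I expect the cleanest route to exploit the decoupled bilinear dependence of $\Phi_{\theta,w}$ on the two aggregate families together with convexity of the $L^1$ norm, while handling the fact that the partition function $Z_\theta(X)$ couples only to the attribute-side aggregate and leaves the circuit-side aggregate unconstrained.
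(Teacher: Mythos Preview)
Your Step 1 is fine and matches the paper. Step 2 is where you diverge, and where there is a gap.

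The paper does \emph{not} attempt to show that $\hat{\Phi}^*$ minimizes the actual expected TV distance $\mathbb{E}_X[d_{\mathrm{TV}}(\hat{\Phi}_{\theta,w},\mathbb{P}^*)]$ over all $(\theta,w)$. Instead it first establishes a compositional upper bound on this quantity (the ``Prediction error of \rnpcname'' theorem in the appendix, \cref{thm:general_error}): the prediction error is bounded by a sum of three terms, the first depending only on the attribute module (and vanishing when $\mathbb{P}_\theta=\mathbb{P}^*$), the second only on the circuit (and vanishing when $\mathbb{P}_w=\mathbb{P}^*$), and the third depending only on the ground-truth distribution. The proposition then follows in one line: plugging in the ground-truth modules minimizes this \emph{bound} (to the third term alone), and in that case $\hat{\Phi}_{\theta,w}=\hat{\Phi}^*$. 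So ``optimal'' in the paper's argument is effectively ``minimizes the compositional upper bound,'' which is immediate once that bound is in hand.

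Your Step 2 aims at the stronger claim that $\hat{\Phi}^*$ minimizes the TV distance itself, and the sketch you give does not establish it. The obstacle you yourself flag---that $\hat{\Phi}^*\neq\mathbb{P}^*$ in general---is exactly why a perturbation of the aggregates away from their ground-truth values need not \emph{increase} the TV distance: one could in principle choose $\mathbb{P}_w$ (which does not see $X$) so as to push $\hat{\Phi}_{\theta,w}$ closer to $\mathbb{P}^*$ than $\hat{\Phi}^*$ is. The bilinearity/convexity heuristics you invoke point the wrong way (convexity of the $L^1$ norm yields upper bounds on a convex combination, not lower bounds at an interior point), and the ratio $\Phi/Z$ is not jointly convex in the two aggregate families. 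Without the bound-based reading the paper adopts, Step 2 as written would not go through.
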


\begin{definition}
    The \textit{estimation error} of \rnpcname~is defined as the expected TV distance between the predicted distribution and the optimal distribution, \ie,
    {
    \small
    \begin{align*}
        \hat{\varepsilon}_{\theta, w}^{\rnpcname} := \mathbb{E}_X\left[ d_{\mathrm{TV}}\left( \hat{\Phi}_{\theta,w}(Y\mid X), \hat{\Phi}^*(Y\mid X) \right) \right].
    \end{align*}
    }
\end{definition}

\begin{theorem}[Compositional estimation error]
    The estimation error of \rnpcname~is bounded by a linear combination of errors from the attribute recognition model and the probabilistic circuit, \ie,
    {
    \small
    \begin{align*}
        \hat{\varepsilon}_{\theta, w}^{\rnpcname} \leqslant 
        &~\mathbb{E}_X\left[ \max_{\tilde{y}} \left| 1-\frac{\mathbb{P}_\theta(A_{1:K}\in \mathcal{N}(\tilde{y}, r) \mid X)}{\mathbb{P}^*(A_{1:K}\in \mathcal{N}(\tilde{y}, r) \mid X)} \right| \right] + \frac{2}{\gamma} d_{\mathrm{TV}}\left(\mathbb{P}_w(Y, A_{1:K}), \mathbb{P}^*(Y, A_{1:K})\right),
    \end{align*}
    }
    % \resizebox{0.5\textwidth}{!}{%
    % \begin{minipage}{0.5\textwidth}
    % \begin{align*}
    %     \hat{\varepsilon}_{\theta, w}^{\rnpcname} \leqslant 
    %     &\mathbb{E}_X\left[ \max_{\tilde{y}} \left| 1-\frac{\mathbb{P}_\theta(A_{1:K}\in \mathcal{N}(\tilde{y}, r) \mid X)}{\mathbb{P}^*(A_{1:K}\in \mathcal{N}(\tilde{y}, r) \mid X)} \right| \right] \\
    %     &+ \frac{2}{\gamma} d_{\mathrm{TV}}\left(\mathbb{P}_w(Y, A_{1:K}), \mathbb{P}^*(Y, A_{1:K})\right).
    % \end{align*}
    % \end{minipage}
    % }
    where $\mathbb{P}^*$ denotes respective ground-truth distributions.
    \label{thm:compositional_error}
\end{theorem}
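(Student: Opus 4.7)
I will bound the TV distance pointwise in $X$ by a triangle inequality through the intermediate distribution $\hat{\Phi}^*_w(Y\mid X)$ obtained by feeding the \emph{ground-truth} neighborhood probabilities $\mathbb{P}^*(A_{1:K}\in\mathcal{N}(\tilde{y},r)\mid X)$ into the class-wise integration together with the \emph{learned} circuit conditionals $\mathbb{P}_w(Y\mid A_{1:K}=a)$. Then
\[
d_{\mathrm{TV}}(\hat{\Phi}_{\theta,w},\hat{\Phi}^*)\;\leqslant\; d_{\mathrm{TV}}(\hat{\Phi}_{\theta,w},\hat{\Phi}^*_w) + d_{\mathrm{TV}}(\hat{\Phi}^*_w,\hat{\Phi}^*),
\]
where the first summand will absorb the attribute recognition error and the second will absorb the circuit error. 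Taking $\mathbb{E}_X$ at the end delivers the stated bound.

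\textbf{Attribute term.} Let $R_{\tilde{y}} := \mathbb{P}_\theta(\mathcal{N}(\tilde{y},r)\mid X)/\mathbb{P}^*(\mathcal{N}(\tilde{y},r)\mid X)$, $e_{\tilde{y}} := R_{\tilde{y}}-1$, and $\eta(X) := \max_{\tilde{y}}|e_{\tilde{y}}|$. Setting $c^w_{\tilde{y}}(y) := \sum_{a\in V_{\tilde{y}}}\mathbb{P}_w(y\mid a)$ and decomposing $\Phi_{\theta,w} = \Phi^*_w + E$, $Z_\theta = Z^*_w + E_Z$ with $E(y) := \sum_{\tilde{y}}e_{\tilde{y}}\mathbb{P}^*(\mathcal{N}(\tilde{y},r)\mid X)c^w_{\tilde{y}}(y)$ and $E_Z := \sum_{\tilde{y}}e_{\tilde{y}}\mathbb{P}^*(\mathcal{N}(\tilde{y},r)\mid X)|V_{\tilde{y}}|$, elementary algebra yields
\[
\hat{\Phi}_{\theta,w}(y\mid X) - \hat{\Phi}^*_w(y\mid X) \;=\; \hat{\Phi}^*_w(y\mid X)\,\frac{\xi_y - \xi_Z}{1+\xi_Z},
\]
where $\xi_y := E(y)/\Phi^*_w(y\mid X)$ and $\xi_Z := E_Z/Z^*_w(X)$ are weighted averages of $\{e_{\tilde{y}}\}$ and hence lie in $[-\eta,\eta]$. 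The key observation is that $\sum_y c^w_{\tilde{y}}(y) = |V_{\tilde{y}}|$ (every $\mathbb{P}_w(\cdot\mid a)$ is a probability), which forces $\sum_y E(y) = E_Z$; consequently $\xi_Z$ equals the $\hat{\Phi}^*_w$-weighted mean of the $\xi_y$'s. A mean-absolute-deviation bound then gives $\sum_y \hat{\Phi}^*_w(y\mid X)|\xi_y-\xi_Z| \leqslant 2(\eta-|\xi_Z|)$, and a one-variable check shows $(\eta-|\xi_Z|)/(1+\xi_Z)\leqslant \eta$ over $\xi_Z\in[-\eta,\eta]$. Combining these yields $d_{\mathrm{TV}}(\hat{\Phi}_{\theta,w},\hat{\Phi}^*_w)\leqslant \eta(X)$.

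\textbf{Circuit term and main obstacle.} Because $\hat{\Phi}^*_w$ and $\hat{\Phi}^*$ share the same normalization $Z^*_w$, one has $\hat{\Phi}^*_w(y\mid X)-\hat{\Phi}^*(y\mid X) = (\Phi^*_w(y\mid X)-\Phi^*(y\mid X))/Z^*_w$. For each $a\in V$, the definition of $V$ gives $\mathbb{P}^*(A=a)\geqslant\gamma$, so expanding $\mathbb{P}(y\mid a)=\mathbb{P}(y,a)/\mathbb{P}(a)$ and one triangle step produces $d_{\mathrm{TV}}(\mathbb{P}_w(Y\mid a),\mathbb{P}^*(Y\mid a))\leqslant \sum_y|\mathbb{P}_w(y,a)-\mathbb{P}^*(y,a)|/\gamma$. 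Summing the per-node bound over $a\in V$, using $\mathbb{P}^*(\mathcal{N}(\tilde{y},r)\mid X)\leqslant 1$ and extending the sum to all $a$, collapses to $d_{\mathrm{TV}}(\hat{\Phi}^*_w,\hat{\Phi}^*)\leqslant (2/\gamma)\,d_{\mathrm{TV}}(\mathbb{P}_w(Y,A_{1:K}),\mathbb{P}^*(Y,A_{1:K}))$. The main obstacle is tightening the attribute term from the naive $\eta/(1-\eta)$, which follows immediately from $|\Phi_{\theta,w}-\Phi^*_w|\leqslant \eta\Phi^*_w$ and $|Z_\theta-Z^*_w|\leqslant \eta Z^*_w$ with a ratio bound, down to the claimed $\eta$; this requires exploiting the mass-preservation identity $\sum_y E(y) = E_Z$ so that perturbations in the numerator and denominator cancel in the MAD sense centered at the mean $\xi_Z$.
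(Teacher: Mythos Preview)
Your plan is sound and the final bound is correct, but the route differs from the paper's and is noticeably more intricate on the attribute side.

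\textbf{How the paper proceeds.} The paper does not triangulate through your intermediate $\hat{\Phi}^*_w$. Instead it works directly with
\[
d_{\mathrm{TV}}(\hat{\Phi}_{\theta,w},\hat{\Phi}^*)=\frac{1}{2Z_\theta Z^*}\sum_y\bigl|Z^*\Phi_{\theta,w}(y)-Z_\theta\Phi^*(y)\bigr|,
\]
inserts $\pm Z_\theta\Phi_{\theta,w}(y)$ inside the absolute value, and obtains the two pieces $\tfrac{1}{2Z^*}|Z_\theta-Z^*|$ and $\tfrac{1}{2Z^*}\sum_y|\Phi_{\theta,w}(y)-\Phi^*(y)|$. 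A second add--subtract on $|\Phi_{\theta,w}-\Phi^*|$ separates the attribute part from the circuit part. Each of the two attribute-related pieces is a ratio of nonnegative $|V_{\tilde{y}}|$--weighted sums with matching structure, so the elementary inequality $\frac{\sum_{\tilde y}w_{\tilde y}b_{\tilde y}}{\sum_{\tilde y}w_{\tilde y}}\leqslant\max_{\tilde y}b_{\tilde y}$ immediately gives $\tfrac{1}{2}\eta(X)$ for each, totaling $\eta(X)$. No MAD argument, no mass--preservation identity, and no ``$\eta/(1-\eta)$ obstacle'' ever appears: the cancellation you engineer through $\sum_yE(y)=E_Z$ is simply absent in the paper because the paper's split never produces a $(1+\xi_Z)^{-1}$ factor.

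\textbf{What each approach buys.} Your decomposition cleanly isolates the attribute error ($\hat{\Phi}_{\theta,w}\!\to\!\hat{\Phi}^*_w$) from the circuit error ($\hat{\Phi}^*_w\!\to\!\hat{\Phi}^*$), and the shared normalizer $Z^*_w=Z^*$ makes the second step a pure numerator comparison; that is conceptually pleasant. The price is the extra work to beat $\eta/(1-\eta)$ down to $\eta$, plus a small caveat: your one--variable check $(\eta-|\xi_Z|)/(1+\xi_Z)\leqslant\eta$ only holds as written when $\eta\leqslant 1$ (for $\xi_Z<0$ it reduces to $\xi_Z(1-\eta)\leqslant 0$). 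This is harmless since $\eta>1$ makes the claimed bound vacuous for a TV distance, but you should say so. The paper's single add--subtract is shorter and sidesteps this entirely. For the circuit term, your sketch ``use $\mathbb{P}^*(\mathcal{N}(\tilde y,r)\mid X)\leqslant 1$ and extend the sum'' is not quite the right lever: dropping that factor leaves a stray $1/Z^*$ without a lower bound. The clean step, which the paper uses, is again the weighted--average inequality with weights $\mathbb{P}^*(\mathcal{N}(\tilde y,r)\mid X)\,|V_{\tilde y}|$, yielding $\max_{\tilde y}\frac{1}{\gamma|V_{\tilde y}|}\sum_{a\in V_{\tilde y}}\sum_y|\mathbb{P}_w(y,a)-\mathbb{P}^*(y,a)|\leqslant \frac{2}{\gamma}d_{\mathrm{TV}}(\mathbb{P}_w(Y,A_{1:K}),\mathbb{P}^*(Y,A_{1:K}))$.
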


% Theorem \ref{thm:compositional_error} demonstrates that \rnpcname~has a compositional estimation error bound, consisting of an error from the attribute recognition and an error from the probabilistic circuit.
% Consequently, improving the performance of either module with respect to these errors can enhance the benign task performance of \rnpcname.
Theorem \ref{thm:compositional_error} demonstrates that \rnpcname~exhibits a compositional estimation error; improving either module \wrt its own error can enhance the benign performance of \rnpcname.

On the other hand, under~\Cref{assump:sufficient} and~\ref{assump:complete}, it is easy to show that the optimal \npcname~corresponds to the ground-truth distribution, which does not hold for \rnpcname s.
Thus, we define the distance between the optimal \rnpcname~and the ground-truth distribution as \rnpcname s' trade-off in benign performance.

\begin{theorem}[Trade-off of \rnpcname s]
    The trade-off of \rnpcname s in benign performance, defined as the expected TV distance between the optimal \rnpcname~\(\hat{\Phi}^*(Y \mid X)\) and the ground-truth distribution \(\mathbb{P}^*(Y \mid X)\), is bounded as follows,    
    {
    \small
    \begin{align*}
        &\mathbb{E}_X \left[ d_{\mathrm{TV}}\left( \hat{\Phi}^*(Y\mid X), \mathbb{P}^*(Y\mid X) \right) \right] 
        \leqslant \mathbb{E}_X\left[ \max_{\tilde{y}}~d_{\mathrm{TV}}\left( \bar{\mathbb{P}}^*(Y\mid A_{1:K}\in V_{\tilde{y}}), \mathbb{P}^*(Y\mid X) \right) \right],
    \end{align*}
    }
    % \resizebox{0.5\textwidth}{!}{%
    % \begin{minipage}{0.5\textwidth}
    % \begin{align*}
    %     &\mathbb{E}_X \left[ d_{\mathrm{TV}}\left( \hat{\Phi}^*(Y\mid X), \mathbb{P}^*(Y\mid X) \right) \right] \\
    %     &\leqslant \mathbb{E}_X\left[ \max_{\tilde{y}} d_{\mathrm{TV}}\left( \bar{\mathbb{P}}^*(Y\mid A_{1:K}\in V_{\tilde{y}}), \mathbb{P}^*(Y\mid X) \right) \right],
    % \end{align*}
    % \end{minipage}
    % }
    where $\bar{\mathbb{P}}^*(Y\mid A_{1:K}\in V_{\tilde{y}}) := \frac{1}{|V_{\tilde{y}}|}\sum_{a_{1:K}\in V_{\tilde{y}}}\mathbb{P}^*(Y\mid A_{1:K}=a_{1:K})$ represents the average ground-truth conditional distribution of $Y$ given $A_{1:K}\in V_{\tilde{y}}$.
    \label{thm:tradeoff}
\end{theorem}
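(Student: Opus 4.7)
The plan is to express the optimal \rnpcname~distribution $\hat{\Phi}^*(Y\mid X)$ as a convex mixture, over classes $\tilde{y}\in\mathcal{Y}$, of the class-wise averages $\bar{\mathbb{P}}^*(Y\mid A_{1:K}\in V_{\tilde{y}})$, and then invoke the joint convexity of total variation distance to reduce the claim to a per-class comparison. Crucially, because $\bar{\mathbb{P}}^*$ is defined as a \emph{uniform} average over $V_{\tilde{y}}$, the mixture representation lines up cleanly with the unweighted inner sum appearing in $\Phi^*$.

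The first step is algebraic: using the identity $\sum_{\tilde{a}_{1:K}\in V_{\tilde{y}}}\mathbb{P}^*(Y\mid A_{1:K}=\tilde{a}_{1:K}) = |V_{\tilde{y}}|\cdot \bar{\mathbb{P}}^*(Y\mid A_{1:K}\in V_{\tilde{y}})$, I would rewrite $\Phi^*(Y\mid X)$ from Proposition~\ref{prop:optimal} as $\sum_{\tilde{y}} \mathbb{P}^*(A_{1:K}\in\mathcal{N}(\tilde{y},r)\mid X)\cdot |V_{\tilde{y}}|\cdot \bar{\mathbb{P}}^*(Y\mid A_{1:K}\in V_{\tilde{y}})$. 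Since the partition function $Z^*(X)$ is exactly $\sum_{\tilde{y}} \mathbb{P}^*(A_{1:K}\in\mathcal{N}(\tilde{y},r)\mid X)\cdot |V_{\tilde{y}}|$, dividing by it exhibits $\hat{\Phi}^*(Y\mid X)$ as a convex combination of the $\bar{\mathbb{P}}^*(Y\mid A_{1:K}\in V_{\tilde{y}})$ with weights $w_{\tilde{y}}(X)$ that are nonnegative and sum to one over $\tilde{y}$.

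The second step is to observe that, because these weights sum to one, the ground-truth distribution trivially satisfies $\mathbb{P}^*(Y\mid X) = \sum_{\tilde{y}} w_{\tilde{y}}(X)\cdot \mathbb{P}^*(Y\mid X)$. Applying joint convexity of $d_{\mathrm{TV}}$ to the two mixtures with identical weights, and then bounding the resulting convex combination by its maximum term over $\tilde{y}$, immediately gives the pointwise inequality. Taking expectation over $X$ then yields the stated bound.

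I expect the main obstacle to be largely cosmetic rather than mathematical: well-definedness of $w_{\tilde{y}}(X)$ on the event $\{Z^*(X)=0\}$, which I would handle by restricting to $\{Z^*(X)>0\}$ (on the complement $\hat{\Phi}^*$ itself is undefined, and under the construction of $V$ from high-probability nodes this event is negligible). Beyond that, the proof is genuinely a two-line application of convexity; the real conceptual content is in recognizing that the class-wise integration in \rnpcname~is intrinsically a mixture indexed by $\tilde{y}$, so that even the optimal such model outputs a weighted average of per-class averages $\bar{\mathbb{P}}^*$. The right-hand side of the bound is then naturally interpreted as the irreducible ``coarsening'' cost of replacing the fine-grained conditional $\mathbb{P}^*(Y\mid X)$ with an average over $V_{\tilde{y}}$---precisely the price \rnpcname~pays in benign accuracy for the robustness gain.
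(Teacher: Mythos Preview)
Your proposal is correct and follows essentially the same argument as the paper: rewrite $\hat{\Phi}^*(Y\mid X)$ as a convex combination $\sum_{\tilde{y}}\alpha_{\tilde{y}}\,\bar{\mathbb{P}}^*(Y\mid A_{1:K}\in V_{\tilde{y}})$ with weights $\alpha_{\tilde{y}}\propto \mathbb{P}^*(A_{1:K}\in\mathcal{N}(\tilde{y},r)\mid X)\cdot|V_{\tilde{y}}|$, apply the triangle inequality (equivalently, joint convexity of $d_{\mathrm{TV}}$), and bound the resulting weighted average by its maximum over $\tilde{y}$. The paper carries this out explicitly as part of the proof of the prediction-error theorem (the ``second term'' there), so your phrasing via joint convexity is just a clean repackaging of the same computation.
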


Theorem \ref{thm:tradeoff} characterizes an upper bound on the trade-off, which is determined by
the underlying data distributions and the partitioning of the attribute space, specifically $\{V_y\}$.
% and 3) a small constant $\delta$.
Changes in these factors affect the optimal \rnpcname's distance from the ground-truth distribution, which can also be interpreted as the price of robustness paid by \rnpcname s.

\section{Experiments} \label{sec:exp}

\subsection{Experimental settings} \label{sec:exp_setting}
\paragraph{Datasets.}
We create four image classification datasets.
% of different properties to examine the effectiveness of \rnpcname~under various scenarios.
\textbf{1) MNIST-Add3:} 
This dataset is constructed from the MNIST dataset~\citep{mnist}, following the standard processing procedures outlined in~\citet{deepproblog, rsbench}.
Each image consists of a concatenation of three digit images, with each digit serving as one attribute. The task on this dataset is to predict the sum of these three digits.
\textbf{2) MNIST-Add5:} 
This dataset is constructed similarly to MNIST-Add3, except that each image concatenates five digit images. 
% The task is to predict the sum of these five digits.
\textbf{3) CelebA-Syn:} This dataset is synthesized based on the CelebA dataset~\citep{celeba} using StarGAN~\citep{stargan}. Each synthesized image demonstrates eight facial attributes, such as hair color. 
% Following~\citet{cem}, 
Each unique combination of attribute values is assigned a group number. The task on this dataset is to identify each image's group number.
\textbf{4) GTSRB-Sub:} This dataset is a subset of the GTSRB dataset~\citep{gtsrb}, where each image represents a traffic sign and is annotated with four attributes like color and shape. The task is to classify images into their corresponding sign types.
To ensure certain minimum inter-class distances over the above datasets, we constrain the possible instantiations of attributes, \ie, only images aligned with these instantiations are generated or sampled.
For instance, in a three-dimensional attribute space, if the instantiations are limited to $\{(0, 0, 0), (2,2,2)\}$, the resulting minimum inter-class distance is 3.
Using this approach, the minimum inter-class distances for MNIST-Add3, MNIST-Add5, CelebA-Syn, and GTSRB-Sub are set to 3, 5, 4, and 3.
% More details on experimental settings are referred to Appendix \ref{app:exp_setting_dataset}.

\paragraph{Baseline models and architectures.}
We select three representative concept bottleneck models as baselines, including \npcname~\citep{npc}, vanilla CBM~\citep{cbm}, and DCR~\citep{dcr}.
We adopt independent two-layer multilayer perceptrons (MLPs) to learn different attributes in both \npcname~and \rnpcname.
To ensure a fair comparison, CBM\footnote{Unless otherwise specified, CBM refers to vanilla CBM.} employs a two-layer MLP as its recognition module. DCR uses a similar architecture with the second layer replaced with its embedding layer.
% defined in ~\citet{cem}.
% More implementation details are provided in Appendix \ref{app:exp_setting_implementation}.

\paragraph{Attack configurations.} To validate the robustness of \rnpcname, we adopt attacks that can significantly compromise the model’s predictions on the attacked attribute(s). Specifically, we use the $\infty$-norm-bounded PGD attack~\citep{pgd_attack}, the 2-norm-bounded PGD attack~\citep{pgd_attack}, and the 2-norm CW attack~\citep{cw_attack}, all configured in the untargeted setting. Empirical results demonstrate that under our threat model, these attacks are sufficient to substantially reduce attribute prediction accuracy—often down to 0\% under large norm bounds.
\textbf{Evaluation metrics.}~~
We adopt classification accuracy as the evaluation metric.
Specifically, when testing on benign (adversarial) inputs, the accuracy is referred to as \textit{benign (adversarial) accuracy}. 
% Specifically, when testing on benign inputs, the accuracy is referred to as \textit{benign accuracy}. When testing on inputs applied with adversarial perturbations, the accuracy is referred to as \textit{adversarial accuracy}.
% In benign environments, where the testing inputs are benign, this accuracy indicates the model's performance on downstream tasks.
% In adversarial environments, where the testing inputs are applied with adversarial perturbations, this accuracy reflects the model's robustness.
% More details on the settings are provided in Appendix \ref{app:exp_setting}.
Further details on the experimental settings can be found in Appendix~\ref{app:exp_setting}.

\subsection{Main results} \label{sec:exp_main}
\paragraph{Benign accuracy.}
Table \ref{tab:benign_acc} shows that both \rnpcname~and the baseline models perform exceptionally well across the four datasets, exhibiting benign accuracy approximating 100\%.
Notably, on these selected datasets, \rnpcname~is comparable with \npcname~and even attains slightly higher benign accuracy on the MNIST-Add3 and MNIST-Add5 datasets.
These empirical results indicate that \rnpcname's trade-off in benign accuracy could be negligible on these datasets.

\begin{table}[hb]
\centering
\caption{
\small
Benign accuracy (\%) of CBM, DCR, \npcname, and \rnpcname~on four image classification datasets. 
% Best results are in bold.
}
\label{tab:benign_acc}
\scalebox{0.85}{
\begin{tabular}{lcccc}
\toprule
\textbf{Dataset} & \textbf{CBM} & \textbf{DCR} & \textbf{NPC} & \textbf{RNPC} \\
\midrule
MNIST-Add3  & 99.02 & 98.54 & 99.32 & \textbf{99.37} \\
MNIST-Add5  & 99.37 & 99.21 & 99.40 & \textbf{99.51} \\
CelebA-Syn  & 99.83 & 99.45 & \textbf{99.95} & \textbf{99.95} \\
GTSRB-Sub   & 99.42 & 99.42 & \textbf{99.57} & 99.49 \\
\bottomrule
\end{tabular}
}
\end{table}

\paragraph{Adversarial accuracy.}
Figure \ref{fig:pgd_attack} illustrates the adversarial accuracy of \rnpcname~and the baseline models under the $\infty$-norm-bounded PGD attack with varying norm bounds.
In this setup, the attacker attacks a single attribute at a time, generating an adversarial perturbation to distort the prediction result for that attribute. The solid lines and the surrounding shaded regions represent the mean adversarial accuracy and the standard deviation, respectively, computed across all attacked attributes.
Additionally, Figure \ref{fig:attr_pred} in Appendix \ref{app:attr_pred} presents the accuracy of the attribute recognition model in predicting the attacked attribute, which drops to nearly 0\% under large norm bounds.
In Figure \ref{fig:pgd_attack}, we observe that across all datasets, the adversarial accuracy of all models decreases as the norm bound increases, demonstrating that stronger attacks cause greater harm to the models.

\begin{figure*}[tb]
    \centering
    \begin{minipage}{0.30\textwidth}
        \centering
        \includegraphics[width=\textwidth]{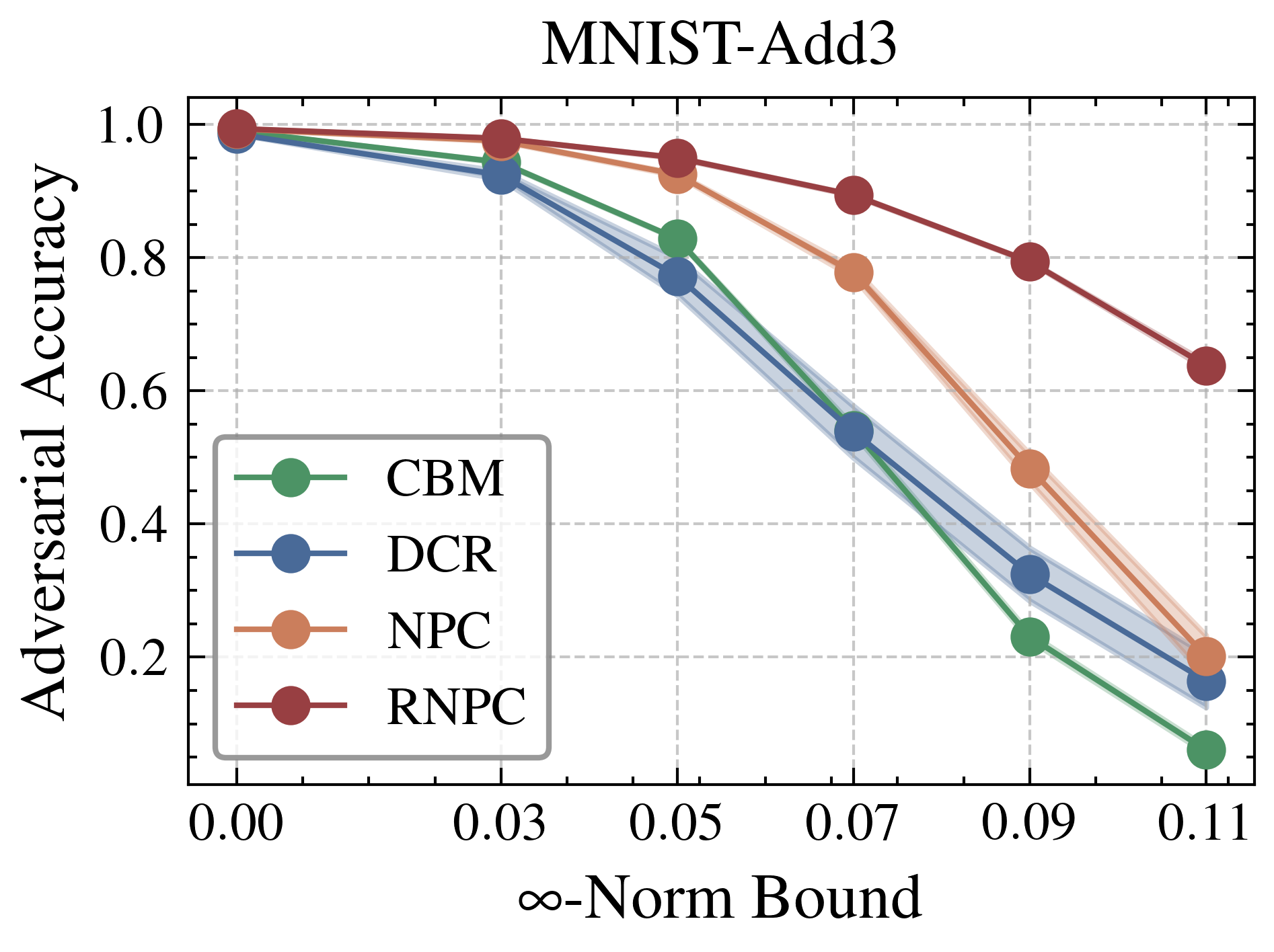}
    \end{minipage}
    \begin{minipage}{0.30\textwidth}
        \centering
        \includegraphics[width=\textwidth]{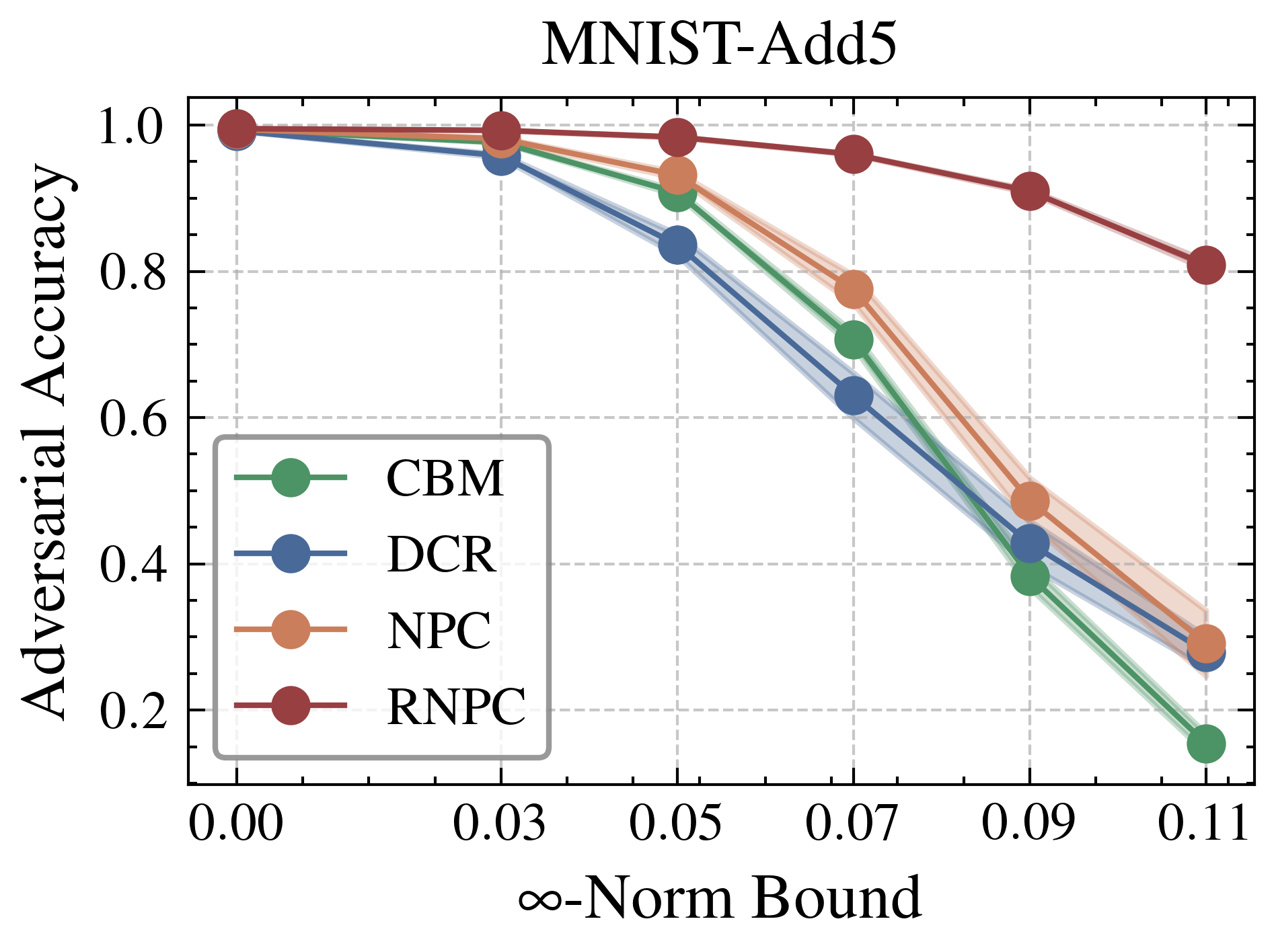}
    \end{minipage}
    \begin{minipage}{0.30\textwidth}
        \centering
        \includegraphics[width=\textwidth]{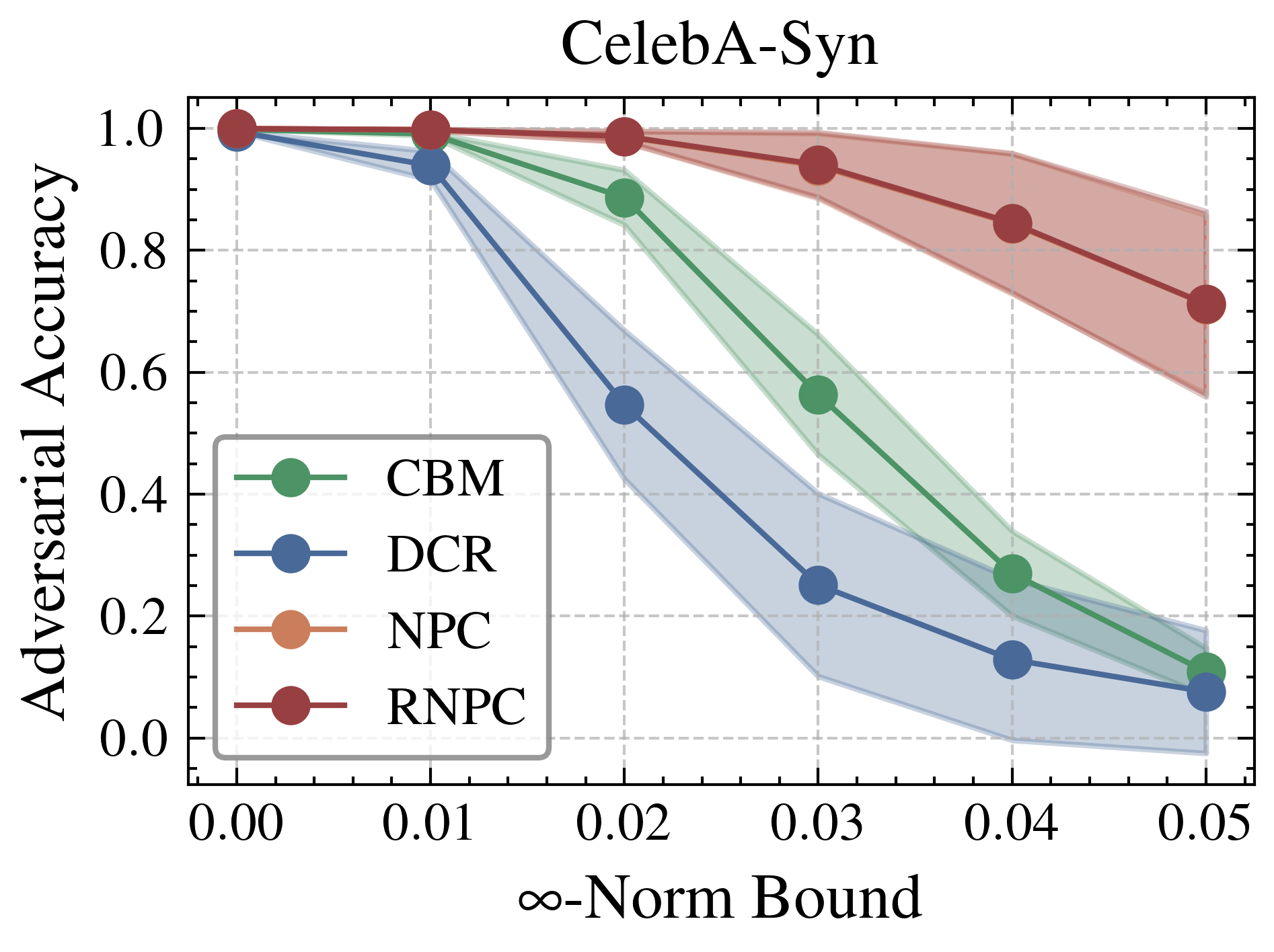}
    \end{minipage}
    \caption{
    \small Adversarial accuracy of CBM, DCR, \npcname, and \rnpcname~under the $\infty$-norm-bounded PGD attack with varying norm bounds on the MNIST-Add3, MNIST-Add5, and CelebA-Syn datasets. 
    The attacker attacks a single attribute at a time.
    % , generating an adversarial perturbation to distort the prediction result for that attribute. 
    The solid lines and the surrounding shaded regions represent the mean adversarial accuracy and the standard deviation, respectively, computed across all attacked attributes.}
    \label{fig:pgd_attack}
\end{figure*}

Importantly, \npcname~and \rnpcname~consistently exhibit higher robustness compared to CBM and DCR, as their adversarial accuracy remains higher under attacks with any $\infty$-norm bound.
This finding indicates that incorporating the probabilistic circuit into a model's architecture can strengthen its robustness, while the task predictors used in CBM and DCR might adversely impact model robustness.

Moreover, on the MNIST-Add3 and MNIST-Add5 datasets, \rnpcname~significantly outperforms \npcname, especially under attacks with larger norm bounds. For instance, on MNIST-Add5, when the $\infty$-norm bound reaches 0.11, \npcname's adversarial accuracy drops below 40\% whereas \rnpcname~maintains an adversarial accuracy above 80\%.
These results demonstrate that \rnpcname~provides superior robustness compared to \npcname~on these datasets, highlighting the effectiveness of the proposed class-wise integration approach.
On the CelebA-Syn dataset, \rnpcname~performs almost the same as \npcname, with both showing high robustness even under attacks with large norm bounds.
The performance against the 2-norm-bounded PGD and CW attacks is deferred to Appendix \ref{app:more_attack}.

\subsection{Ablation studies} \label{sec:exp_ablation}
% In this section, we conduct a series of ablation studies to investigate the impacts of various factors.

\paragraph{Impact of the number of attacked attributes.}
In Section \ref{sec:exp_main}, we analyze the models' performance under attacks targeting a single attribute. 
Here, we investigate the impact of the number of attacked attributes.
% , exploring how increasing this number affects the models' performance.
To this end, we vary the number of attacked attributes for the $\infty$-norm-bounded PGD attack with a norm bound of 0.11.
Specifically, on the MNIST-Add3 dataset, we attack the attributes ``D1'' (``D'' stands for ``Digit''), ``D1, D2'', and ``D1, D2, D3'', respectively.
On MNIST-Add5, we additionally attack the attributes ``D1, D2, D3, D4'' and ``D1, D2, D3, D4, D5''.
The adversarial accuracy under attacks with varying numbers of attacked attributes is shown in Figure \ref{fig:combined} (a-b).

% We observe that the adversarial accuracy of all models exhibits a downward trend as the number of attacked attributes increases.
% Although the inputs are perturbed within the same norm bound, the attack targeting more attributes can influence a larger number of predicted probabilities.
% These results demonstrate that, compared to heavily perturbing the predicted probabilities of a single attribute, perturbing the predicted probabilities of multiple attributes—even if not as heavily—can have a more significant negative impact.
% Additionally, we find that \rnpcname~consistently outperforms the baseline models under attacks across varying numbers of attacked attributes.
% % on both the MNIST-Add3 and MNIST-Add5 datasets.
% Notably, \rnpcname~surpasses other models by a margin, even when all attributes are attacked.
% These results underscore the high robustness of \rnpcname~against attacks, even with a larger number of attacked attributes.

We observe that the adversarial accuracy of all models exhibits a downward trend as the number of attacked attributes increases, despite the perturbations remaining within the same norm bound.
These results demonstrate that, compared to heavily perturbing the predicted probabilities of a single attribute, perturbing the predicted probabilities of multiple attributes—even if not as heavily—can have a more significant negative impact.
Additionally, we find that \rnpcname~consistently outperforms the baseline models by a margin under attacks across varying numbers of attacked attributes. 
These results underscore the high robustness of \rnpcname, even with a large number of attacked attributes.

Furthermore, we discover that,
when the number of attacked attributes does not exceed the radius of a dataset, \rnpcname~has a more distinct advantage over the baseline models.
% Specifically, on the MNIST-Add3 dataset (radius $= 1$), when one attribute is attacked, \rnpcname~achieves an adversarial accuracy 45\% higher than that of the best baseline model.
% On the MNIST-Add5 dataset (radius $= 2$), when two attributes are attacked, \rnpcname~achieves an adversarial accuracy 37\% higher than that of the best baseline model.
Specifically, \rnpcname~achieves up to 45\% higher adversarial accuracy than the best baseline model on MNIST-Add3 ($r=1$) when \textit{one} attribute is attacked, and 37\% higher on MNIST-Add5 ($r=2$) when \textit{two} attributes are attacked.
In contrast, when this number exceeds the radius, the advantage of \rnpcname~tends to decrease.
% In contrast, when the number of attacked attributes exceeds the radius and increases further, the advantage of \rnpcname~gradually diminishes.
% Notably, a large radius slows down this diminishing effect. For instance, when three attributes are attacked, \rnpcname~outperforms the best baseline model by 15\% on MNIST-Add3 while by 30\% on MNIST-Add5.

% \begin{figure}[t]
%     \centering
%     \begin{minipage}{0.30\textwidth}
%         \centering
%         \includegraphics[width=\textwidth]{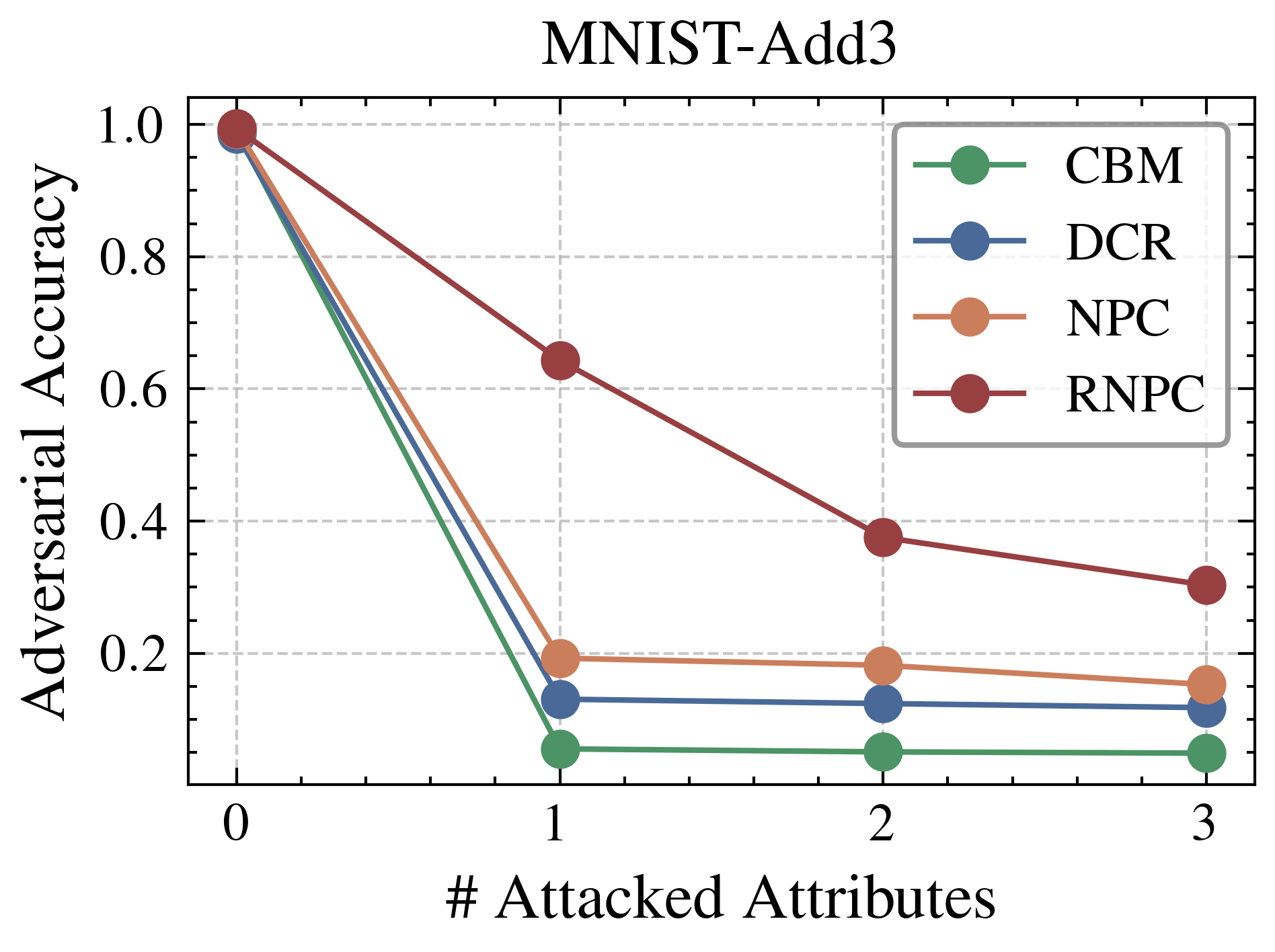}
%     \end{minipage}
%     \begin{minipage}{0.30\textwidth}
%         \centering
%         \includegraphics[width=\textwidth]{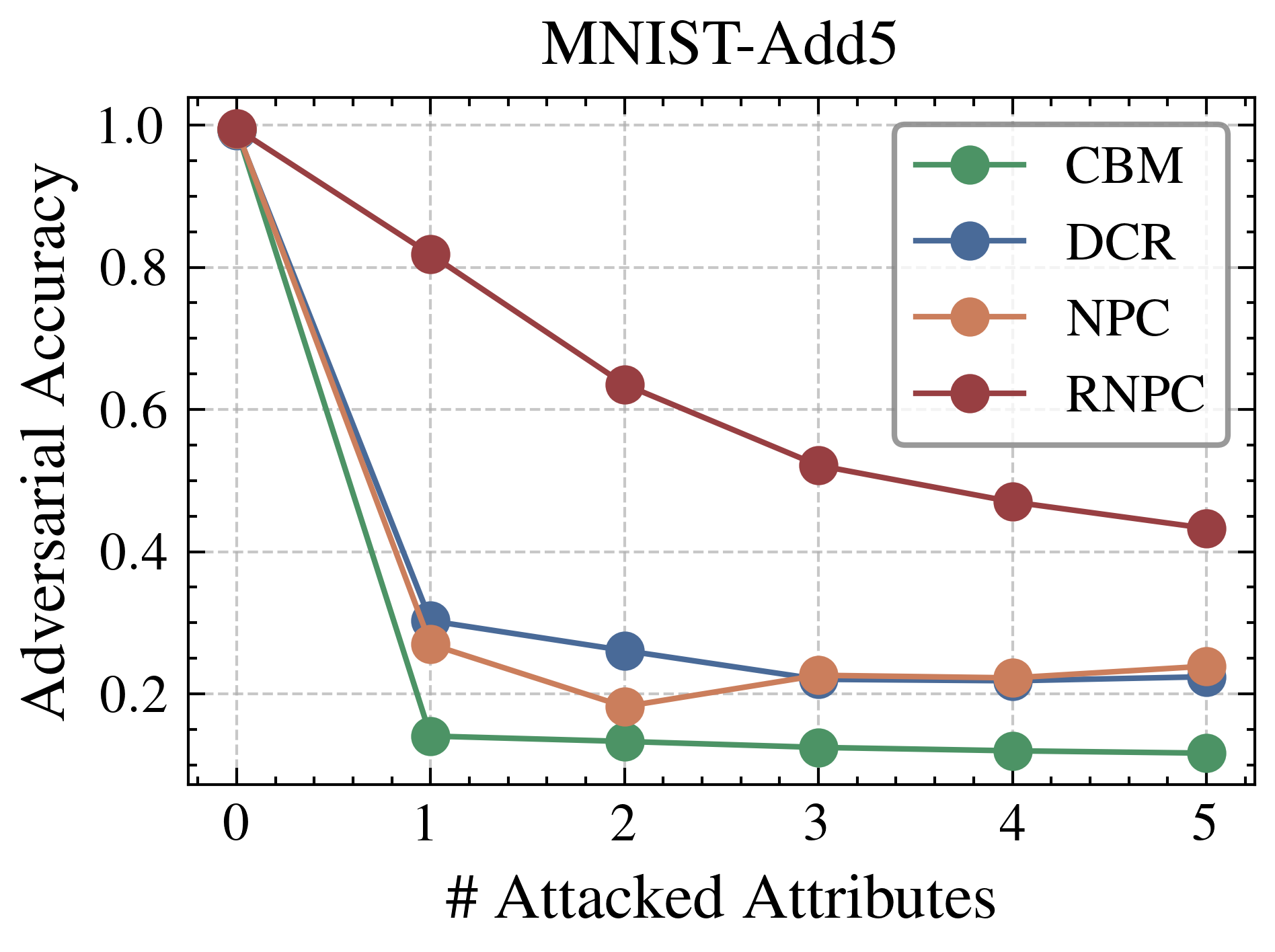}
%     \end{minipage}
%     \caption{
%     \small
%     Adversarial accuracy of of CBM, DCR, \npcname, \rnpcname~under the $\infty$-norm-bounded PGD attack (norm bound = 0.11) with varying numbers of attacked attributes on the MNIST-Add3 and MNIST-Add5 datasets. 
%     }
%     \label{fig:vary_target}
% \end{figure}

\begin{figure}[t]
    \centering
    \begin{minipage}{0.30\textwidth}
        \centering
        \includegraphics[width=\textwidth]{figures/main_results/mnist-add3/pgd_inf/vary_target.png}
    \end{minipage}
    \begin{minipage}{0.30\textwidth}
        \centering
        \includegraphics[width=\textwidth]{figures/main_results/mnist-add5/pgd_inf/vary_target.png}
    \end{minipage}
    \begin{minipage}{0.30\textwidth}
        \centering
        \includegraphics[width=\textwidth]{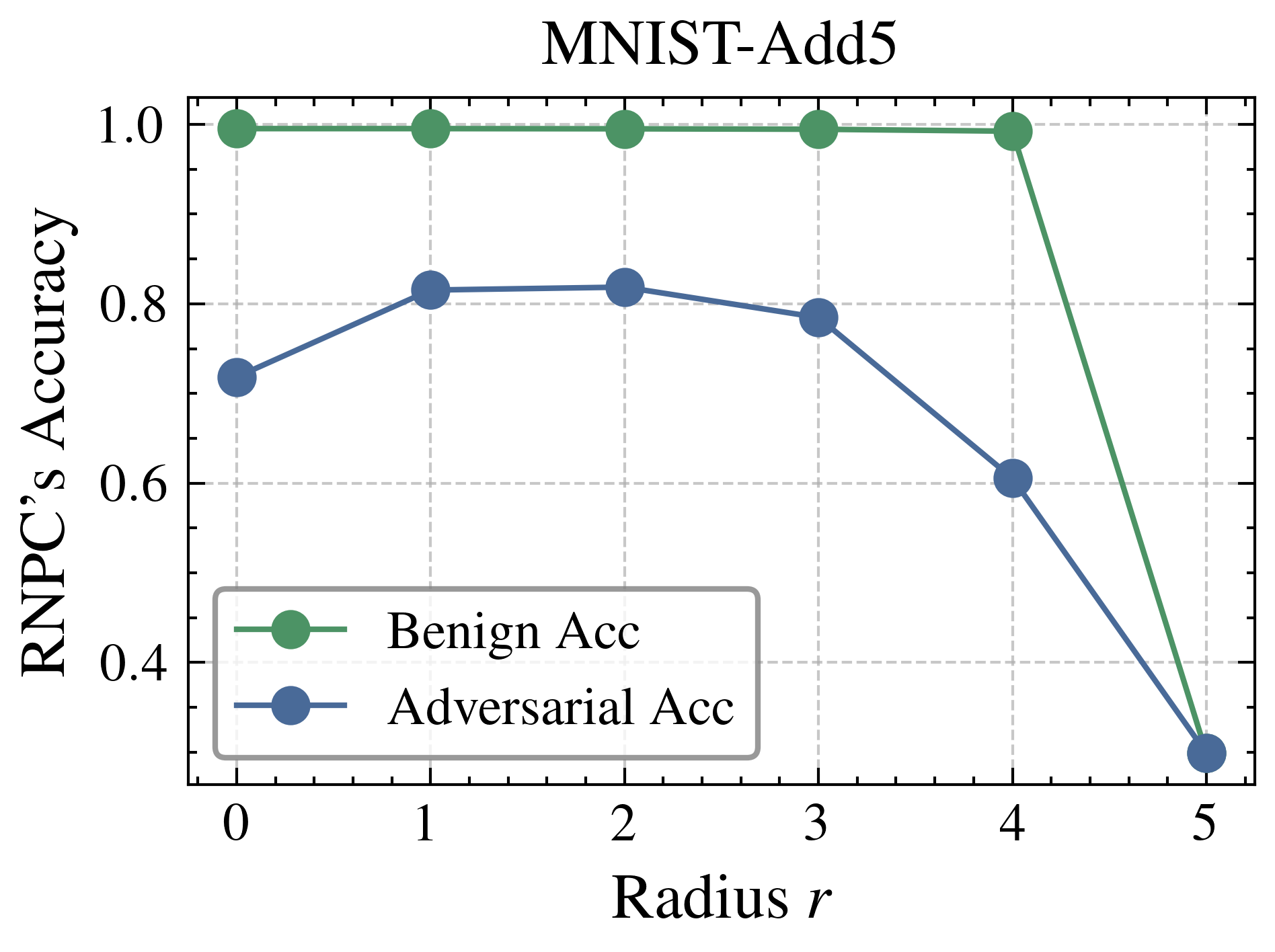}
    \end{minipage}
    \captionsetup{skip=1pt}
    \caption{
    \small
    \textbf{(a-b):}
    Adversarial accuracy under the $\infty$-norm-bounded PGD attack (norm bound = 0.11) with varying numbers of attacked attributes on MNIST-Add3 and MNIST-Add5. 
    \textbf{(c):}
    Performance of \rnpcname~with different values of $r$ on MNIST-Add5. This dataset has 5 attributes, and its attribute set has a radius of $r^*=2$.
    }
    \label{fig:combined}
\vspace{-0.1em}
\end{figure}

\paragraph{Impact of the radius.}
In Section~\ref{sec:notation}, we define $r$ as the radius of an attribute set, which is determined by the intrinsic structure of this set and fixed once the set is given. To avoid confusion, we denote this intrinsic radius of an attribute set as $r^*$. 
In Equation~(\ref{eq:rnpc_unml}), $r$ appears as a hyperparameter in the formulation of \rnpcname. Throughout the paper, we use $r = r^*$ by default. However, it can be adjusted, and in this section, we explore how varying $r$ affects the performance of \rnpcname.
We conduct this analysis on the MNIST-Add5 dataset, which has $K=5$ attributes and an intrinsic radius $r^* = 2$. We vary RNPC’s radius hyperparameter $r$ in the range $[0, 5]$ and evaluate both benign and adversarial accuracy under an $\infty$-bounded PGD attack with a norm bound of $0.11$.

We begin with benign accuracy. As $r$ increases from $0$ to $4$, we observe a decreasing logit gap between the top-1 and top-2 predicted classes. Nevertheless, \rnpcname~maintains near-perfect accuracy across this range (see Figure~\ref{fig:combined} (c)). However, when $r$ reaches 5, accuracy drops sharply to 29.9\%. This is expected because, at $r = 5$, the neighborhood $\mathcal{N}(\tilde{y}, r)$ spans the entire attribute space, \ie, $\mathbb{P}_\theta(A_{1:K} \in \mathcal{N}(\tilde{y},r) | X) = \mathbb{P}_\theta(A_{1:K} \in \Omega | X) = 1$. As a result, Equation~(\ref{eq:rnpc_unml}) no longer incorporates any meaningful information from the input $X$.

Next, we examine adversarial performance. When $r \leqslant r^*$, decreasing $r$ reduces adversarial accuracy. Conversely, increasing $r$ beyond $r^*$ also causes accuracy degradation. This indicates that the intrinsic radius $r^*$ is optimal in the sense that deviating from $r^*$—in either direction—hurts robustness. Nevertheless, \rnpcname~remains substantially more robust than other baseline models in these settings, as the baselines achieve less than 40\% adversarial accuracy (see Figure~\ref{fig:pgd_attack} (middle)). Notably, when $r = 5$, adversarial accuracy falls to 29.9\%, matching the benign accuracy and leading to a prediction perturbation of $\Delta_{\theta, w}^{\rnpcname} = 0$. This is consistent with our theoretical findings, as Lemma~\ref{thm:rnpc_perturbation} indicates that the upper bound on perturbation becomes zero in this case.
Overall, when $r \neq K$, \rnpcname~demonstrates strong resilience to changes in $r$ under benign settings; while its adversarial accuracy declines when $r$ deviates from $r^*$, \rnpcname~remains superior to other baseline models.

\paragraph{Impact of spurious correlations.}
% We use the results on the GTSRB-Sub dataset to demonstrate the impact of spurious correlations on the models' performance.
Figure \ref{fig:gtsrbsub} (a) illustrates the adversarial accuracy of CBM, DCR, \npcname, and \rnpcname~on the GTSRB-Sub dataset, under the $\infty$-norm-bounded PGD attacks with varying norm bounds.
% In particular, a single attribute is attacked at a time. The solid lines and the surrounding shaded regions in the figure represent the mean adversarial accuracy and the standard deviation, respectively, computed across all attacked attributes.
We observe that \rnpcname~and \npcname~exhibit similar performance, with both achieving higher adversarial accuracy compared to CBM and DCR under attacks with small norm bounds. However, as the norm bound increases, the advantage of \rnpcname~and \npcname~gradually diminishes. 
We hypothesize that, when the attribute recognition model is trained to recognize a specific attribute, the model might capture an unintended relationship between this attribute and the co-occurring features of other attributes.
For instance, the shape ``diamond'' always co-occurs with the color ``white'' on the GTSRB-Sub dataset.
Consequently, the model might rely on the features of the ``diamond'' shape to determine whether the color of an input image is ``white''.
Such unintended relationships, known as \textit{spurious correlations}, are a common phenomenon in neural networks~\citep{shortcut, ye2024spurious}.
% spurious correlations can cause what?
Due to the potential spurious correlations, attacking one attribute leads to attacking multiple attributes.
We name such a phenomenon as \textit{attack propagation}.
% In the example above, when the attacker generates an adversarial perturbation for shape, the color predictions can also be affected, as these predictions rely on the perturbed shape features.

% validate hypothesis, analysis under the validated hypothesis
The results in Figure \ref{fig:gtsrbsub} (b) validate our hypothesis.
Specifically, attacking any single attribute on the GTSRB-Sub dataset leads to a significant drop in the accuracy of recognizing other attributes.
That is, although the attack targets only one attribute, the attack propagation induces more attacked attributes. As discussed in the study on the impact of the number of attacked attributes, when this number exceeds the radius of a dataset ($r=1$ for GTSRB-Sub), the performance of \rnpcname~could be compromised.
% , which accounts for the inferior performance observed for \rnpcname.
% In contrast, attacking any single attribute on MNIST-Add5 only slightly affects the accuracy of recognizing other attributes, which facilitates \rnpcname's high performance on this dataset. 
% Discussion on alleviating attack propagation is provided in Appendix \ref{app:discussion}.
% More ablation studies are in Appendix \ref{app:ablation}.
Potential solutions for mitigating the attack propagation are discussed in Section~\ref{sec:discussion}.
More ablation studies are deferred to Appendix~\ref{app:ablation}.

% \begin{figure}[t]
%     \centering
%     \begin{minipage}{0.235\textwidth}
%         \centering
%         \includegraphics[width=\textwidth]{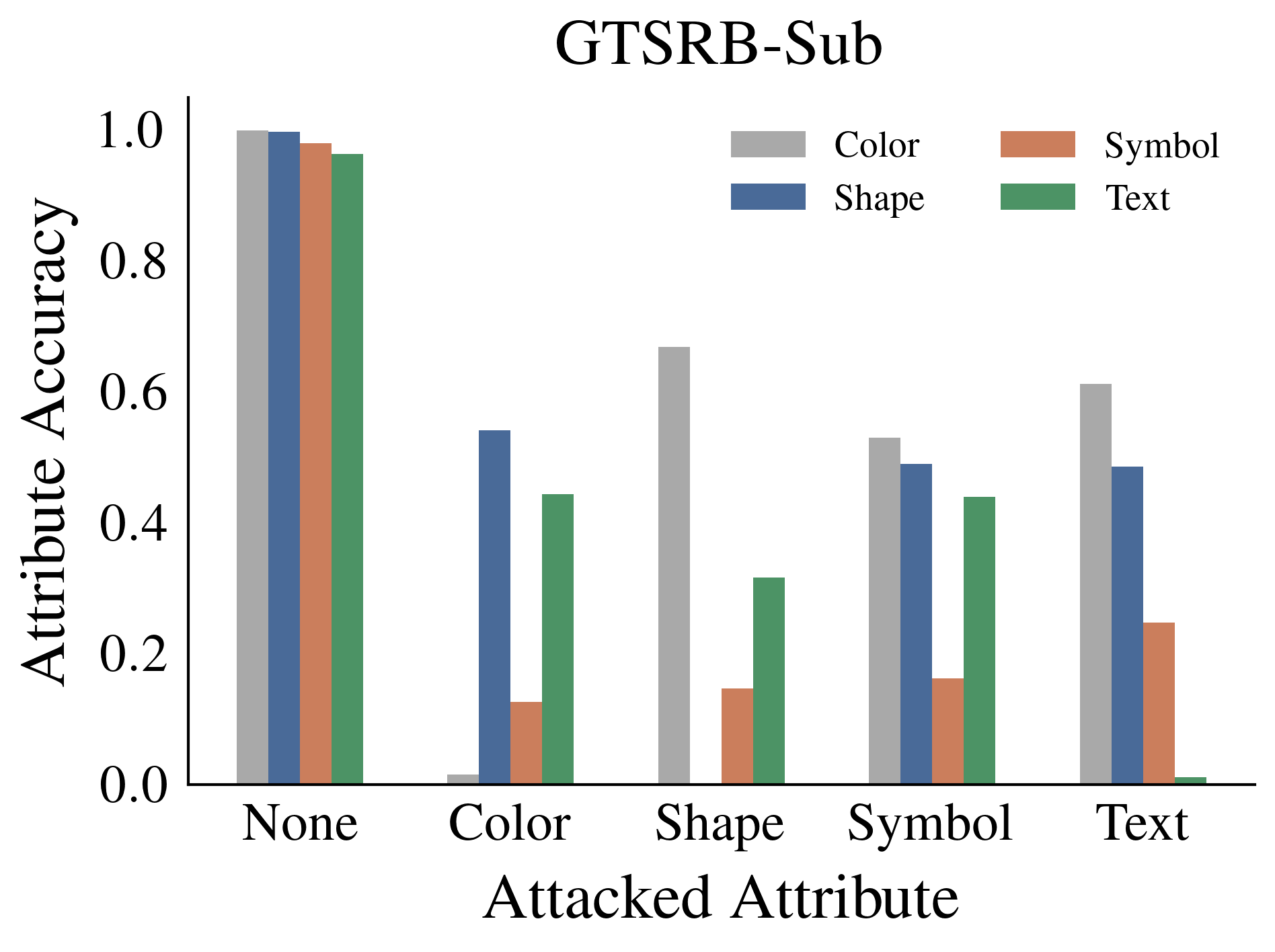}
%     \end{minipage}
%     \begin{minipage}{0.235\textwidth}
%         \centering
%         \includegraphics[width=\textwidth]{figures/main_results/mnist-add5/pgd_inf/attribute_acc.png}
%     \end{minipage}
%     \caption{Accuracy of \rnpcname's attribute recognition model in recognizing various attributes under the $\infty$-norm-bounded PGD attack (norm bound = 0.11) targeting a single attribute.
%     }
%     \label{fig:attribute_acc}
%     \vspace{-3mm}
% \end{figure}

\begin{figure}[t]
    \centering
    \begin{minipage}{0.30\textwidth}
        \centering
        \includegraphics[width=\textwidth]{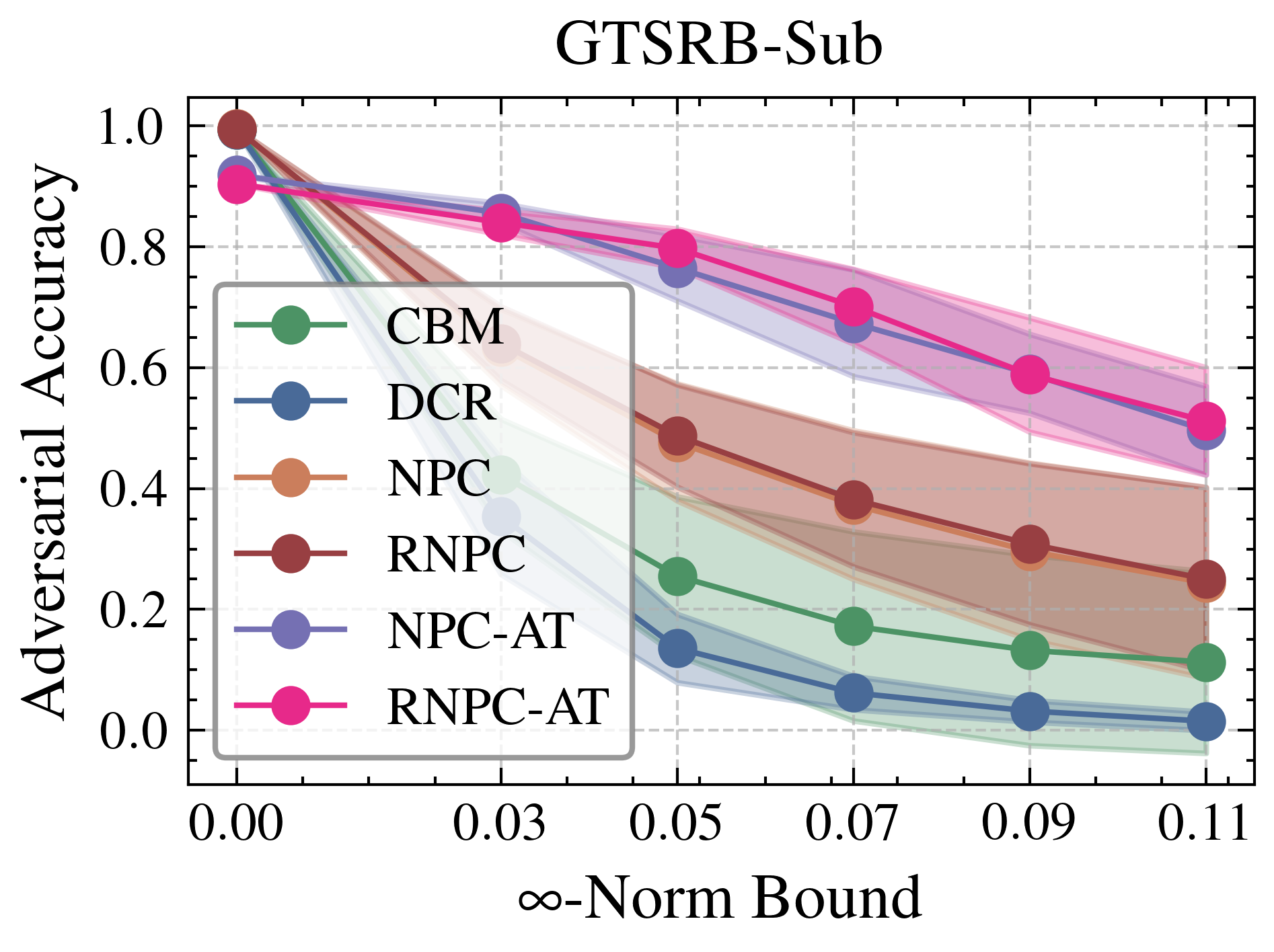}
    \end{minipage}
    \begin{minipage}{0.30\textwidth}
        \centering
        \includegraphics[width=\textwidth]{figures/main_results/gtsrb-sub/pgd_inf/attribute_acc.png}
    \end{minipage}
    \begin{minipage}{0.30\textwidth}
        \centering
        \includegraphics[width=\textwidth]{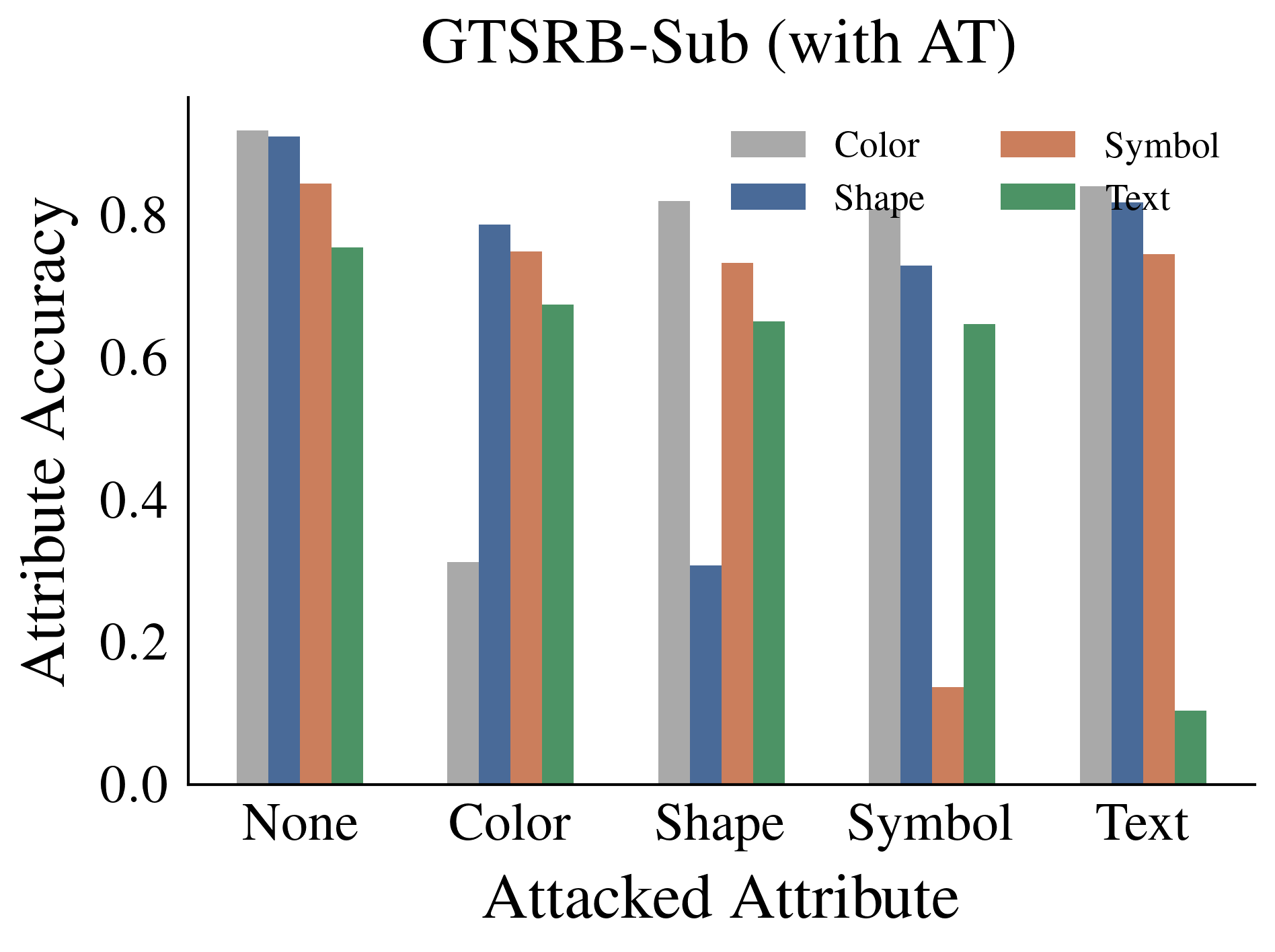}
    \end{minipage}
    \captionsetup{skip=1pt}
    \caption{
    \small
    \textbf{(a):}
    Adversarial accuracy under the $\infty$-norm-bounded PGD attack with varying norm bounds on GTSRB-Sub. The attacker attacks a single attribute at a time.
    Solid lines and shaded regions represent the mean adversarial accuracy and the standard deviation, computed across all attacked attributes.
    \textbf{(b):}
    Accuracy of NPC and RNPC's attribute recognition model in recognizing various attributes under the $\infty$-norm-bounded PGD attack (norm bound = 0.11) targeting a different single attribute.
    \textbf{(c):}
    Same setting as (b), but using \textit{adversarially trained} attribute recognition models for NPC and RNPC.
    }
    \captionsetup{skip=2pt}
    \label{fig:gtsrbsub}
\vspace{-0.1em}
\end{figure}
\section{Related work} \label{sec:related}
% \textbf{Concept bottleneck models.}~
% CBMs were first introduced in~\citet{cbm}. These models consist of a concept recognition model and a task predictor, providing class predictions that can be interpreted in terms of the predicted concepts.
% % To improve the performance of CBMs on downstream tasks, \citet{cem, concept_emb1, concept_emb2, probcbm, hybrid_cbm} propose generating high-dimensional embeddings, rather than probabilities, for various concepts.
% % However, the interpretability of these models is significantly compromised because the dimensions of the embeddings lack clear semantic meaning.
% To further enhance the interpretability of CBMs, \citet{dcr, soft_tree_cbm, npc} introduce novel interpretable architectures for the task predictor, which allow class predictions to be interpreted through learned or human-predefined logical rules over the predicted concepts.
\paragraph{Robustness of concept bottleneck models.}
% Exploring the robustness of CBMs against adversarial attacks is crucial for understanding their reliability in practical applications.
% Specifically, \citet{robustness_cbm1} investigate the question: \textit{Compared to DNNs, how robust are CBMs against adversarial attacks designed to mislead class predictions?}
% Their findings reveal that CBMs inherently exhibit higher robustness than their standard DNN counterparts. 
% In contrast, \citet{robustness_cbm2} focus on attacks targeting concept predictions and demonstrate that when the predicted concept probabilities are perturbed by attacks, CBMs often produce incorrect predictions. 
% They further investigate \textit{how to ensure that the predicted concept probabilities remain unchanged under adversarial attacks, thereby improving the robustness of CBMs}, and propose a training algorithm for CBMs based on adversarial training.
% Differently, our work admits the changes in the predicted concept probabilities and explores the question:
% \textit{Can we make CBMs robust even when the predicted concept probabilities are perturbed by adversarial attacks?}
% We theoretically demonstrate that the use of the probabilistic circuit on top of concepts is free for robustness and the class-wise integration approach ensures enhanced robustness.
Exploring the robustness of CBMs against adversarial attacks is crucial for understanding their reliability in practical applications.
Specifically, \citet{robustness_cbm2} demonstrate that when the predicted concept probabilities are perturbed by attacks, CBMs often produce incorrect predictions. 
They further investigate \textit{how to ensure that the predicted concept probabilities remain unchanged under adversarial attacks, thereby improving the robustness of CBMs}, and propose a training algorithm for CBMs based on adversarial training.
Differently, our work admits the changes in the predicted concept probabilities and explores the question:
\textit{Can we make CBMs robust even when the predicted concept probabilities are perturbed by adversarial attacks?}
% We theoretically demonstrate that the use of the probabilistic circuit on top of concepts is free for robustness and the class-wise integration approach ensures enhanced robustness.
% Additional related work is provided in Appendix \ref{app:related}.
A comprehensive literature review on CBMs, adversarial attacks, the robustness of CBMs, and the robustness of DNNs using probabilistic circuits is provided in Appendix \ref{app:related}.
\section{Discussion} \label{sec:discussion}
In this section and Appendix \ref{app:discussion}, we discuss the limitations of \rnpcname~and explore potential solutions. 
% More discussions are presented in Appendix \ref{app:discussion}.

\paragraph{Mitigating the attack propagation effect.}
Section \ref{sec:exp_ablation} shows that \rnpcname~suffers from the attack propagation effect, which arises from spurious correlations learned by the attribute recognition model. Conventional solutions for mitigating these correlations include data augmentation~\citep{data_augmentation_survey, data_augmentation}, counterfactual data generation~\citep{counterfactual_generation1, counterfactual_generation2}, etc.
Here, we propose an adversarial-training-based approach that leverages adversarial examples to disentangle spurious correlations between attributes.
Suppose the model relies on the feature of \textit{white} to identify the shape as \textit{diamond}. An adversarial example that perturbs the color attribute can shift its feature from \textit{white} toward some another color, thereby weakening its association with \textit{diamond}.

To achieve this, we generate adversarial examples that target a randomly selected attribute during training and train the model on both the adversarial and benign samples. 
As shown in Figure \ref{fig:gtsrbsub} (c), compared to models trained without adversarial training (see Figure \ref{fig:gtsrbsub} (b)), adversarially trained models exhibit significantly smaller drops in accuracy of other attributes when a specific attribute is attacked.
This demonstrates a reduction in the attack propagation effect.
Consequently, Figure \ref{fig:gtsrbsub} (a) shows that the robustness of both \npcname~and \rnpcname~is enhanced, outperforming other baseline models across different attack norm bounds.

\section{Conclusions}
% In this paper, we propose the first robust neural probabilistic circuit against adversarial attacks.
% Both theoretical and empirical results are provided to demonstrate its robustness.
% Detailed conclusions and discussions on limitations and potential solutions are presented in Appendix \ref{app:conclusion} and \ref{app:discussion}.
In this paper, we delve into the adversarial robustness of Neural Probabilistic Circuits (\npcname s), showing that incorporating a probabilistic circuit into a model's architecture does not affect the robustness of the overall model.
Moreover, we improve the robustness of \npcname s by introducing a class-wise integration inference approach that produces robust predictions.
% robustly combines the outputs from the attribute recognition model and the probabilistic circuit to produce the final predictions.
Both theoretical and empirical results across various datasets and attacks demonstrate that the resulting model, named \rnpcname, achieves higher robustness.
Due to the space limit, a more detailed conclusion is presented in Appendix \ref{app:conclusion}.

\section*{Acknowledgment}
This work is supported by NSF IIS grant No.\ 2416897, NSF III grant No.\ 2504555 and an NSF CAREER Award No.\ 2442290. HZ would like to thank the support of a Google Research Scholar Award and Nvidia Academic Grant Award. The views and conclusions expressed in this paper are solely those of the authors and do not necessarily reflect the official policies or positions of the supporting companies and government agencies.

\bibliographystyle{unsrtnat}
\bibliography{neurips_2025}

%%%%%%%%%%%%%%%%%%%%%%%%%%%%%%%%%%%%%%%%%%%%%%%%%%%%%%%%%%%%
\newpage
\appendix
\startcontents
\printcontents{}{1}{\section*{Appendix contents}}

\section{More related work} \label{app:related}
\paragraph{Concept bottleneck models.}
Concept bottleneck models (CBMs) were first introduced in~\citet{cbm}. These models are constructed by combining a concept recognition model with a task predictor. The concept recognition model, typically based on neural networks, takes an image as input and outputs probabilities for various high-level concepts, such as ``red\_color'' and ``white\_color''. The task predictor, often a linear model, determines the final class based on these predicted concept probabilities. Thanks to the simplicity of the linear predictor, the class predictions can be interpreted in terms of the predicted concept probabilities.
% Additionally, CBMs allow human interventions on the predicted concept probabilities; thus, human experts may improve the class predictions via correcting these probabilities.

Follow-up works primarily focus on improving two aspects: the model's performance on downstream tasks and its interpretability.
\textbf{1) Improving task performance:}
Rather than using a layer of concept probabilities as the bottleneck, \citet{cem, concept_emb1, concept_emb2, probcbm, hybrid_cbm} propose the use of concept embeddings. Specifically, the concept recognition model generates high-dimensional embeddings for various concepts, and the task predictor determines the final class based on these embeddings. Since an embedding typically encodes more information than a single probability, these models often achieve better performance on downstream tasks compared to vanilla CBMs. 
However, the interpretability of these models is significantly compromised because the dimensions of the embeddings lack clear semantic meaning, and class predictions cannot be interpreted using the semantics of these embeddings.
\textbf{2) Enhancing model interpretability:} 
To further enhance the interpretability of CBMs, several works introduce novel interpretable architectures for the task predictor. For instance, \citet{dcr} propose a Deep Concept Reasoner (DCR) that allows class predictions to be interpreted through learned logical rules over the predicted concepts. Similarly, \citet{soft_tree_cbm} employ a soft decision tree, where class predictions are generated by following specific branches within the tree. More recently, \citet{npc} explore the use of probabilistic circuits as task predictors, introducing a new model called Neural Probabilistic Circuits (\npcname s).
Specifically, \npcname~treats each group of concepts (\eg, ``red\_color'', ``white\_color'') as one attribute (\eg, ``color'') and consists of two modules: an attribute recognition model and a probabilistic circuit. The neural-network-based attribute recognition model takes an image as input and outputs probability vectors for various human-understandable attributes. The probabilistic circuit learns the joint distribution over the class variable and the attribute variables, while also supporting tractable marginal and conditional inference. 
Combining the outputs from the attribute recognition model and the probabilistic circuit, \npcname~produces class predictions that can be interpreted using the predicted probabilities of various attributes and the conditional dependencies between attributes and classes.
Furthermore, \citet{npc} demonstrate that \npcname~exhibits a compositional estimation error, which is upper bounded by a linear combination of errors from its individual modules. Thanks to the integration of the probabilistic circuit, \npcname~achieves performance competitive with end-to-end DNNs while offering enhanced interpretability.

\paragraph{Adversarial attacks.}
Adversarial attacks~\citep{adversarial_survey} refer to the process of deliberately crafting small, often imperceptible, perturbations to input images with the aim of misleading neural networks into producing incorrect predictions. 
To ensure the imperceptibility of the crafted perturbations, adversarial attacks are typically norm-bounded, meaning the magnitude of the perturbation is constrained under a specified norm, such as \(L_1\), \(L_2\), or \(L_\infty\).
Classical adversarial attacks include FGSM~\citep{fgsd_attack}, BIM~\citep{bim_attack}, PGD~\citep{pgd_attack}, CW~\citep{cw_attack}, JSMA~\citep{jsma_attack}, and DeepFool~\citep{deepfool_attack}.
The success of adversarial attacks underscores the vulnerabilities of neural networks, raising critical concerns about their robustness and reliability in practical applications.

According to the attacker's knowledge of the model, adversarial attacks can be categorized into \textit{white-box} attacks and \textit{black-box} attacks. In white-box attacks, the attacker has full access to the model's architecture and parameters, whereas black-box attacks only have access to the model's outputs, relying on query-based or transfer-based strategies to generate adversarial perturbations.
According to the attacker's goal, adversarial attacks can be categorized into \textit{targeted} attacks and \textit{untargeted} attacks. In targeted attacks, the attacker aims to mislead the model into predicting a specific, incorrect label, while untargeted attacks focus on causing the model to produce any incorrect output. 
In this paper, we focus on white-box, untargeted adversarial attacks. In particular, we select the $\infty$-norm-bounded PGD attack~\citep{pgd_attack}, the 2-norm-bounded PGD attack~\citep{pgd_attack}, and the 2-norm-bounded CW attack~\citep{cw_attack}.

\paragraph{Robustness of concept bottleneck models.}
By incorporating high-level, human-understandable concepts as an intermediate layer, Concept Bottleneck Models (CBMs) provide interpretable predictions that can be explained through the predicted concepts, thereby enhancing their reliability in practical applications.
However, as the architectures of CBMs typically rely on neural networks, they can be vulnerable to adversarial attacks~\citep{fgsd_attack}.
Exploring such vulnerabilities is crucial for understanding the potential threats underlying CBMs.
In particular, \citet{robustness_cbm1, robustness_cbm2}, as well as our work, investigate the robustness of CBMs against adversarial attacks.
Despite this shared focus, these studies address distinctly different problems.

\citet{robustness_cbm1} investigate the question:
\textit{Compared to DNNs, how robust are CBMs against adversarial attacks designed to mislead class predictions?}
Their findings reveal that CBMs inherently exhibit higher robustness than their standard DNN counterparts. 
In contrast, \citet{robustness_cbm2} and our work focus on adversarial attacks that target concept predictions rather than class predictions.

\citet{robustness_cbm2} demonstrate that when the predicted concept probabilities are perturbed by adversarial attacks, CBMs often produce incorrect predictions. 
Given this vulnerability, they investigate \textit{how to ensure that the predicted concept probabilities remain unchanged under adversarial attacks, thereby improving the robustness of CBMs}.
To achieve this, they propose a training algorithm for CBMs based on adversarial training.

In contrast, our work admits the changes in the predicted concept probabilities and explores a different question:
\textit{Can we make a CBM, \npcname~in particular, robust even when the predicted concept probabilities are perturbed by adversarial attacks?}
We demonstrate that by employing a class-wise integration approach, the final predictions of \npcname~are provably more robust.
We also theoretically show that, unlike the linear model, the probabilistic circuit on top of the recognition module is free for robustness.

\paragraph{Improving robustness of end-to-end DNNs using probabilistic circuits.}
A line of research~\citep{kemlp, yang_reason, care, colep} explores leveraging probabilistic models—such as Markov logic networks~\citep{mln} and probabilistic circuits~\citep{probabilistic_circuit}—to enhance the adversarial robustness of end-to-end DNNs. 
These approaches typically rely on a high-performance but vulnerable DNN to predict class labels for inputs that may contain adversarial perturbations.
A probabilistic-model-based reasoning module is then used to correct potentially erroneous predictions made by the DNN.
This predict-then-correct paradigm contrasts with our approach, which aims to build a robust and interpretable model from scratch, without relying on a high-performance DNN as the prime predictor.

This fundamental difference in objective also leads to differences in the problem setting. Following the framework of concept bottleneck models~\citep{cbm, cem}, we assume access to a set of interpretable concepts/attributes that are sufficient to distinguish images from different classes. In contrast, the above research treats concept-based knowledge as auxiliary information used solely for correcting DNN predictions, and may consider only a limited set of attributes (\eg, shape alone).

Among the research, \citet{colep} also employ probabilistic circuits. However, there are key differences in how probabilistic circuits are utilized. Specifically, our approach fully exploits the expressive power of probabilistic circuits: we learn smooth and decomposable circuits that represent the joint distribution over attributes and classes. This structure enables efficient inference—joint, marginal, and conditional distributions can all be computed with at most two forward passes through the circuit, highlighting the advantage of tractable probabilistic reasoning.

In contrast, \citet{colep} use probabilistic circuits less efficiently. Rather than modeling the joint distribution explicitly through the circuit's structure and edge weights, they define a factor function that computes the factor of each instantiation of attributes and class labels, which is essentially the exponential of the corresponding joint probability. These factors are treated as leaf nodes in the circuit. 
When a particular instantiation is provided as input, a product node connecting the corresponding factor leaf is activated, causing the circuit to output the associated joint probability. As a result, their circuit behaves more like an arithmetic circuit composed of sum and product nodes that performs arithmetic using factors, rather than a typical probabilistic circuit with embedded probabilistic semantics and tractable inference capabilities.

\section{More discussions} \label{app:discussion}
% In this paper, we demonstrate the high robustness achieved by \rnpcname~on the MNIST-Add3, MNIST-Add5, and the synthesized CelebA datasets.
% Nevertheless, on a real-world dataset such as GTSRB-Sub, \rnpcname's performance is less robust under attacks with a large norm bound.
% In this section, we discuss the limitations of \rnpcname~in handling real-world datasets and offer potential solutions to address these limitations.
In this section, we discuss the limitations of \rnpcname~and explore potential solutions and promising future directions.

\paragraph{Scaling to concept-annotation-free datasets.} Consistent with standard concept bottleneck models~\citep{cbm, dcr, cem}, our work assumes access to concept annotations within the dataset. However, many image classification datasets such as CIFAR-10~\citep{cifar10} and ImageNet~\citep{imagenet} lack such annotations, potentially limiting the applicability of our approach. Inspired by recent progress in label-free concept bottleneck models~\citep{posthoc_cbm, labelfree_cbm}, we can extend \rnpcname~to annotation-free settings by using multimodal models~\citep{clip} to transfer concepts from other datasets or from natural language descriptions of concepts.

\paragraph{Scaling to more attributes.}
The computational complexity of \rnpcname~is $O\left(\sum_{k=1}^K |f_k| + |S|\cdot|V|\right)$, where $|V| = \sum_{y\in\mathcal{Y}} |V_y| \in \left[|\mathcal{Y}|, \prod_{k=1}^K |\mathcal{A}_k| \right]$. 
This means the complexity can be as low as $|\mathcal{Y}|$ when each class only has one high-probability attribute assignment, but can grow exponentially with $K$ in the worst case.
Scalability remains a fundamental challenge for neuro-symbolic models, particularly those grounded in graphical model structures. Recent efforts have begun exploring approximation-based approaches to address this issue.
That said, as demonstrated in Proposition \ref{prop:complexity}, \rnpcname~achieves a substantial reduction in computational complexity compared to \npcname.

\paragraph{Augmenting the radius of a dataset.}
The inference procedure of \rnpcname~relies on neighborhoods of various classes defined by a specific radius. 
Experimental results in Section \ref{sec:exp_ablation} demonstrate that \rnpcname's performance under attacks is highly correlated with this radius. 
Specifically, when the number of attacked attributes does not exceed the radius, \rnpcname~typically achieves high adversarial accuracy. Moreover, a larger radius generally leads to higher robustness against attacks.
The radius is determined by the intrinsic properties of the dataset. For real-world datasets, the radius can be very small. For example, the original GTSRB dataset~\citep{gtsrb} has a minimum inter-class distance of 1, resulting in a radius of 0. Such a small radius can limit \rnpcname's robustness against attacks. Conversely, augmenting this radius can enhance \rnpcname's performance.

A naive approach to augmenting the radius is to repeat certain attributes in the attribute annotations.
For instance, the set $\{(0,0), (0,1)\}$ has a minimum inter-class distance of 1. 
If the second attribute is repeated, resulting in the set $\{(0,0,0),(0,1,1)\}$, the minimum inter-class distance increases by 1. 
More generally, if an attribute is repeated $m$ times, the minimum inter-class distance $d_{\min}$ can increase by up to $m$.
% and therefore the radius can increase by up to  $\lfloor\frac{d_{\min}-1}{2}\rfloor$.
Another solution is to introduce new attributes that provide additional semantics for the downstream recognition tasks. These new attributes can potentially increase the radius. However, introducing new attributes requires additional annotation efforts, which may be challenging for large-scale datasets.

\paragraph{Reducing the complete information assumption.} Assumption~\ref{assump:complete} assumes that each input image contains complete information about all attributes, suggesting that the attributes are conditionally mutually independent given the input. This assumption, commonly referred to as the conditional independence assumption, is widely adopted in neuro-symbolic learning frameworks~\citep{deepproblog, jingyi_semantic, DeepStochLog, NeSI, Scallop}. From a computational perspective, this assumption enables a factorization of the joint distribution over attributes into independent marginal distributions, \ie, $\mathbb{P}_\theta (A_{1:K} \mid X) = \prod_{k=1}^K \mathbb{P}_{\theta_k} (A_k \mid X)$, which significantly reduces parameter complexity. Although this assumption is relatively mild, we consider the possibility of relaxing it in future work. When the complete information assumption does not hold, one could instead model the full joint distribution $\mathbb{P}_\theta (A_{1:K} \mid X)$ using a single expressive model~\citep{spl, pseudo_semantic, NeuPSL, NeurASP}. While this approach introduces higher parameter complexity, it allows the model to capture interdependencies among attributes and accurately represent more complex joint patterns.

\section{Detailed conclusions} \label{app:conclusion}
In this paper, we provide an understanding of the adversarial robustness of the Neural Probabilistic Circuit (\npcname).
Moreover, we improve the robustness of \npcname~by introducing a class-wise integration inference approach that produces robust predictions, and name the resulting model as \rnpcname.

Theorem \ref{thm:npc_perturbation} and Theorem \ref{thm:rnpc_perturbation} demonstrate the adversarial robustness of \npcname~and \rnpcname, respectively, showing that their robustness is upper bounded by the robustness of their attribute recognition models.
These results also suggest that using a probabilistic circuit as the task predictor does not impact the robustness of the overall model.
Theorem \ref{thm:comparison} compares these two bounds, showing that \rnpcname~enhances the robustness of \npcname~under certain conditions.
Furthermore, we analyze \rnpcname's benign performance on downstream tasks.
Theorem \ref{thm:compositional_error} demonstrates the compositional estimation error of \rnpcname, showing that its estimation error is upper bounded by a linear combination of errors from its individual modules.
Finally, Theorem \ref{thm:tradeoff} presents the distance between the optimal \rnpcname~and the ground-truth distribution, revealing a trade-off between adversarial robustness and benign performance.

Empirical evaluations on diverse image classification datasets, under three types of adversarial attacks with varying norm bounds, demonstrate that \rnpcname~achieves superior robustness compared with three baseline models while maintaining high accuracy on benign inputs.
Additionally, ablation studies on the impact of the number of attacked attributes show that \rnpcname~exhibits high robustness across varying numbers of attacked attributes.
Besides, we observe that the spurious correlations captured by the attribute recognition model can induce attack propagation, which may compromise the robustness of \rnpcname.

Overall, with the proposed class-wise integration inference approach, \rnpcname~achieves high robustness, capable of making correct class predictions even if the predicted attribute distributions are perturbed by adversarial attacks.

\section{Broader impact} \label{app:broad_impact}
This paper aims to understand and improve the robustness of neural probabilistic circuits against adversarial perturbations. Specifically, we propose a class-wise integration inference method and demonstrate that the resulting model is more robust under certain assumptions.
Improving adversarial robustness enhances the trustworthiness of machine learning models, making them more reliable for deployment in real-world scenarios, especially in high-stakes applications.
We expect that our method will inspire further research into enhancing the robustness of existing interpretable architectures or building an interpretable and robust model from scratch.
\section{Computational complexity comparison}
\label{app:complexity}

This section presents a comparison of the computational complexity between \npcname~and \rnpcname~in both the training and inference phases.

\paragraph{Training.} Since \npcname~and \rnpcname~share the same trained attribute recognition model and the same learned probabilistic circuit, their training complexities are identical. The practical training time across various datasets is reported in Table~\ref{tab:time}, with all experiments conducted using eight NVIDIA RTX A6000 GPUs.

\paragraph{Inference.}
Let $|f_k|$ denote the size of the $k$-th neural network in the attribute recognition model, and let $|S|$ denote the size of the probabilistic circuit (\ie, the number of edges).
During inference, given an input sample, a forward pass through all $K$ neural networks in the attribute recognition model incurs a computational cost of $O(\sum_{k=1}^K |f_k|)$.
According to the node-wise integration defined in Equation (\ref{eq:npc}), \npcname~requires performing conditional inference over the probabilistic circuit $\prod_{k=1}^K |\mathcal{A}_k|$ times, resulting in the overall inference complexity:
{
\small
\begin{equation*}
    O\left( \sum_{k=1}^K |f_k| + |S|\cdot \prod_{k=1}^K |\mathcal{A}_k| \right).
\end{equation*}
}

In contrast, according to the class-wise integration defined in Equation (\ref{eq:rnpc_unml}), \rnpcname~only requires performing conditional inference over the probabilistic circuit $|V|$ times, where $V:=\bigcup_{\tilde{y}\in\mathcal{Y}} V_{\tilde{y}}$, resulting in the overall inference complexity:
{
\small
\begin{equation*}
    O\left( \sum_{k=1}^K |f_k| + |S|\cdot |V| \right).
\end{equation*}
}

By construction, $V \subseteq \Omega:=\{a_{1:K}\}$, and therefore, $|V| \leqslant \prod_{k=1}^K |\mathcal{A}_k|$. It follows that $O(\sum_{k=1}^K |f_k| + |S|\cdot |V|) \leqslant O(\sum_{k=1}^K |f_k| + |S|\cdot \prod_{k=1}^K |\mathcal{A}_k|)$. Thus, \rnpcname~is more efficient than \npcname~in terms of the inference complexity.

As expected, the practical inference time of \rnpcname, shown in Table \ref{tab:time}, is consistently faster than \npcname. All inference was performed on a single NVIDIA RTX A6000 GPU.

In summary, \rnpcname~shares the same training complexity as \npcname~but offers better efficiency during inference.

\begin{table}[tb]
\centering
\caption{\small Comparison of training and inference time for \npcname~and \rnpcname~across the MNIST-Add3, MNIST-Add5, and CelebA-Syn datasets.}\label{tab:time}
\begin{tabular}{llccc}\toprule
Dataset &Phase &NPC &RNPC \\\cmidrule{1-4}
\multirow{2}{*}{MNIST-Add3} &Training &\multicolumn{2}{c}{78m} \\\cmidrule{2-4}
&Inference &6.78s &4.78s \\\cmidrule{1-4}
\multirow{2}{*}{MNIST-Add5} &Training &\multicolumn{2}{c}{136m} \\\cmidrule{2-4}
&Inference &16.52s &8.57s \\\cmidrule{1-4}
\multirow{2}{*}{CelebA-Syn} &Training &\multicolumn{2}{c}{381m} \\\cmidrule{2-4}
&Inference &14.19s &8.58s \\
\bottomrule
\end{tabular}
\end{table}
\section{More details on experimental settings} \label{app:exp_setting}
\subsection{Dataset construction} \label{app:exp_setting_dataset}
In Section \ref{sec:exp_setting}, we describe the main properties (\eg, attributes, downstream tasks) of various datasets.
Here, we provide additional details about the construction process of each dataset.

\paragraph{MNIST-Add3 dataset.}
In this dataset, each image concatenates three digit images from the MNIST dataset~\citep{mnist} under the CC BY-SA 3.0 license. These digit images are applied with different transformations to introduce the domain shifts between them, which include rotations and color modifications.
The three digits serve as the attributes for this dataset.
The downstream task is to predict the sum of these attributes.
To construct the dataset, we first identify an attribute set within the three-dimensional attribute space, ensuring a minimum inter-class distance of 3. Correspondingly, the radius of this attribute set is 1.
The randomly selected attribute set $V$ is
\{[6 3 7],
 [9 6 8],
 [0 2 4],
 [3 0 5],
 [5 5 1],
 [7 4 3],
 [2 7 6],
 [4 1 2],
 [1 9 0],
 [8 8 9]\}.
Each attribute node in $V$ results in a unique attribute sum, leading to a total of 10 classes for the downstream task.
Next, we generate images for each class by concatenating images of specific digits corresponding to the attributes.
For example, to generate an image belonging to class 16, we concatenate a digit-6 image, a digit-3 image, and a digit-7 image in sequence.
To introduce variability, we incorporate 1\% labeling noise into the dataset. Specifically, this involves randomizing either the class labels or the attribute labels.
In total, we generate 63,130 images and split them into training, validation, and testing sets by a ratio of 8:1:1.
Example testing images are illustrated in Figure \ref{fig:mnist-add3-add5} (a).

% \begin{figure}[htbp]
%     \centering
%     \begin{minipage}{0.45\textwidth}
%         \centering
%         \includegraphics[width=0.7\textwidth]{figures/examples/mnist-add3_s16.png}
%         \caption{A testing image from MNIST-Add3, corresponding to the attribute node [6, 3, 7]. 
%         % The contained digit images are transformed as follows: no transformation, rotation by 90 degrees counterclockwise, and color inversion.
%         }
%         \label{fig:mnist-add3_example}
%     \end{minipage}
%     \begin{minipage}{0.45\textwidth}
%         \centering
%         \includegraphics[width=\textwidth]{figures/examples/mnist-add5_s19.png}
%         \caption{A testing image from MNIST-Add5, corresponding to the attribute node [5, 0, 9, 2, 3]. 
%         % The contained digit images are transformed as follows: no transformation, rotation by 90 degrees counterclockwise, color inversion, blurring, and HSV conversion.
%         }
%         \label{fig:mnist-add5_example}
%     \end{minipage}
% \end{figure}

\paragraph{MNIST-Add5 dataset.}
In this dataset, each image concatenates five digit images from the MNIST dataset~\citep{mnist} under the CC BY-SA 3.0 license. These digit images are applied with different transformations to introduce the domain shifts between them, which include rotations, color modifications, and blurring.
The five digits serve as the attributes for this dataset.
The downstream task is to predict the sum of these attributes.
To construct the dataset, we first identify an attribute set within the five-dimensional attribute space, ensuring a minimum inter-class distance of 5. Correspondingly, the radius of this attribute set is 2.
The randomly selected attribute set $V$ is
\{[6 3 7 4 6],
 [0 9 5 3 1],
 [5 0 9 2 3],
 [2 7 8 5 4],
 [8 2 4 9 8],
 [7 5 0 6 0],
 [3 4 6 1 2],
 [4 1 3 0 5],
 [1 6 2 8 9],
 [9 8 1 7 7]\}.
These attribute nodes result in 7 different attribute sums, leading to a total of 7 classes.
Note that a class (\eg, class 26) may correspond to multiple attribute nodes in $V$.
Next, we generate images for each class by concatenating images of specific digits corresponding to the attributes.
For example, to generate an image belonging to class 32, we concatenate a digit-9 image, a digit-8 image, a digit-1 image, and two digit-7 images in sequence.
To introduce variability, we incorporate 1\% labeling noise into the dataset. Specifically, this involves randomizing either the class labels or the attribute labels.
In total, we generate 63,130 images and split them into training, validation, and testing sets by a ratio of 8:1:1.
Example testing images are illustrated in Figure \ref{fig:mnist-add3-add5} (b).

\begin{figure}[htbp]
    \centering
    \includegraphics[width=0.65\linewidth]{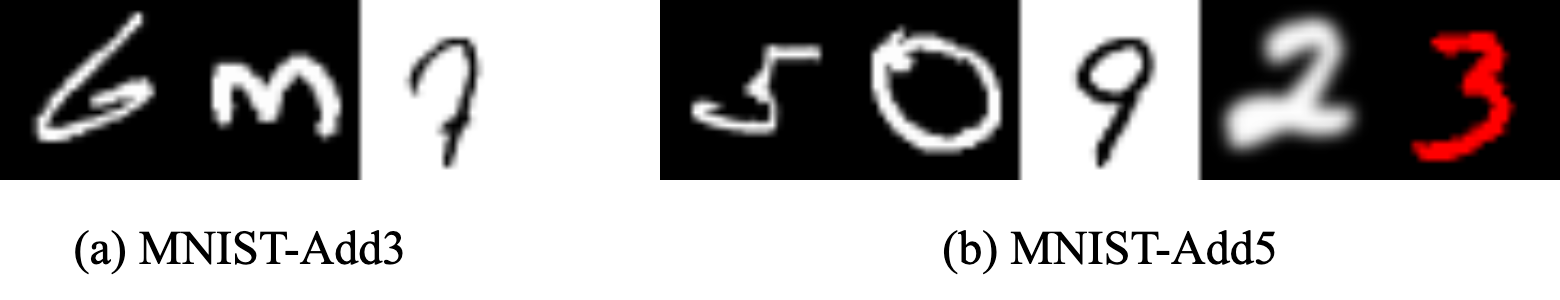}
    \caption{
    \small
    Examples of testing images from the MNIST-Add3 and MNIST-Add5 datasets.
    \textbf{(a)} A testing image from the MNIST-Add3 dataset, corresponding to the attribute node [6, 3, 7].
    \textbf{(b)} A testing image from the MNIST-Add5 dataset, corresponding to the attribute node [5, 0, 9, 2, 3].
    }
    \label{fig:mnist-add3-add5}
\end{figure}

\paragraph{CelebA-Syn dataset.}
This dataset is constructed based on the CelebA dataset~\citep{celeba} under its non-commercial research license, which includes annotations for forty facial attributes. In particular, we select eight of them that are visually easy to distinguish, which are Color\_Hair, Double\_Chin, Eyeglasses, Heavy\_Makeup, Mustache, Pale\_Skin, Smiling, Young.
Following a similar construction process as the above datasets, we first identify an attribute set within the eight-dimensional attribute space, ensuring a minimum inter-class distance of 4. Correspondingly, the radius of this attribute set is 1.
The randomly selected attribute set $V$ is
\{[2 1 0 0 0 1 0 0],
 [1 0 0 0 0 1 1 1],
 [1 1 1 1 0 1 1 0],
 [1 1 0 0 1 0 0 1],
 [2 0 1 1 1 1 0 1],
 [0 1 1 0 1 1 1 1],
 [0 0 0 1 0 0 0 1],
 [3 0 1 0 1 0 1 0],
 [0 0 0 1 1 1 1 0],
 [0 1 1 1 1 0 0 0]\}\footnote{For the attribute Color\_Hair, values 0-3 correspond to Black\_Hair, Blond\_Hair, Brown\_Hair, and Gray\_Hair, respectively. For other attributes, \eg, Double\_Chin, 0 indicates the absence of the feature, while 1 indicates its presence.}.
Following~\citet{cem}, each attribute node corresponds to a unique class, resulting in 10 classes in total.
Next, we train a StarGAN~\citep{stargan} model and use it to synthesize images for each class.
StarGAN is a powerful tool for transferring the attributes of input images to designated values. 
Specifically, taking a face image and a set of attribute values as input, the trained StarGAN generates a face image with attributes transferred to specified values.
In total, we generate 50,000 training images, 10,000 validation images, and 9,990 testing images.
% Example testing images are illustrated in Figure xxx.

\paragraph{GTSRB-Sub dataset.}
This dataset is derived from the GTSRB dataset~\citep{gtsrb} in accordance with its research-purpose terms, which contains images of German traffic signs along with class labels indicating the sign types.
\citet{npc} annotate each image in GTSRB with four attributes: color, shape, symbol, and text. 
Each class corresponds to a distinct instantiation of attributes, \ie, a unique attribute node.
The attribute set corresponding to GTSRB has a minimum inter-class distance of 1, resulting in a radius of 0.
To increase the minimum inter-class distance, we construct a subset of GTSRB by selecting and grouping specific classes. 
The final classes in this subset are:
\textbf{1) Direction:} This class consists of the original classes—`regulatory--maximum-speed-limit-20', `regulatory--maximum-speed-limit-30', `regulatory--maximum-speed-limit-50', `regulatory--maximum-speed-limit-60', `regulatory--maximum-speed-limit-70', `regulatory--maximum-speed-limit-80', `regulatory--maximum-speed-limit-100', `regulatory--maximum-speed-limit-120'.
\textbf{2) Priority:} This class consists of the original class—`regulatory--priority-road'.
\textbf{3) Speed:} This class consists of the original classes—`warning--other-danger', `warning--double-curve-first-left', `warning--uneven-road', `warning--slippery-road-surface', `warning--road-narrows-right', `warning--roadworks', `warning--traffic-signals', `warning--pedestrians-crossing', `warning--children', `warning--bicycles-crossing', `warning--ice-or-snow', `warning--wild-animals'.
\textbf{4) Warning:} This class consists of the original classes—`regulatory--turn-right-ahead', `regulatory--turn-left-ahead', `regulatory--go-straight', `regulatory--go-straight-or-turn-right', `regulatory--go-straight-or-turn-left', `regulatory--keep-right', `regulatory--keep-left', `regulatory--roundabout'.
Note that each class may correspond to multiple attribute nodes.
This subset achieves a minimum inter-class distance of 3, with a corresponding radius of 1.
In total, GTSRB-Sub contains 22,079 training images, 2,759 validation images, and 2,761 testing images.
% Example testing images are illustrated in Figure xxx.

\subsection{Implementation details} \label{app:exp_setting_implementation}
\paragraph{\npcname~\citep{npc}.}
To strive for simplicity in experiments, we implement the attribute recognition model of \npcname~using a set of independent two-layer MLPs. 
Specifically, each MLP is used to identify one particular attribute.
The attribute recognition model is trained using the sum of cross-entropy losses over all attributes.
The training process is conducted with a batch size of 256 for 100 epochs, using the SGD optimizer.
The probabilistic circuit of \npcname~is learned with the LearnSPN algorithm~\citep{learnspn}, with its parameters optimized via the CCCP algorithm~\citep{cccp}. 
Given the trained attribute recognition model and the learned probabilistic circuit, \npcname~integrates their outputs through node-wise integration to produce predictions for downstream tasks.

\paragraph{\rnpcname.}
\rnpcname~uses the same trained attribute recognition model and learned probabilistic circuit as \npcname.
But different from \npcname, \rnpcname~adopts class-wise integration to produce predictions for downstream tasks.

\paragraph{CBM~\citep{cbm}.}
To ensure a fair comparison among the baselines, we implement the recognition module of CBM using a two-layer MLP.
Following~\citet{cbm}, the predictor module of CBM is implemented with a linear layer.
CBM is trained using the weighted sum of the cross-entropy loss over concepts and the cross-entropy loss over classes.
The training process is conducted with a batch size of 256 for 100 epochs, using the SGD optimizer.

\paragraph{DCR~\citep{dcr}.}
The recognition module of DCR is implemented with two layers: a linear layer followed by ReLU activation and an embedding layer defined in~\citet{cem}.
The predictor module is implemented using the deep concept reasoner proposed in~\citet{dcr}.
DCR is trained using the weighted sum of the cross-entropy loss over concepts and the cross-entropy loss over classes.
The training process is conducted with a batch size of 256 for 100 epochs, using the SGD optimizer.

\subsection{Attack configurations} \label{app:exp_setting_attack}
The adversarial attacks used in this paper are implemented using the adversarial-attacks-pytorch library\footnote{The library is available at \href{https://github.com/Harry24k/adversarial-attacks-pytorch}{https://github.com/Harry24k/adversarial-attacks-pytorch}.}~\citep{kim2020torchattacks}.

\paragraph{$\infty$-norm-bounded PGD attack~\citep{pgd_attack}.}
For the MNIST-Add3, MNIST-Add5, and GTSRB-Sub datasets, the $\infty$-norm bounds are set to 0.03, 0.05, 0.07, 0.09, and 0.11.
For the CelebA-Syn dataset, where the model demonstrates greater vulnerability to adversarial attacks, the $\infty$-norm bounds are set to 0.01, 0.02, 0.03, 0.04, and 0.05.
Across all datasets, we use a step size of 2/255 and perform 50 steps for the attack.

\paragraph{2-norm-bounded PGD attack~\citep{pgd_attack}.}
% For the MNIST-Add3, MNIST-Add5, and GTSRB-Sub datasets, the 2-norm bounds are set to 3, 5, 7, 9, and 11.
For the MNIST-Add3 and MNIST-Add5 datasets, the 2-norm bounds are set to 3, 5, 7, 9, and 11.
For the CelebA-Syn dataset, where the model demonstrates greater vulnerability to adversarial attacks, the 2-norm bounds are set to 1, 2, 3, 4, and 5.
Across all datasets, we use a step size of 0.1*norm bound and perform 50 steps for the attack.

\paragraph{2-norm-bounded CW attack~\citep{cw_attack}.}
We employ the 2-norm-bounded CW attack with binary search. Across all datasets, we use a step size of 0.01 and perform 10 steps for the attack. The strength of the attack is varied by adjusting the number of binary search steps. Specifically, for the MNIST-Add3 and MNIST-Add5 datasets, the binary search steps are set to 3, 5, 7, 9, and 11. 
% For the GTSRB-Sub dataset, the steps are set to 3, 4, 5, 6, and 7. 
For the CelebA-Syn dataset, the steps are set to 1, 2, 3, 4, and 5. After the attack, we measure the 2-norm between the benign inputs and the perturbed inputs to quantify the magnitude of the perturbations.
\section{More experimental results}

\subsection{Performance for the attacked attribute} \label{app:attr_pred}
\begin{figure}[tb]
    \centering
    \includegraphics[width=0.60\linewidth]{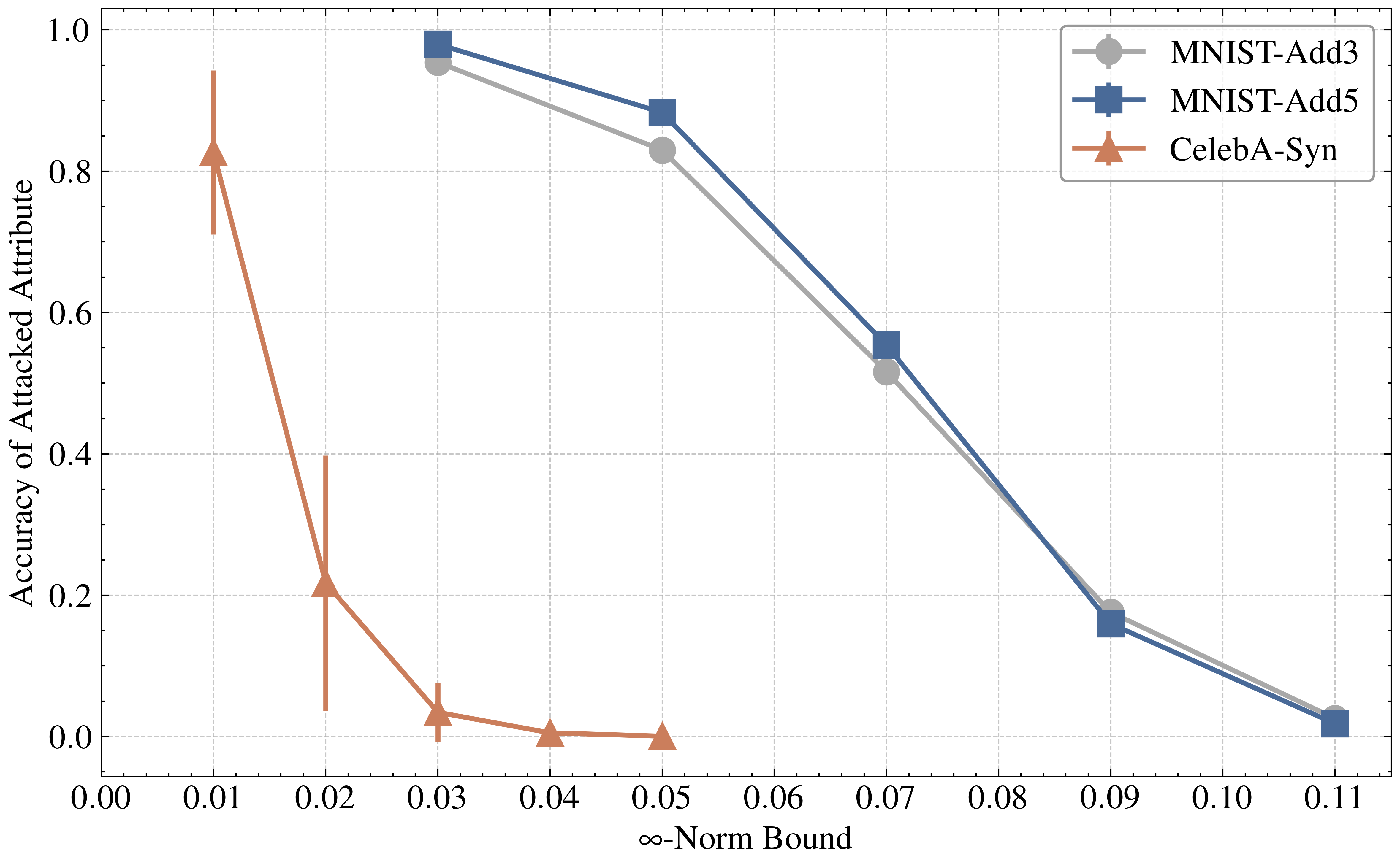}
    \caption{\small 
    Accuracy of the attribute recognition model in predicting the attacked attribute under the $\infty$-norm-bounded PGD attack with varying norm bounds on the MNIST-Add3, MNIST-Add5, and CelebA-Syn datasets.
    The attacker attacks \textit{a single attribute} at a time, generating an adversarial perturbation to distort the prediction result for that attribute. 
    The curves show the mean accuracy of the attacked attribute across varying norm bounds, with error bars indicating the standard deviation computed over all attacked attributes.}
    \label{fig:attr_pred}
\end{figure}

Consider a setting where the attacker attacks a single attribute at a time. Figure \ref{fig:attr_pred} illustrates the accuracy of the attribute recognition model in predicting the attacked attribute under the $\infty$-norm-bounded PGD attack with varying norm bounds on the MNIST-Add3, MNIST-Add5, and CelebA-Syn datasets. The curves show the mean accuracy of the attacked attribute across varying norm bounds, with error bars indicating the standard deviation computed over all attacked attributes.

As the norm bound increases, the accuracy for the attacked attribute consistently decreases, demonstrating that stronger adversarial perturbations more severely degrade the model's predictions. Notably, on all three datasets, the accuracy drops to nearly 0\% at large norm bounds (\eg, a bound of 0.11 for MNIST-Add3 and MNIST-Add5). This strong adversarial effect provides a compelling testbed for evaluating the robustness of \rnpcname.

\subsection{Performance against more adversarial attacks} \label{app:more_attack}
\paragraph{Performance against the 2-norm-bounded PGD attack.}
Figure \ref{fig:pgdl2_attack} illustrates the adversarial accuracy of \rnpcname~and the baseline models under the 2-norm-bounded PGD attack with varying norm bounds.
According to these results, we can conduct a similar analysis and reach similar conclusions to those in Section \ref{sec:exp_main}.
Specifically, we observe that on the MNIST-Add3, MNIST-Add5, and CelebA-Syn datasets, the adversarial accuracy of \npcname~and \rnpcname~is consistently higher than that of CBM and DCR under attacks with any 2-norm bound. This finding indicates that including the probabilistic circuit within a model's architecture potentially strengthens a model's robustness. In contrast, the task predictors used in CBM and DCR might compromise the model's robustness.

Furthermore, on the MNIST-Add3 and MNIST-Add5 datasets, \rnpcname~outperforms \npcname~by a large margin, especially under attacks with larger norm bounds. For instance, on MNIST-Add5, when the 2-norm bound reaches 11, \npcname's adversarial accuracy drops below 60\% whereas \rnpcname~maintains adversarial accuracy above 80\%.
These results demonstrate that \rnpcname~provides superior robustness compared to \npcname~on these datasets, highlighting the effectiveness of the proposed class-wise integration approach.
On the CelebA-Syn dataset, \rnpcname~performs similarly to \npcname, with both showing high robustness even under attacks with large norm bounds.

\begin{figure*}[tb]
    \centering
    \begin{minipage}{0.32\textwidth}
        \centering
        \includegraphics[width=\textwidth]{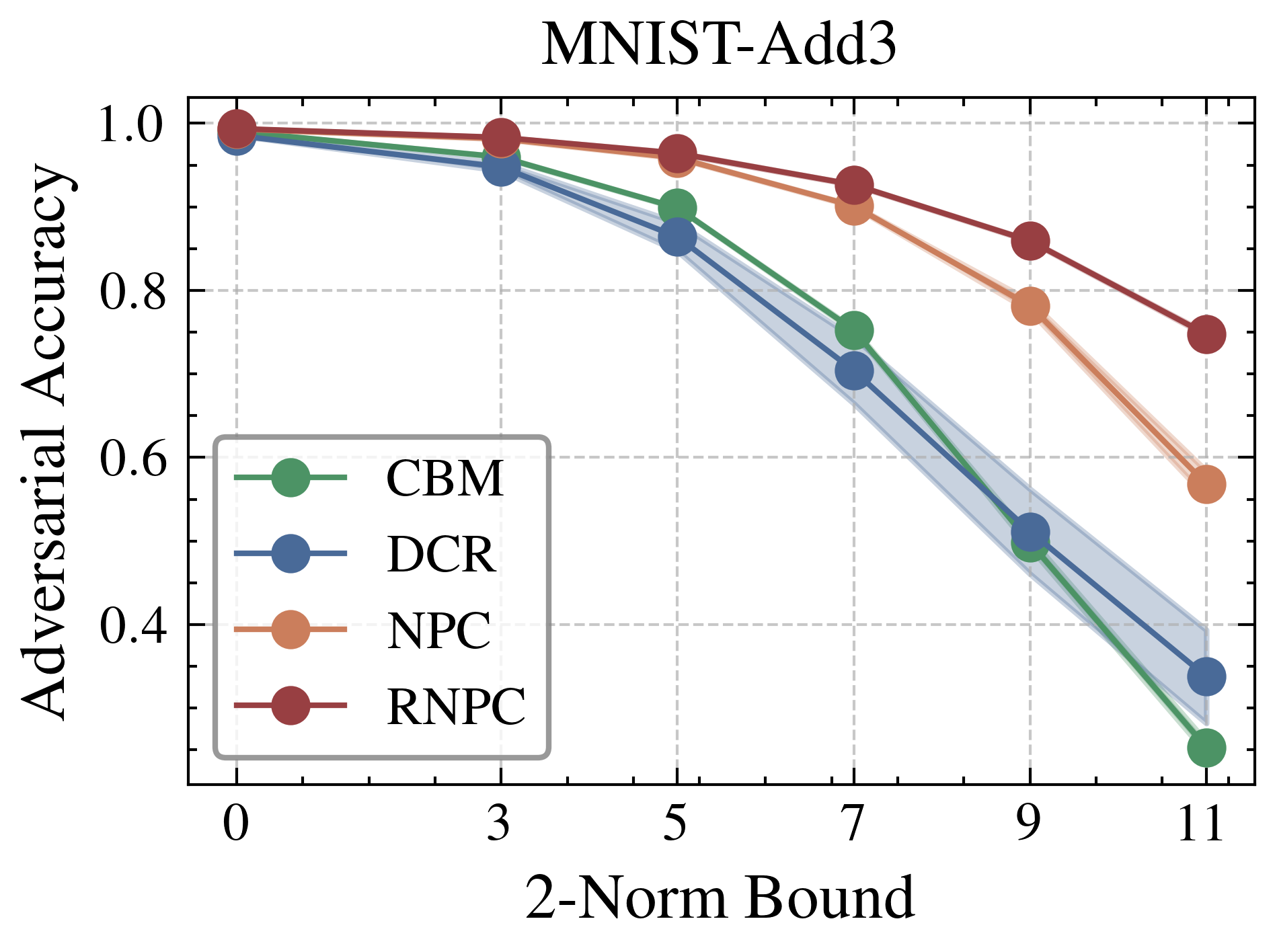}
    \end{minipage}
    \begin{minipage}{0.32\textwidth}
        \centering
        \includegraphics[width=\textwidth]{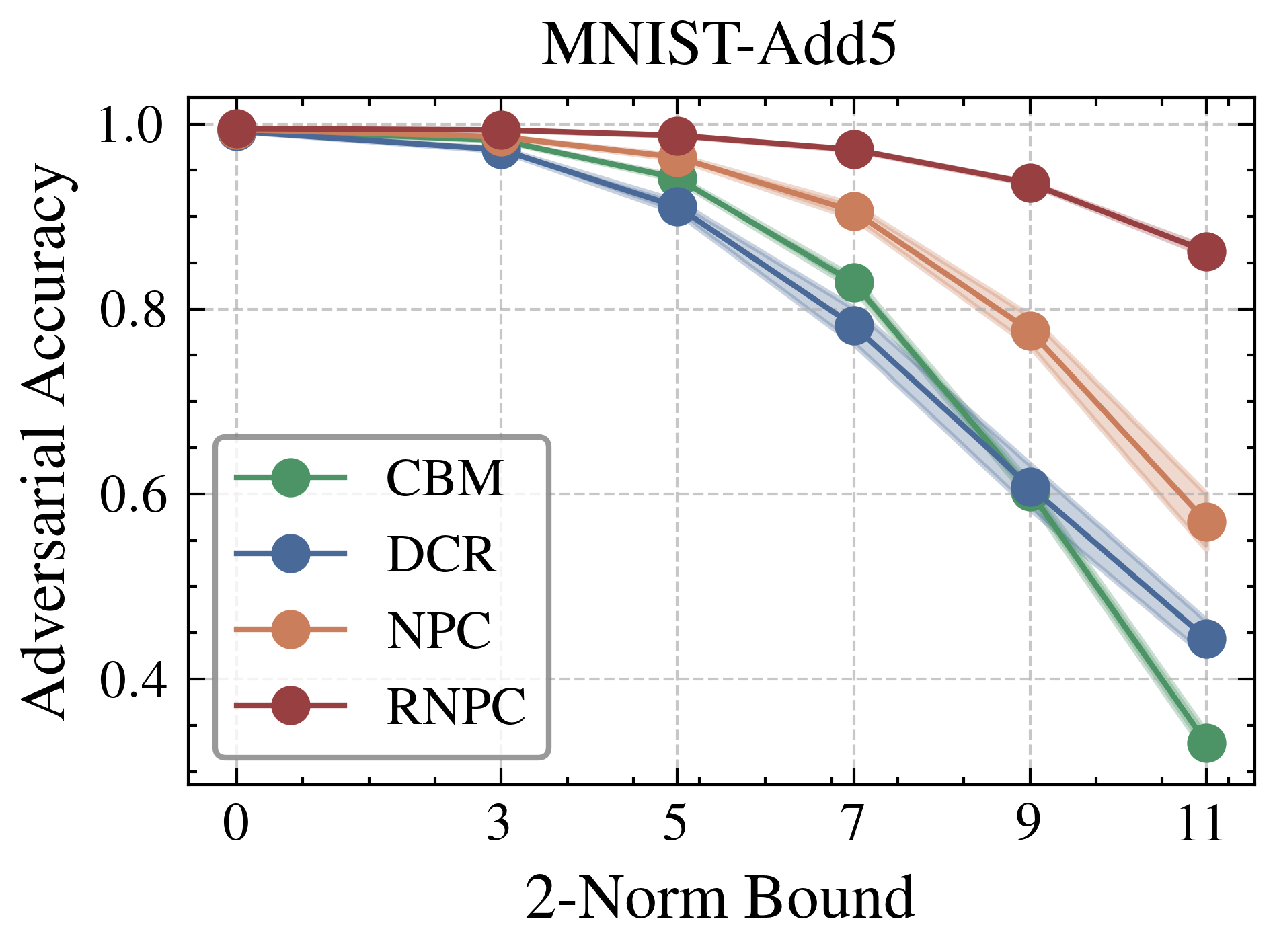}
    \end{minipage}
    \begin{minipage}{0.32\textwidth}
        \centering
        \includegraphics[width=\textwidth]{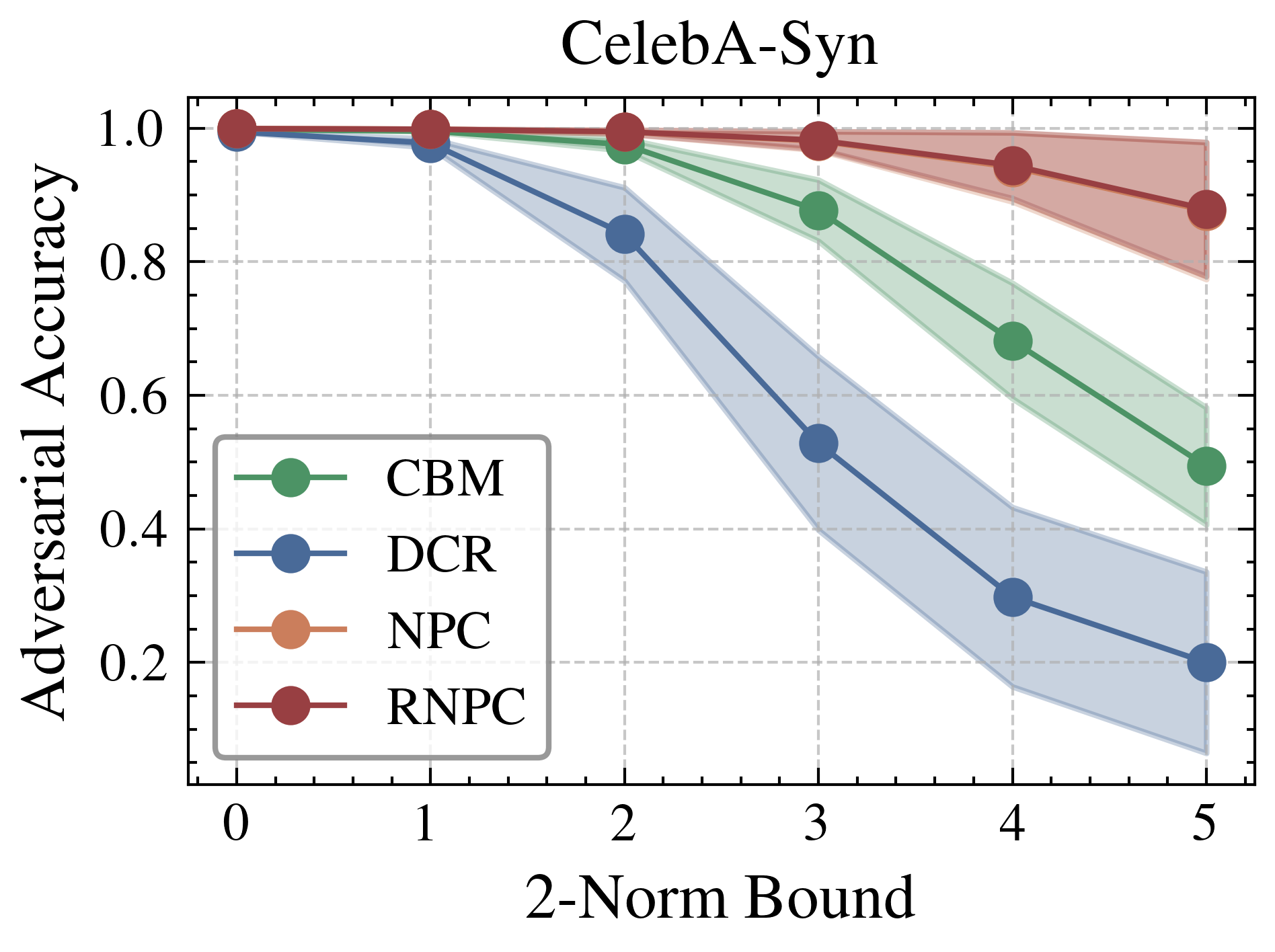}
    \end{minipage}
    \caption{
    \small
    Adversarial accuracy of CBM, DCR, \npcname, and \rnpcname~under the 2-norm-bounded PGD attack with varying norm bounds on the MNIST-Add3, MNIST-Add5, and CelebA-Syn datasets.}
    \label{fig:pgdl2_attack}
\end{figure*}

\paragraph{Performance against the 2-norm-bounded CW attack.}
Figure \ref{fig:cw_attack} illustrates the adversarial accuracy of \rnpcname~and the baseline models under the 2-norm-bounded CW attack with varying norm bounds.
We observe that \npcname~and \rnpcname~perform similarly and robustly on the MNIST-Add3 and CelebA-Syn datasets, both reaching adversarial accuracy close to 100\% under attacks with any 2-norm bound.
In contrast, the adversarial accuracy of CBM and DCR decreases as the norm bound increases.
This comparison demonstrates that \npcname~and \rnpcname~are robust against the 2-norm-bounded CW attack, indicating the robustness enhancement enabled by the probabilistic circuit.

On the MNIST-Add5 dataset, however, \npcname's adversarial accuracy also declines as the norm bound increases, while \rnpcname~maintains adversarial accuracy close to 100\%. These results demonstrate that \rnpcname~is more robust than \npcname, highlighting the robustness improvement achieved by the class-wise integration approach.

\begin{figure*}[tb]
    \centering
    \begin{minipage}{0.32\textwidth}
        \centering
        \includegraphics[width=\textwidth]{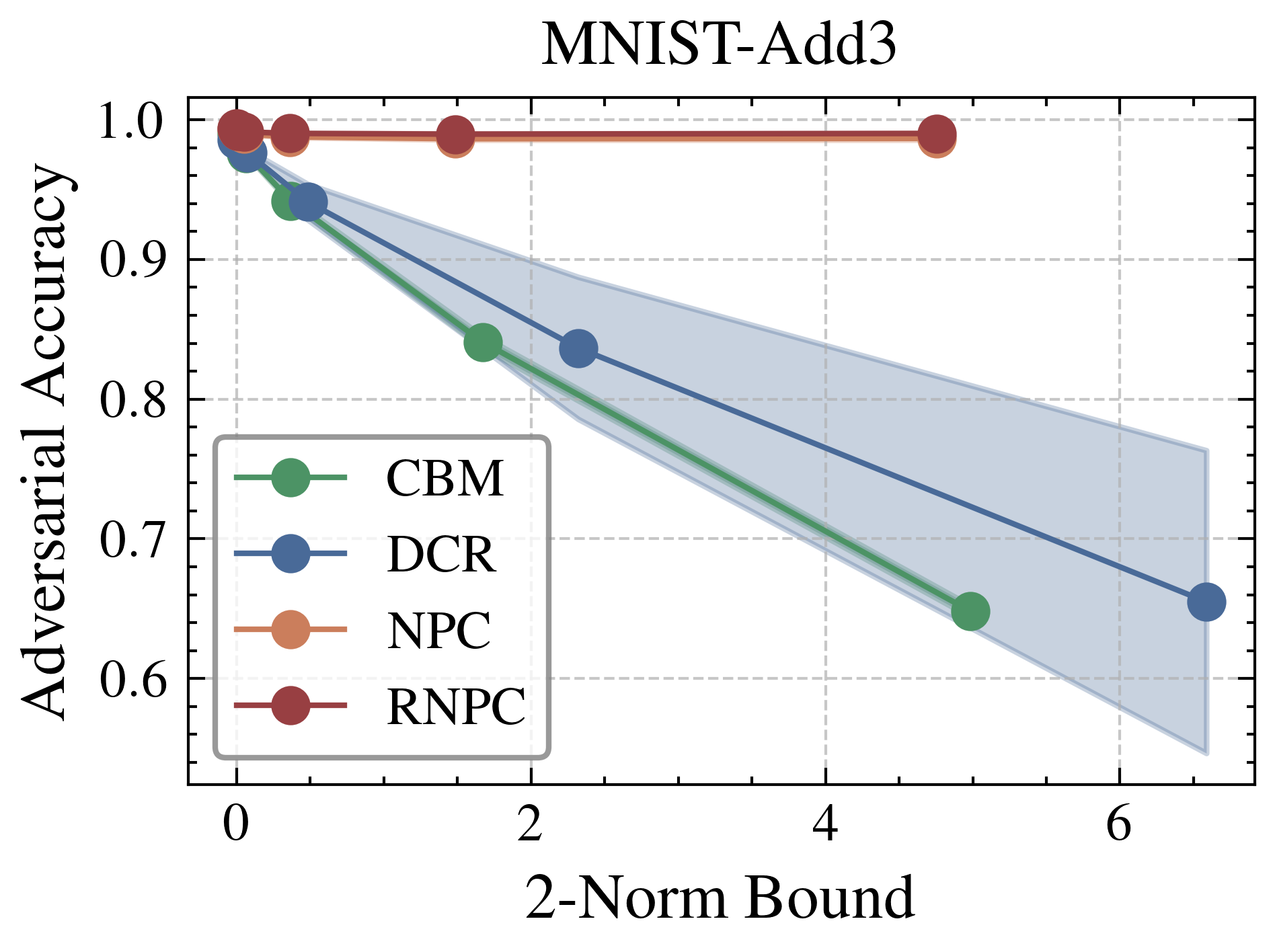}
    \end{minipage}
    \begin{minipage}{0.32\textwidth}
        \centering
        \includegraphics[width=\textwidth]{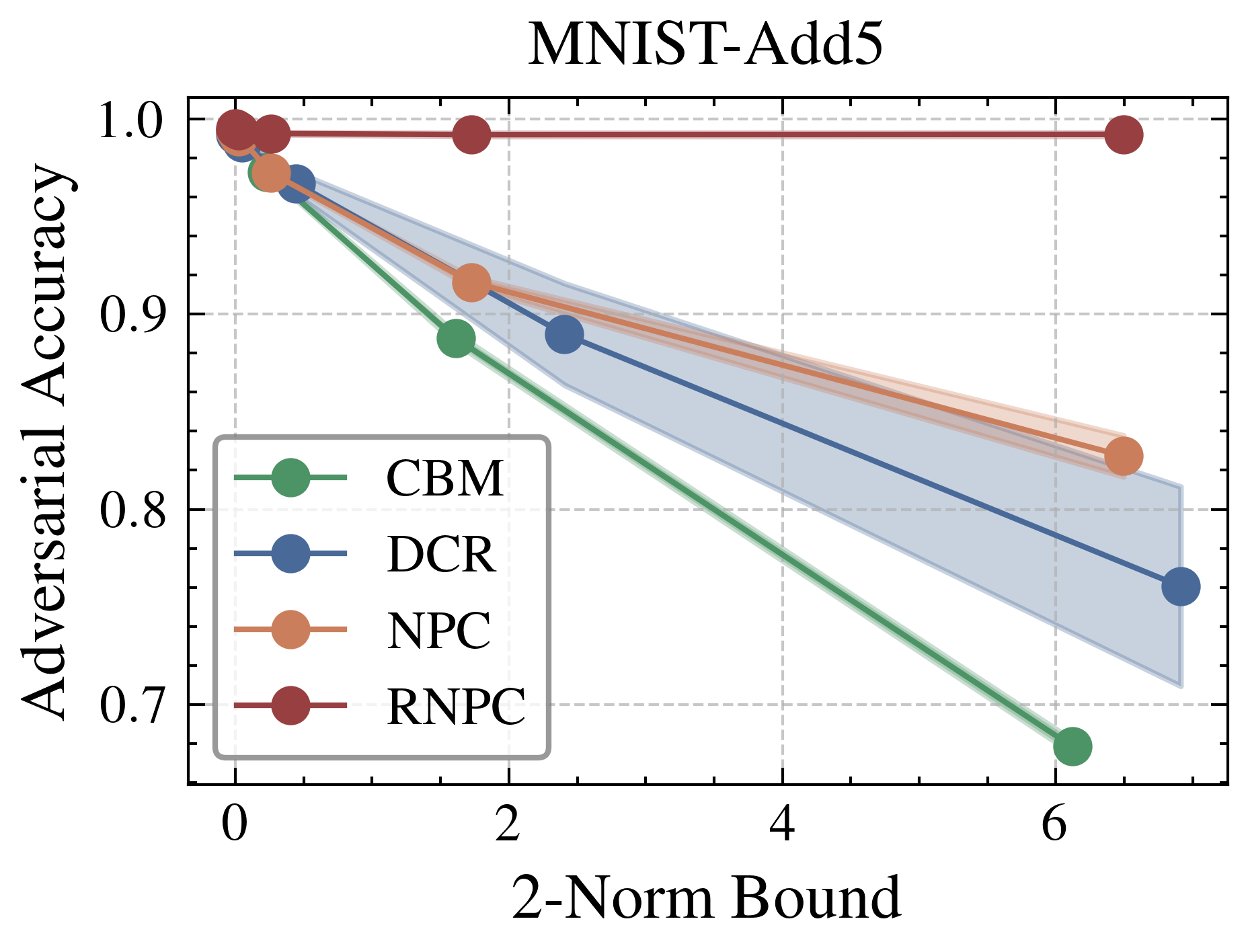}
    \end{minipage}
    \begin{minipage}{0.32\textwidth}
        \centering
        \includegraphics[width=\textwidth]{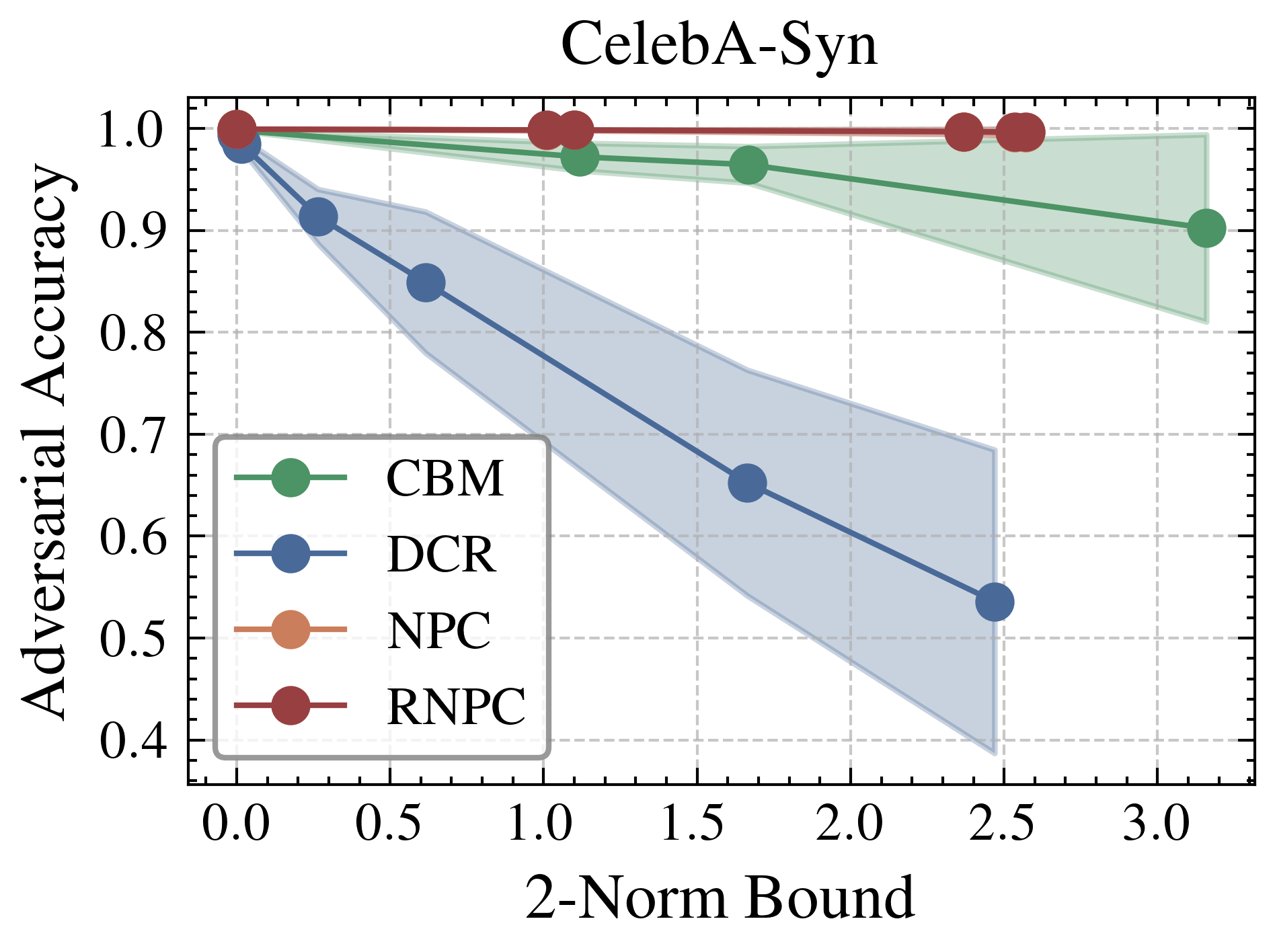}
    \end{minipage}
    \caption{
    \small
    Adversarial accuracy of CBM, DCR, \npcname, and \rnpcname~under the 2-norm-bounded CW attack with varying norm bounds on the MNIST-Add3, MNIST-Add5, and CelebA-Syn datasets.}
    \label{fig:cw_attack}
\end{figure*}

\subsection{More ablation studies} \label{app:ablation}
\paragraph{Impact of Differential Privacy (DP).}
% Consider a $p$-norm-bounded adversarial attack with a budget of $\ell$.
% Assume the attribute recognition model $f_\theta$ is randomized and satisfies $\epsilon$-Differential Privacy (DP) with respect to the $p$-norm.
% Let the probability of an attribute taking a specific value correspond to the expected model output, \ie, $\mathbb{P}_{\theta_k}(A_k=a_k \mid X) = \mathbb{E}[f_{\theta_k}(X)_{a_k}]$, where the expectation is taken over the randomness within the model.
% Under Assumption \ref{assump:complete}, the following holds:
% $\Lambda_{\npcname} \leqslant \frac{|\mathcal{A}_1|\ldots|\mathcal{A}_K|}{2}\alpha_\epsilon$ and $\Lambda_{\rnpcname} \leqslant \alpha_\epsilon$, where $\alpha_\epsilon := \max\{1-e^{-K\epsilon}, e^{K\epsilon}-1\}$.
% Moreover, there exist instances where both inequalities simultaneously hold as equalities.
Theorem \ref{thm:comparison} indicates that the robustness of \rnpcname~against a $p$-norm-bounded adversarial attack with a budget of $\ell$ is higher than that of \npcname~when the attribute recognition model satisfies $\epsilon$-DP with respect to the $p$-norm.
Here, we empirically validate whether this implication holds in practice.

To evaluate the robustness of \npcname~and \rnpcname, we conduct the 2-norm-bounded PGD attack (targeting a single attribute) with norm bounds of 3, 3, and 1 for the MNIST-Add3, MNIST-Add5, and CelebA-Syn datasets, respectively.
Following \citet{dp_robustness}, DP within the attribute recognition model can be implemented by injecting noise after various layers. For simplicity, we directly add noise to the input images. 
Specifically, the noise is sampled from a Gaussian distribution with zero mean and standard deviation $\sigma=\sqrt{2 \ln \left(\frac{1.25}{\delta}\right) \ell / \epsilon}$, where $\ell$ corresponds to the attack norm bound, $\epsilon$ is set to 0.5, and $\delta$ is chose as a small value (\eg, 0.01). 
This ensures that the attribute recognition model approximately satisfies $(\epsilon,0)$-DP.
Additionally, we estimate $\mathbb{E}[f_{\theta_k}(X)_{a_k}]$ by computing the mean of $f_{\theta_k}(X)_{a_k}$ over 20 noise draws.
The performance of \npcname~and \rnpcname~under these conditions is illustrated in Figure \ref{fig:dp}.

Compared to models without DP (\ie, vanilla models), the models satisfying DP generally exhibit lower adversarial accuracy across the three datasets due to the noise added to the inputs.
Despite this, we observe that \rnpcname~consistently outperforms \npcname~on the three datasets.
Notably, on the MNIST-Add3 and MNIST-Add5 datasets, the adversarial accuracy of \rnpcname~is higher than that of \npcname~by a large margin.
These results demonstrate that while integrating DP into the attribute recognition model might compromise the overall performance on downstream tasks, it highlights the robustness enhancement achieved by \rnpcname, thereby validating the implication of Theorem \ref{thm:comparison}.

\begin{figure*}[tb]
    \centering
    \begin{minipage}{0.32\textwidth}
        \centering
        \includegraphics[width=\textwidth]{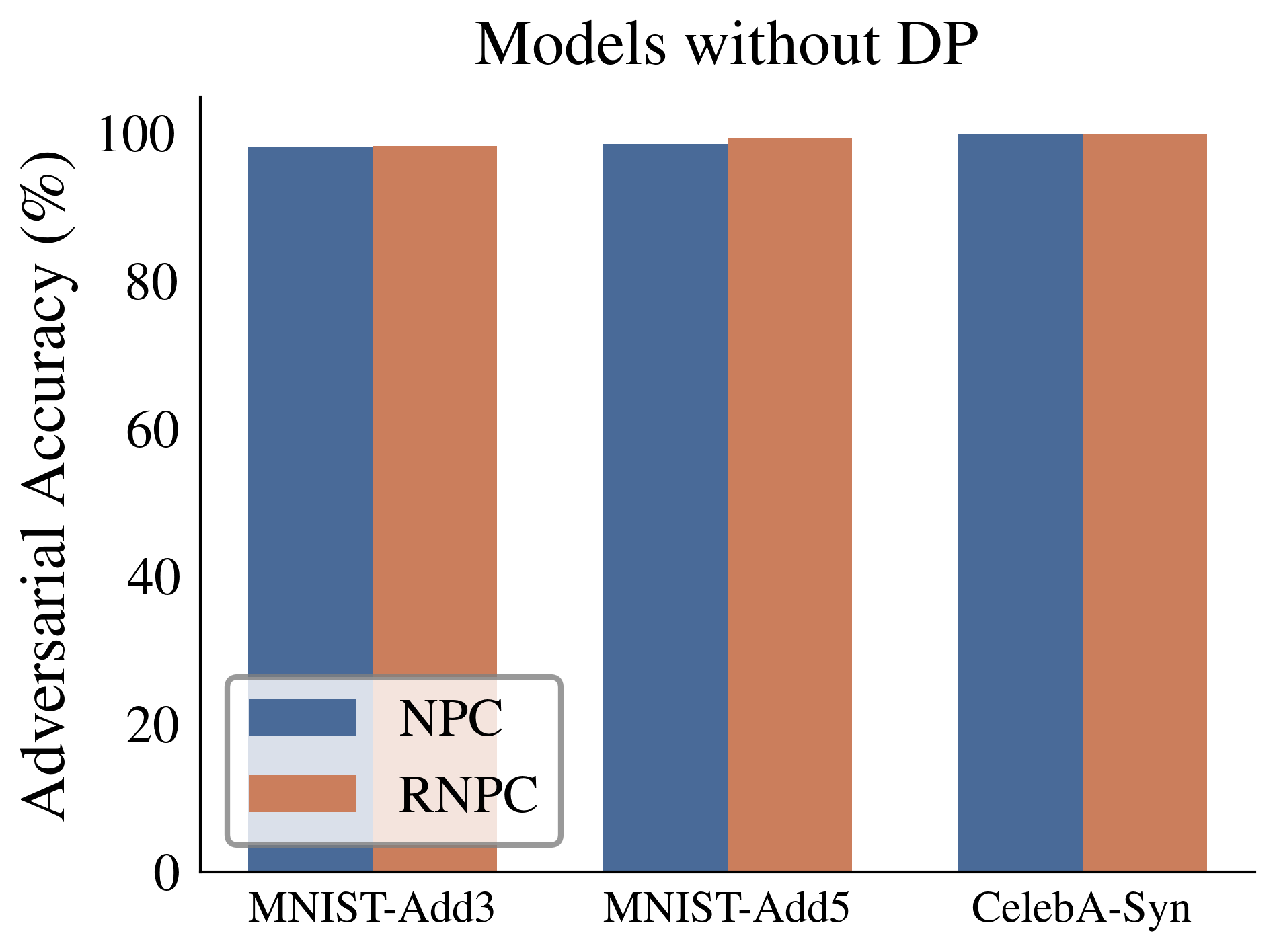}
    \end{minipage}
    \begin{minipage}{0.32\textwidth}
        \centering
        \includegraphics[width=\textwidth]{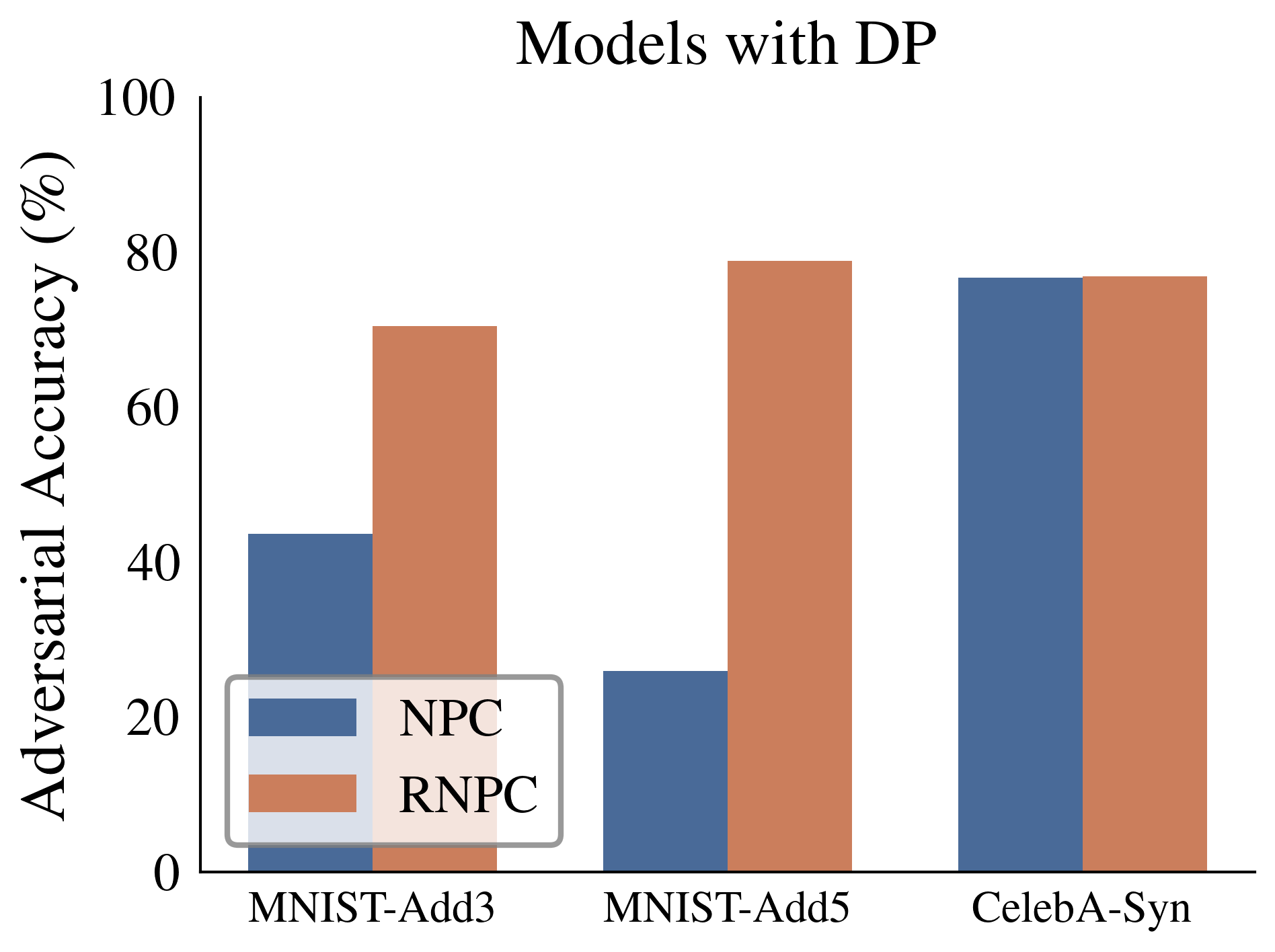}
    \end{minipage}
    \caption{
    \small
    Adversarial accuracy of \npcname~and \rnpcname~under the 2-norm-bounded PGD attack (targeting a single attribute) with norm bounds of 3, 3, and 1 for the MNIST-Add3, MNIST-Add5, and CelebA-Syn datasets, respectively.
    The left figure illustrates the performance of models without DP, \ie, vanilla models, while the right figure illustrates the performance of models with their attribute recognition models satisfying DP.
    }
    \label{fig:dp}
\end{figure*}
\section{Theoretical results with omitted proofs} \label{app:proof}

In this section, we provide more theoretical results and elaborate the proofs omitted in the main paper.

\subsection{Adversarial robustness of \npcname s}
%%% THEOREM %%%
\begin{theorem}[Adversarial robustness of \npcname s \textbf{(Restatement of Theorem \ref{thm:npc_perturbation})}]
Under Assumption \ref{assump:complete}, the prediction perturbation of \npcname~is bounded by the worst-case TV distance between the overall attribute distributions conditioned on the vanilla and perturbed inputs, which is further bounded by the sum of the worst-case TV distances for each attribute, \ie,
\resizebox{\textwidth}{!}{%
    \begin{minipage}{\textwidth}
    \begin{align*}
        \Delta_{\theta, w}^{\npcname}
        &\leqslant
        \mathbb{E}_X\left[\max_{\tilde{X}\in \mathbb{B}_p(X, \ell)}~d_{\mathrm{TV}}\left(\mathbb{P}_{\theta}\left(A_{1:K} \mid X\right), \mathbb{P}_{\theta}\left(A_{1:K} \mid \tilde{X}\right)\right)\right] 
        \leqslant \sum_{k=1}^{K} \mathbb{E}_X\left[\max_{\tilde{X}\in \mathbb{B}_p(X, \ell)}~d_{\mathrm{TV}}\left(\mathbb{P}_{\theta_k}\left(A_k \mid X\right), \mathbb{P}_{\theta_k}\left(A_k \mid \tilde{X}\right)\right)\right].
    \end{align*}
    \end{minipage}
}
\end{theorem}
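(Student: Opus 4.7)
The plan is to peel off the two bounds one at a time, working pointwise in $X$ and $\tilde X$ first and then taking $\max$ over $\tilde X \in \mathbb{B}_p(X,\ell)$ and $\mathbb{E}_X$ at the very end, since both operations preserve the inequalities (for the sum-of-maxima step I will use $\max_{\tilde X}\sum_k f_k \leqslant \sum_k \max_{\tilde X} f_k$).

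For the first inequality, I would view the probabilistic circuit $\mathbb{P}_w(Y \mid A_{1:K})$ as a Markov kernel from the attribute space into $\mathcal{Y}$ that is \emph{independent of $X$}. Expanding $d_{\mathrm{TV}}(\mathbb{P}_{\theta,w}(Y\mid X), \mathbb{P}_{\theta,w}(Y\mid\tilde X))$ with the definition of $\mathbb{P}_{\theta,w}$ from Equation~(\ref{eq:npc}) and using Assumption~\ref{assump:complete} to rewrite $\prod_k \mathbb{P}_{\theta_k}(A_k\mid X) = \mathbb{P}_\theta(A_{1:K}\mid X)$, I would apply the triangle inequality to pull the absolute value inside the sum over $a_{1:K}$, then swap the two summations and use $\sum_y \mathbb{P}_w(Y=y \mid A_{1:K}=a_{1:K}) = 1$ to collapse the $y$-sum. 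What remains is exactly $d_{\mathrm{TV}}(\mathbb{P}_\theta(A_{1:K}\mid X), \mathbb{P}_\theta(A_{1:K}\mid\tilde X))$. Conceptually this is just the data-processing inequality for TV under a Markov kernel, but writing it out explicitly in the paper's notation makes the role of Assumption~\ref{assump:complete} transparent.

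For the second inequality, the key fact is subadditivity of TV over product measures: for product distributions $P=\bigotimes_k P_k$ and $Q=\bigotimes_k Q_k$ on $\prod_k \mathcal{A}_k$,
\begin{align*}
d_{\mathrm{TV}}(P,Q) \;\leqslant\; \sum_{k=1}^K d_{\mathrm{TV}}(P_k,Q_k).
\end{align*}
Assumption~\ref{assump:complete} guarantees that, conditionally on $X$ (resp.\ $\tilde X$), the joint attribute distribution is the product of its marginals, so this lemma applies directly with $P_k = \mathbb{P}_{\theta_k}(A_k\mid X)$ and $Q_k = \mathbb{P}_{\theta_k}(A_k\mid\tilde X)$. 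I would prove the product-TV bound by the standard hybrid/telescoping argument: define intermediate product measures $R^{(j)} := (\bigotimes_{k\leqslant j} Q_k)\otimes(\bigotimes_{k>j}P_k)$, so that $R^{(0)}=P$ and $R^{(K)}=Q$, and apply the triangle inequality together with the fact that $d_{\mathrm{TV}}(R^{(j-1)},R^{(j)}) = d_{\mathrm{TV}}(P_j,Q_j)$ (the other $K-1$ factors cancel when summing over $a_{1:K}$).

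I expect the main technical friction, if any, to be the product-TV subadditivity step rather than the data-processing step, since the former is where the factorized structure imposed by Assumption~\ref{assump:complete} is really used and where one must be careful that $\prod_{k \neq j} P_k(a_k)$ and $\prod_{k < j} Q_k(a_k)$ each marginalize to $1$ when summed against $|P_j(a_j)-Q_j(a_j)|$. Once both pointwise inequalities are in hand, taking $\max_{\tilde X\in\mathbb{B}_p(X,\ell)}$ and then $\mathbb{E}_X$ of both sides yields the two stated bounds on $\Delta_{\theta,w}^{\npcname}$.
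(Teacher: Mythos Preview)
Your proposal is correct and matches the paper's proof essentially line for line: the paper expands the TV distance, pulls the absolute value inside via the triangle inequality, collapses the $y$-sum using $\sum_y \mathbb{P}_w(Y=y\mid A_{1:K}=a_{1:K})=1$, and then invokes (without proof) the subadditivity of TV over product measures before taking $\max_{\tilde X}$ and $\mathbb{E}_X$. Your telescoping/hybrid argument for the product-TV bound and your explicit use of $\max_{\tilde X}\sum_k \leqslant \sum_k \max_{\tilde X}$ are more detailed than the paper's treatment, but the route is identical.
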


\begin{proof}
Under Assumption \ref{assump:complete}, $ \mathbb{P}_{\theta}\left(A_{1:K} \mid X\right) = \prod_{k=1}^K\mathbb{P}_{\theta_k}(A_k\mid X)$. Therefore,
{
\scriptsize
\begin{align*}
&d_{\mathrm{TV}}\left(\mathbb{P}_{\theta, w}(Y \mid X), \mathbb{P}_{\theta, w}(Y \mid \tilde{X})\right) \\
&= \frac{1}{2} \sum_y \left|\mathbb{P}_{\theta, w}(Y=y \mid X)-\mathbb{P}_{\theta, w}(Y=y \mid \tilde{X})\right| \\
&\leqslant \frac{1}{2} \sum_y \sum_{a_{1:K}} \mathbb{P}_w(Y=y\mid A_{1:K}=a_{1:K}) \cdot \left| \prod_{k=1}^K\mathbb{P}_{\theta_k}(A_k=a_k\mid X) - \prod_{k=1}^K\mathbb{P}_{\theta_k}(A_k=a_k\mid \tilde{X})\right| \\
&= \frac{1}{2} \sum_{a_{1:K}} \left| \prod_{k=1}^K\mathbb{P}_{\theta_k}(A_k=a_k\mid X) - \prod_{k=1}^K\mathbb{P}_{\theta_k}(A_k=a_k\mid \tilde{X})\right| = d_{\mathrm{TV}}\left(\mathbb{P}_{\theta}\left(A_{1:K} \mid X\right), \mathbb{P}_{\theta}\left(A_{1:K} \mid \tilde{X}\right)\right) \\
&\leqslant \frac{1}{2} \sum_{k=1}^K \sum_{a_k} \ \left| \mathbb{P}_{\theta_k}(A_k=a_k\mid X)-\mathbb{P}_{\theta_k}(A_k=a_k\mid \tilde{X}) \right| = \sum_{k=1}^K d_{\mathrm{TV}}\left(\mathbb{P}_{\theta_k}(A_k\mid X), \mathbb{P}_{\theta_k}(A_k\mid \tilde{X})\right).
% &\leqslant \frac{1}{2} \sum_{k=1}^K \sum_{a_k} \ \left| \mathbb{P}_{\theta_k}(A_k=a_k\mid X)-\mathbb{P}_{\theta_k}(A_k=a_k\mid \tilde{X}) \right| = \sum_{k=1}^K d_{\mathrm{TV}}\left(\mathbb{P}_{\theta_k}(A_k\mid X), \mathbb{P}_{\theta_k}(A_k\mid \tilde{X})\right)
\end{align*}
}
By successively applying the max and expectation operators to both sides, we complete the proof. 
\end{proof}

\subsection{Adversarial robustness of \rnpcname s}
%%% THEOREM %%%
\begin{lemma}[Adversarial robustness of \rnpcname s \textbf{(Restatement of Lemma~\ref{thm:rnpc_perturbation})}]
    The prediction perturbation of \rnpcname~is bounded by the worst-case change in probabilities within a neighborhood caused by the attack, \ie,
    {
    \small
    \begin{align*}
    \Delta_{\theta, w}^{\rnpcname}
    \leqslant 
    \mathbb{E}_X\left[\max_{\tilde{X}\in \mathbb{B}_p(X, \ell)}~ \left\{ \max_{\tilde{y}\in\mathcal{Y}} \left| 1-\frac{\mathbb{P}_{\theta}(A_{1:K}\in \mathcal{N}(\tilde{y}, r)\mid \tilde{X})}{\mathbb{P}_{\theta}(A_{1:K}\in \mathcal{N}(\tilde{y}, r)\mid X)} \right| \right\} \right] .
    \end{align*}
    }
\end{lemma}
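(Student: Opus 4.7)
The plan is to rewrite $\hat{\Phi}_{\theta,w}(Y\mid X)$ as a finite mixture whose component distributions do not depend on $X$, and then bound the TV distance on $Y$ by the TV distance on the mixing weights.

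First I would introduce shorthand $p_{\tilde{y}}(X):=\mathbb{P}_\theta(A_{1:K}\in\mathcal{N}(\tilde{y},r)\mid X)$, $b_{\tilde{y}}:=|V_{\tilde{y}}|$, and define
\[ w_{\tilde{y}}(X):=\frac{p_{\tilde{y}}(X)\,b_{\tilde{y}}}{Z_\theta(X)},\qquad q_{\tilde{y}}(y):=\frac{1}{b_{\tilde{y}}}\sum_{a_{1:K}\in V_{\tilde{y}}}\mathbb{P}_w(Y=y\mid A_{1:K}=a_{1:K}). \]
Because $\sum_y \mathbb{P}_w(Y=y\mid A=a)=1$ and $Z_\theta(X)=\sum_{\tilde{y}}p_{\tilde{y}}(X)b_{\tilde{y}}$, both $w(\cdot\mid X)$ (over $\tilde{y}$) and each $q_{\tilde{y}}$ (over $Y$) are probability distributions, and a direct computation gives the mixture identity $\hat{\Phi}_{\theta,w}(Y=y\mid X)=\sum_{\tilde{y}}w_{\tilde{y}}(X)\,q_{\tilde{y}}(y)$. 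The crucial point is that $q_{\tilde{y}}$ does not depend on $X$. Applying the triangle inequality inside the TV sum (equivalently, convexity of TV under mixing) then yields $d_{\mathrm{TV}}(\hat{\Phi}_{\theta,w}(Y\mid X),\hat{\Phi}_{\theta,w}(Y\mid\tilde{X}))\leqslant d_{\mathrm{TV}}(w(\cdot\mid X),w(\cdot\mid \tilde{X}))$, reducing the problem to controlling the TV of the mixing weights.

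Next I would parametrize the perturbation by the pointwise ratios $\rho_{\tilde{y}}:=p_{\tilde{y}}(\tilde{X})/p_{\tilde{y}}(X)$ (handling a vanishing denominator by a limiting argument or by restricting to the support) and set $R:=\sum_{\tilde{y}}w_{\tilde{y}}(X)\rho_{\tilde{y}}=Z_\theta(\tilde{X})/Z_\theta(X)$. A direct algebraic manipulation gives $w_{\tilde{y}}(\tilde{X})=w_{\tilde{y}}(X)\,\rho_{\tilde{y}}/R$, and therefore
\[ d_{\mathrm{TV}}(w(\cdot\mid X),w(\cdot\mid \tilde{X}))=\frac{1}{2R}\sum_{\tilde{y}}w_{\tilde{y}}(X)\,|R-\rho_{\tilde{y}}|, \]
which is $\tfrac{1}{2R}$ times the mean absolute deviation of $\rho$ under $w(\cdot\mid X)$, whose mean equals $R$ by construction.

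The main obstacle is the final algebraic inequality $\mathrm{MAD}/(2R)\leqslant\rho^*$, where $\rho^*:=\max_{\tilde{y}}|1-\rho_{\tilde{y}}|$. A naive estimate $\mathrm{MAD}\leqslant\mathrm{range}\leqslant 2\rho^*$ only yields $\rho^*/R$, which is too weak when $R<1$. The sharper step I would use combines the a priori range $\rho_{\tilde{y}}\in[1-\rho^*,1+\rho^*]$ with the one-sided identity $\mathrm{MAD}=2\,\mathbb{E}_{w(X)}[(R-\rho)_+]=2\,\mathbb{E}_{w(X)}[(\rho-R)_+]$, which gives $\mathrm{MAD}\leqslant 2\min\{R-(1-\rho^*),\,(1+\rho^*)-R\}$. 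A short case split on $R\leqslant 1$ versus $R\geqslant 1$ then collapses this to $\mathrm{MAD}/(2R)\leqslant\rho^*$ (the case $\rho^*\geqslant 1$ being trivial since $d_{\mathrm{TV}}\leqslant 1$). Taking $\max_{\tilde{X}\in\mathbb{B}_p(X,\ell)}$ and $\mathbb{E}_X$ on both sides completes the proof.
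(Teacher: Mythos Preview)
Your proposal is correct, but the route differs from the paper's. You first recognize the mixture structure $\hat{\Phi}_{\theta,w}(Y\mid X)=\sum_{\tilde{y}} w_{\tilde{y}}(X)\,q_{\tilde{y}}(Y)$ and use convexity of TV to reduce to $d_{\mathrm{TV}}(w(\cdot\mid X),w(\cdot\mid\tilde{X}))$; this is a tighter intermediate quantity, but then you have to work harder at the end, bounding $\tfrac{1}{2R}\,\mathbb{E}_{w(X)}|R-\rho|$ via the one-sided MAD identity and a case split on $R\lessgtr 1$. The paper instead applies an add--subtract trick on the unnormalized score, $Z^*(\tilde{X})\Phi(X)-Z(X)\Phi(\tilde{X})=Z(\tilde{X})(\Phi(X)-\Phi(\tilde{X}))+\Phi(\tilde{X})(Z(\tilde{X})-Z(X))$, which after the triangle inequality yields the intermediate bound $\sum_{\tilde{y}} w_{\tilde{y}}(X)\,|1-\rho_{\tilde{y}}|$; this is looser than your exact TV on the weights, but it is immediately at most $\max_{\tilde{y}}|1-\rho_{\tilde{y}}|$ because it is a convex combination. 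So your approach trades a slicker reduction for a heavier final inequality, while the paper trades a looser intermediate for a one-line finish; both reach the same bound and neither needs the other's trick.
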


\begin{proof}
    % The prediction perturbation of \rnpcname, denoted as $\Delta_{\theta, w}^{\rnpcname}$, represents the worst-case TV distance between the class distributions conditioned on the vanilla and perturbed inputs. Specifically, this TV distance is bounded by,
    $\Delta_{\theta, w}^{\rnpcname}$ can be bounded as follows,

    \resizebox{\textwidth}{!}{%
    \begin{minipage}{\textwidth}
    \begin{align}
    &d_{\mathrm{TV}}\left(\hat{\Phi}_{\theta, w}(Y \mid X), \hat{\Phi}_{\theta, w}(Y \mid \tilde{X})\right) \notag\\
    &= \frac{1}{2}\frac{1}{Z_{\theta}(X)\cdot Z_{\theta}(\tilde{X})} \sum_Y \left|Z_{\theta}(\tilde{X})\cdot \Phi_{\theta, w}(Y\mid X) - Z_{\theta}(X)\cdot \Phi_{\theta, w}(Y\mid \tilde{X}) \right| \notag\\
    &= \frac{1}{2}\frac{1}{Z_{\theta}(X)\cdot Z_{\theta}(\tilde{X})} \sum_Y \left|
    Z_{\theta}(\tilde{X})\cdot \Phi_{\theta, w}(Y\mid X) 
    - Z_{\theta}(\tilde{X})\cdot \Phi_{\theta, w}(Y\mid \tilde{X}) 
    + Z_{\theta}(\tilde{X})\cdot \Phi_{\theta, w}(Y\mid \tilde{X}) 
    - Z_{\theta}(X)\cdot \Phi_{\theta, w}(Y\mid \tilde{X}) \right| \notag\\
    &\leqslant \frac{1}{2}\frac{1}{Z_{\theta}(X)\cdot Z_{\theta}(\tilde{X})} \sum_Y \left[Z_{\theta}(\tilde{X})\cdot\left|\Phi_{\theta, w}(Y\mid X) - \Phi_{\theta, w}(Y\mid \tilde{X})\right| 
    + \Phi_{\theta, w}(Y\mid \tilde{X})\cdot\left|Z_{\theta}(\tilde{X}) - Z_{\theta}(X)\right|\right] \notag\\
    &= \frac{1}{2}\frac{1}{Z_\theta(X)} \sum_Y \left|\Phi_{\theta, w}(Y\mid X) - \Phi_{\theta, w}(Y\mid \tilde{X})\right| + \frac{1}{2}\frac{1}{Z_{\theta}(X)\cdot Z_{\theta}(\tilde{X})} \cdot  Z_\theta(\tilde{X}) \cdot \left|Z_{\theta}(\tilde{X}) - Z_{\theta}(X)\right| \notag\\
    &= \frac{1}{2}\frac{1}{Z_\theta(X)} \left[ \sum_Y \left|\Phi_{\theta, w}(Y\mid X) - \Phi_{\theta, w}(Y\mid \tilde{X})\right| + \left|Z_{\theta}(\tilde{X}) - Z_{\theta}(X)\right| \right], \label{eq:rnpc_tv}
    \end{align}
    \end{minipage}
    }
    where the penultimate equation is derived using $\sum_Y \Phi_{\theta, w}(Y\mid \tilde{X}) = Z_\theta(\tilde{X})$.

    For the first interior term in Equation (\ref{eq:rnpc_tv}),
    
    \resizebox{\textwidth}{!}{%
    \begin{minipage}{\textwidth}
    \begin{align*}
    \left|\Phi_{\theta, w}(Y\mid X) - \Phi_{\theta, w}(Y\mid \tilde{X})\right| &\leqslant \sum_{\tilde{y}} \left| \mathbb{P}_{\theta}(A_{1:K}\in \mathcal{N}(\tilde{y}, r)\mid X) - \mathbb{P}_{\theta}(A_{1:K}\in \mathcal{N}(\tilde{y}, r)\mid \tilde{X}) \right| \cdot \sum_{a_{1:K}\in V_{\tilde{y}}} \mathbb{P}_w(Y\mid A_{1:K}=a_{1:K}), \\
    \sum_Y \left|\Phi_{\theta, w}(Y\mid X) - \Phi_{\theta, w}(Y\mid \tilde{X})\right| &\leqslant \sum_{\tilde{y}} \left| \mathbb{P}_{\theta}(A_{1:K}\in \mathcal{N}(\tilde{y}, r)\mid X) - \mathbb{P}_{\theta}(A_{1:K}\in \mathcal{N}(\tilde{y}, r)\mid \tilde{X}) \right| \cdot |V_{\tilde{y}}|.
    \end{align*}
    \end{minipage}
    }

    For the second interior term in Equation (\ref{eq:rnpc_tv}),
    {
    \scriptsize
    \begin{align*}
    \left|Z_{\theta}(X) - Z_{\theta}(\tilde{X})\right| \leqslant \sum_{\tilde{y}} \left| \mathbb{P}_\theta(A_{1:K}\in \mathcal{N}(\tilde{y}, r)\mid X) - \mathbb{P}_{\theta}(A_{1:K}\in \mathcal{N}(\tilde{y}, r)\mid \tilde{X}) \right| \cdot \left| V_{\tilde{y}} \right|.
    \end{align*}
    }
    
    % \resizebox{\textwidth}{!}{%
    % \begin{minipage}{\textwidth}
    % \begin{align*}
    % \left|Z_{\theta}(X) - Z_{\theta}(\tilde{X})\right| \leqslant \sum_{\tilde{y}} \left| \mathbb{P}_\theta(A_{1:K}\in \mathcal{N}(\tilde{y}, r)\mid X) - \mathbb{P}_{\theta}(A_{1:K}\in \mathcal{N}(\tilde{y}, r)\mid \tilde{X}) \right| \cdot \left| V_{\tilde{y}} \right|.
    % \end{align*}
    % \end{minipage}
    % }

    By combining these two inequalities, Equation (\ref{eq:rnpc_tv}) is bounded by,
    {
    \scriptsize
    \begin{align*}
    \text{Equation} (\ref{eq:rnpc_tv}) &\leqslant \frac{1}{2}\frac{1}{Z_{\theta}(X)} \cdot 2\sum_{\tilde{y}} \left| \mathbb{P}_\theta(A_{1:K}\in \mathcal{N}(\tilde{y}, r)\mid X) - \mathbb{P}_{\theta}(A_{1:K}\in \mathcal{N}(\tilde{y}, r)\mid \tilde{X}) \right| \cdot \left| V_{\tilde{y}} \right| \\
    &= \frac{\sum_{\tilde{y}} \left| \mathbb{P}_\theta(A_{1:K}\in \mathcal{N}(\tilde{y}, r)\mid X) - \mathbb{P}_{\theta}(A_{1:K}\in \mathcal{N}(\tilde{y}, r)\mid \tilde{X}) \right| \cdot \left| V_{\tilde{y}} \right|}{\sum_{\tilde{y}} \mathbb{P}_\theta(A_{1:K}\in \mathcal{N}(\tilde{y}, r)\mid X) \cdot \left| V_{\tilde{y}} \right|} \\
    &\leqslant \max_{\tilde{y}} \frac{\left| \mathbb{P}_\theta(A_{1:K}\in \mathcal{N}(\tilde{y}, r)\mid X) - \mathbb{P}_{\theta}(A_{1:K}\in \mathcal{N}(\tilde{y}, r)\mid \tilde{X}) \right|}{\mathbb{P}_\theta(A_{1:K}\in \mathcal{N}(\tilde{y}, r)\mid X)} \\
    &= \max_{\tilde{y}} \left| 1-\frac{\mathbb{P}_{\theta}(A_{1:K}\in \mathcal{N}(\tilde{y}, r)\mid \tilde{X})}{\mathbb{P}_\theta(A_{1:K}\in \mathcal{N}(\tilde{y}, r)\mid X)} \right|. 
    \end{align*}
    }
    
    % \resizebox{\textwidth}{!}{%
    % \begin{minipage}{\textwidth}
    % \begin{align*}
    % \text{Eq.} (\ref{eq:rnpc_tv}) &\leqslant \frac{1}{2}\frac{1}{Z_{\theta}(X)} \cdot 2\sum_{\tilde{y}} \left| \mathbb{P}_\theta(A_{1:K}\in \mathcal{N}(\tilde{y}, r)\mid X) - \mathbb{P}_{\theta}(A_{1:K}\in \mathcal{N}(\tilde{y}, r)\mid \tilde{X}) \right| \cdot \left| V_{\tilde{y}} \right| \\
    % &= \frac{\sum_{\tilde{y}} \left| \mathbb{P}_\theta(A_{1:K}\in \mathcal{N}(\tilde{y}, r)\mid X) - \mathbb{P}_{\theta}(A_{1:K}\in \mathcal{N}(\tilde{y}, r)\mid \tilde{X}) \right| \cdot \left| V_{\tilde{y}} \right|}{\sum_{\tilde{y}} \mathbb{P}_\theta(A_{1:K}\in \mathcal{N}(\tilde{y}, r)\mid X) \cdot \left| V_{\tilde{y}} \right|} \\
    % &\leqslant \max_{\tilde{y}} \frac{\left| \mathbb{P}_\theta(A_{1:K}\in \mathcal{N}(\tilde{y}, r)\mid X) - \mathbb{P}_{\theta}(A_{1:K}\in \mathcal{N}(\tilde{y}, r)\mid \tilde{X}) \right|}{\mathbb{P}_\theta(A_{1:K}\in \mathcal{N}(\tilde{y}, r)\mid X)} \\
    % &= \max_{\tilde{y}} \left| 1-\frac{\mathbb{P}_{\theta}(A_{1:K}\in \mathcal{N}(\tilde{y}, r)\mid \tilde{X})}{\mathbb{P}_\theta(A_{1:K}\in \mathcal{N}(\tilde{y}, r)\mid X)} \right|. 
    % \end{align*}
    % \end{minipage}
    % }

    Consequently, $d_{\mathrm{TV}}\left(\hat{\Phi}_{\theta, w}(Y \mid X), \hat{\Phi}_{\theta, w}(Y \mid \tilde{X})\right) \leqslant \max_{\tilde{y}} \left| 1-\frac{\mathbb{P}_{\theta}(A_{1:K}\in \mathcal{N}(\tilde{y}, r)\mid \tilde{X})}{\mathbb{P}_\theta(A_{1:K}\in \mathcal{N}(\tilde{y}, r)\mid X)} \right|$.
    By successively applying the max and expectation operators to both sides, we complete the proof of Lemma \ref{thm:rnpc_perturbation}. 
\end{proof}

\subsection{Comparison in adversarial robustness} \label{app:comparison}
%%% THEOREM %%%
\begin{theorem}[Comparison in adversarial robustness \textbf{(Restatement of Theorem \ref{thm:comparison})}]
    Consider a $p$-norm-bounded adversarial attack with a budget of $\ell$.
    Assume the attribute recognition model $f_\theta$ is randomized and satisfies $\epsilon$-Differential Privacy (DP) with respect to the $p$-norm.
    Let the probability of an attribute taking a specific value correspond to the expected model output, \ie, $\mathbb{P}_{\theta_k}(A_k=a_k \mid X) = \mathbb{E}[f_{\theta_k}(X)_{a_k}]$, where the expectation is taken over the randomness within the model.
    Under Assumption \ref{assump:complete}, the following holds:
    $\Lambda_{\npcname} \leqslant \frac{|\mathcal{A}_1|\ldots|\mathcal{A}_K|}{2}\alpha_\epsilon$ and $\Lambda_{\rnpcname} \leqslant \alpha_\epsilon$, where $\alpha_\epsilon := \max\{1-e^{-K\epsilon}, e^{K\epsilon}-1\}$.
    Moreover, there exist instances where both inequalities hold as equalities.
\end{theorem}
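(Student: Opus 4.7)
The plan is to translate the $\epsilon$-DP hypothesis into a uniform multiplicative sandwich on joint attribute probabilities, and then insert that sandwich into the bounds already established in Theorem~\ref{thm:npc_perturbation} and Lemma~\ref{thm:rnpc_perturbation}. First, because each $f_{\theta_k}(X)_{a_k}$ is a $[0,1]$-bounded post-processing of an $\epsilon$-DP mechanism, the expected output obeys $\mathbb{P}_{\theta_k}(A_k = a_k \mid X)/\mathbb{P}_{\theta_k}(A_k = a_k \mid \tilde{X}) \in [e^{-\epsilon}, e^{\epsilon}]$ whenever $\tilde{X} \in \mathbb{B}_p(X, \ell)$. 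Under Assumption~\ref{assump:complete} the joint factorizes as $\prod_{k=1}^K \mathbb{P}_{\theta_k}(A_k \mid X)$, so multiplying the $K$ per-attribute sandwiches gives $\mathbb{P}_\theta(A_{1:K} = a_{1:K} \mid X)/\mathbb{P}_\theta(A_{1:K} = a_{1:K} \mid \tilde{X}) \in [e^{-K\epsilon}, e^{K\epsilon}]$, and summing numerator and denominator termwise passes this sandwich on to every event $E \subseteq \Omega$.

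Specializing to $E = \mathcal{N}(\tilde{y}, r)$, this event-level sandwich bounds the integrand of Lemma~\ref{thm:rnpc_perturbation} by $\max\{1 - e^{-K\epsilon},\, e^{K\epsilon} - 1\} = \alpha_\epsilon$ uniformly in $\tilde{y}$, $X$, and $\tilde{X}$, so that after the outer $\max$ and $\mathbb{E}_X$ one obtains $\Lambda_{\rnpcname} \leqslant \alpha_\epsilon$. For \npcname, I would work termwise on the inner sum in Theorem~\ref{thm:npc_perturbation}: the pointwise sandwich yields $|p - p'| = p\,|1 - p'/p| \leqslant p\,\alpha_\epsilon \leqslant \alpha_\epsilon$ after the crude relaxation $p \leqslant 1$, and summing over the $\prod_k |\mathcal{A}_k|$ joint cells gives $\Lambda_{\npcname} \leqslant \frac{|\mathcal{A}_1|\cdots|\mathcal{A}_K|}{2}\alpha_\epsilon$.

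The main obstacle is the simultaneous-equality claim. Saturating the \rnpcname~bound requires the neighborhood ratio to attain the extreme value $e^{K\epsilon}$ or $e^{-K\epsilon}$, whereas saturating the \npcname~bound requires the per-cell inequality $|p - p'| \leqslant \alpha_\epsilon$ to be tight on every one of the $\prod_k |\mathcal{A}_k|$ cells, and the latter is in tension with the normalization $\sum_{a_{1:K}} \mathbb{P}_\theta(a_{1:K}\mid X) = 1$ whenever more than one cell carries nontrivial mass. To resolve this, I would construct a minimal instance with binary attributes and a per-coordinate randomized-response mechanism that saturates the per-coordinate DP ratio, and arrange the joint support so that $\mathcal{N}(\tilde{y}, r)$ sweeps out the entire support, forcing the event-level ratio to coincide with the pointwise ratio and making both bounds attained. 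Checking that Assumption~\ref{assump:complete}, $\epsilon$-DP of each branch, and the probability normalization all hold simultaneously in this construction is where the bookkeeping will be the subtlest.
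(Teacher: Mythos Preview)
Your derivation of the two inequalities matches the paper's proof essentially step for step: it too invokes the expected-output stability property of $\epsilon$-DP to get the per-attribute ratio bound $e^{-\epsilon}\leqslant \mathbb{P}_{\theta_k}(A_k\mid X)/\mathbb{P}_{\theta_k}(A_k\mid \tilde{X})\leqslant e^{\epsilon}$, multiplies over $k$ under Assumption~\ref{assump:complete} to obtain the $e^{\pm K\epsilon}$ sandwich on the joint, sums over cells in $\mathcal{N}(\tilde{y},r)$ to transfer the sandwich to the neighborhood event, and then reads off $\Lambda_{\rnpcname}\leqslant\alpha_\epsilon$ from Lemma~\ref{thm:rnpc_perturbation} and $\Lambda_{\npcname}\leqslant\tfrac{1}{2}\prod_k|\mathcal{A}_k|\,\alpha_\epsilon$ from Theorem~\ref{thm:npc_perturbation} via the same crude $p\leqslant 1$ relaxation you use.

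For the simultaneous-tightness claim the paper takes a different and more direct route than your mechanism-level plan. Rather than building a randomized-response recognizer and verifying $\epsilon$-DP and Assumption~\ref{assump:complete} from first principles, it simply \emph{posits} joint probability vectors $\mathbb{P}_\theta(\cdot\mid X)$ and $\mathbb{P}_\theta(\cdot\mid\tilde{X})$ with the desired absolute shifts: one neighborhood $\mathcal{N}(\tilde{y},r)$ of $2n{+}1$ nodes carries all mass under $X$, with $n$ nodes increased and $n{+}1$ nodes decreased by exactly $\alpha_\epsilon$ under the attack (net neighborhood change $-\alpha_\epsilon$), and the $2m{+}1$ complement nodes treated symmetrically (net $+\alpha_\epsilon$). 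This forces $|\mathbb{P}_\theta(a_{1:K}\mid X)-\mathbb{P}_\theta(a_{1:K}\mid\tilde{X})|=\alpha_\epsilon$ on every cell and $|1-\mathbb{P}_\theta(\mathcal{N}(\tilde{y},r)\mid\tilde{X})/\mathbb{P}_\theta(\mathcal{N}(\tilde{y},r)\mid X)|=\alpha_\epsilon$, saturating both bounds at once. Your worry about the normalization tension is legitimate and the paper does not address it: its instance is stated at the level of the probability vectors, without exhibiting a recognizer that actually satisfies $\epsilon$-DP and the factorization of Assumption~\ref{assump:complete}. Your mechanism-based route would be more honest about the hypotheses, at the cost of exactly the bookkeeping you anticipate; the paper's shortcut avoids that bookkeeping by not checking those hypotheses explicitly.
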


\begin{proof}
    \textbf{Firstly}, we aim to prove that, under the given conditions, the following two statements hold for any $\tilde{X}\in\mathbb{B}_p(X, \ell)$, any $y\in\mathcal{Y}$, and any $a_{1:K}\in\mathcal{A}_1\times\ldots\times \mathcal{A}_K$:
    {
    \scriptsize
    \begin{align}
    \left| \mathbb{P}_{\theta}(A_{1:K}=a_{1:K}\mid \tilde{X})-\mathbb{P}_{\theta}(A_{1:K}=a_{1:K}\mid X) \right| &\leqslant \left| \frac{\mathbb{P}_{\theta}(A_{1:K}=a_{1:K}\mid \tilde{X})}{\mathbb{P}_{\theta}(A_{1:K}=a_{1:K}\mid X)}-1 \right| \leqslant \alpha_\epsilon, \label{eq:statement1}\\
    \left| \mathbb{P}_{\theta}(A_{1:K}\in \mathcal{N}(\tilde{y}, r)\mid \tilde{X})-\mathbb{P}_{\theta}(A_{1:K}\in \mathcal{N}(\tilde{y}, r)\mid X) \right| &\leqslant \left| \frac{\mathbb{P}_{\theta}(A_{1:K}\in \mathcal{N}(\tilde{y}, r)\mid \tilde{X})}{\mathbb{P}_{\theta}(A_{1:K}\in \mathcal{N}(\tilde{y}, r)\mid X)}-1 \right| \leqslant \alpha_\epsilon. \label{eq:statement2}
    \end{align}
    }

    % \resizebox{\textwidth}{!}{%
    % \begin{minipage}{\textwidth}
    % \begin{align}
    % \left| \mathbb{P}_{\theta}(A_{1:K}=a_{1:K}\mid \tilde{X})-\mathbb{P}_{\theta}(A_{1:K}=a_{1:K}\mid X) \right| &\leqslant \left| \frac{\mathbb{P}_{\theta}(A_{1:K}=a_{1:K}\mid \tilde{X})}{\mathbb{P}_{\theta}(A_{1:K}=a_{1:K}\mid X)}-1 \right| \leqslant \alpha_\epsilon, \label{eq:statement1}\\
    % \left| \mathbb{P}_{\theta}(A_{1:K}\in \mathcal{N}(\tilde{y}, r)\mid \tilde{X})-\mathbb{P}_{\theta}(A_{1:K}\in \mathcal{N}(\tilde{y}, r)\mid X) \right| &\leqslant \left| \frac{\mathbb{P}_{\theta}(A_{1:K}\in \mathcal{N}(\tilde{y}, r)\mid \tilde{X})}{\mathbb{P}_{\theta}(A_{1:K}\in \mathcal{N}(\tilde{y}, r)\mid X)}-1 \right| \leqslant \alpha_\epsilon. \label{eq:statement2}
    % \end{align}
    % \end{minipage}
    % }

    Given that the attribute recognition model satisfies $\epsilon$-DP and using the expected output stability property of DP \cite{dp_robustness},
    {
    \scriptsize
    \begin{align*}
    &\mathbb{P}_{\theta_k}(A_k=a_k \mid X)\leqslant e^\epsilon \mathbb{P}_{\theta_k}(A_k=a_k \mid \tilde{X}).
    \end{align*}
    }

    % \resizebox{\textwidth}{!}{%
    % \begin{minipage}{\textwidth}
    % \begin{align*}
    % &\mathbb{P}_{\theta_k}(A_k=a_k \mid X)\leqslant e^\epsilon \mathbb{P}_{\theta_k}(A_k=a_k \mid \tilde{X}).
    % \end{align*}
    % \end{minipage}
    % }

    Building on this, and under Assumption \ref{assump:complete}, the joint probability over all attributes $A_{1:K}$ is bounded by,
    {
    \scriptsize
    \begin{align*}
    &\mathbb{P}_{\theta}(A_{1:K}=a_{1:K} \mid X) = \prod_k \mathbb{P}_{\theta_k}(A_k=a_k \mid X) \leqslant (e^\epsilon)^K \prod_k \mathbb{P}_{\theta_k}(A_k=a_k \mid \tilde{X}) = e^{K\epsilon}\mathbb{P}_{\theta}(A_{1:K}=a_{1:K} \mid \tilde{X}).
    \end{align*}
    }

    % \resizebox{\textwidth}{!}{%
    % \begin{minipage}{\textwidth}
    % \begin{align*}
    % &\mathbb{P}_{\theta}(A_{1:K}=a_{1:K} \mid X) = \prod_k \mathbb{P}_{\theta_k}(A_k=a_k \mid X) \leqslant (e^\epsilon)^K \prod_k \mathbb{P}_{\theta_k}(A_k=a_k \mid \tilde{X}) = e^{K\epsilon}\mathbb{P}_{\theta}(A_{1:K}=a_{1:K} \mid \tilde{X}).
    % \end{align*}
    % \end{minipage}
    % }

    Consequently, the probabilities within the neighborhood of any $\tilde{y}\in\mathcal{Y}$ are bounded by,

    \resizebox{\textwidth}{!}{%
    \begin{minipage}{\textwidth}
    \begin{align*}
    &\mathbb{P}_{\theta}(A_{1:K}\in\mathcal{N}(\tilde{y}, r) \mid X) = \sum_{a_{1:K}\in \mathcal{N} (\tilde{y}, r)} \mathbb{P}_{\theta}(A_{1:K}=a_{1:K} \mid X) \leqslant \sum_{a_{1:K}\in \mathcal{N} (\tilde{y}, r)} e^{K\epsilon} \mathbb{P}_{\theta}(A_{1:K}=a_{1:K} \mid \tilde{X}) =e^{K\epsilon} \mathbb{P}_{\theta}(A_{1:K}\in\mathcal{N}(\tilde{y}, r) \mid \tilde{X}).
    \end{align*}
    \end{minipage}
    }

    Therefore, $\frac{\mathbb{P}_{\theta}(A_{1:K}\in\mathcal{N}(\tilde{y}, r) \mid \tilde{X})}{\mathbb{P}_{\theta}(A_{1:K}\in\mathcal{N}(\tilde{y}, r) \mid X)} - 1 \geqslant e^{-K\epsilon}-1$, and similarly, $\frac{\mathbb{P}_{\theta}(A_{1:K}\in\mathcal{N}(\tilde{y}, r) \mid \tilde{X})}{\mathbb{P}_{\theta}(A_{1:K}\in\mathcal{N}(\tilde{y}, r) \mid X)} - 1 \leqslant e^{K\epsilon}-1$.
    
    By combining these two inequalities, we obtain,
    {
    \scriptsize
    \begin{align*}
    &\left| \frac{\mathbb{P}_{\theta}(A_{1:K}\in\mathcal{N}(\tilde{y}, r) \mid \tilde{X})}{\mathbb{P}_{\theta}(A_{1:K}\in\mathcal{N}(\tilde{y}, r) \mid X)} - 1 \right| \leqslant \alpha_\epsilon,
    \end{align*}
    }
    % \resizebox{\textwidth}{!}{%
    % \begin{minipage}{\textwidth}
    % \begin{align*}
    % &\left| \frac{\mathbb{P}_{\theta}(A_{1:K}\in\mathcal{N}(\tilde{y}, r) \mid \tilde{X})}{\mathbb{P}_{\theta}(A_{1:K}\in\mathcal{N}(\tilde{y}, r) \mid X)} - 1 \right| \leqslant \alpha_\epsilon,
    % \end{align*}
    % \end{minipage}
    % }
    where $\alpha_\epsilon := \max\{1-e^{-K\epsilon}, e^{K\epsilon}-1\}$. 
    
    On the other hand, the following holds,

    \resizebox{\textwidth}{!}{%
    \begin{minipage}{\textwidth}
    \begin{align*}
    &\left| \frac{\mathbb{P}_{\theta}(A_{1:K}\in\mathcal{N}(\tilde{y}, r) \mid \tilde{X})}{\mathbb{P}_{\theta}(A_{1:K}\in\mathcal{N}(\tilde{y}, r) \mid X)} - 1 \right| = \frac{\left| \mathbb{P}_{\theta}(A_{1:K}\in\mathcal{N}(\tilde{y}, r) \mid \tilde{X})-\mathbb{P}_{\theta}(A_{1:K}\in\mathcal{N}(\tilde{y}, r) \mid X) \right|}{\mathbb{P}_{\theta}(A_{1:K}\in\mathcal{N}(\tilde{y}, r) \mid X)} \geqslant \left| \mathbb{P}_{\theta}(A_{1:K}\in\mathcal{N}(\tilde{y}, r) \mid \tilde{X})-\mathbb{P}_{\theta}(A_{1:K}\in\mathcal{N}(\tilde{y}, r) \mid X) \right|.
    \end{align*}
    \end{minipage}
    }

    Hence, Equation (\ref{eq:statement2}) is satisfied. 
    Following a similar approach, Equation (\ref{eq:statement1}) can also be proven to hold.    

    \textbf{Secondly}, we aim to prove that, when Equation (\ref{eq:statement1}) and Equation (\ref{eq:statement2}) are satisfied, $\Lambda_{\npcname} \leqslant \frac{|\mathcal{A}_1|\ldots|\mathcal{A}_K|}{2}\alpha_\epsilon$ and $\Lambda_{\rnpcname} \leqslant \alpha_\epsilon$. In particular, the bound for $\Lambda_{\rnpcname}$ is apparent based on Equation (\ref{eq:statement2}).
    Besides, for $\Lambda_{\npcname}$, the following holds,
    {
    \scriptsize
    \begin{align*}
    \Lambda_{\npcname} &= \mathbb{E}_X\left[\max_{\tilde{X}\in \mathbb{B}_p(X, \ell)}~\left\{\frac{1}{2}\sum_{a_{1:K}} \left| \mathbb{P}_{\theta}(A_{1:K}=a_{1:K}\mid X) - \mathbb{P}_{\theta}(A_{1:K}=a_{1:K}\mid \tilde{X})\right| \right\}\right] \leqslant \frac{|\mathcal{A}_1|\ldots|\mathcal{A}_K|}{2}\alpha_\epsilon.
    \end{align*}
    }

    % \resizebox{\textwidth}{!}{%
    % \begin{minipage}{\textwidth}
    % \begin{align*}
    % \Lambda_{\npcname} &= \mathbb{E}_X\left[\max_{\tilde{X}\in \mathbb{B}_p(X, \ell)}~\left\{\frac{1}{2}\sum_{a_{1:K}} \left| \mathbb{P}_{\theta}(A_{1:K}=a_{1:K}\mid X) - \mathbb{P}_{\theta}(A_{1:K}=a_{1:K}\mid \tilde{X})\right| \right\}\right] \leqslant \frac{|\mathcal{A}_1|\ldots|\mathcal{A}_K|}{2}\alpha_\epsilon.
    % \end{align*}
    % \end{minipage}
    % }

    Therefore, we have proven that $\Lambda_{\npcname} \leqslant \frac{|\mathcal{A}_1|\ldots|\mathcal{A}_K|}{2}\alpha_\epsilon$ and $\Lambda_{\rnpcname} \leqslant \alpha_\epsilon$ hold under the given conditions.

    \textbf{Finally}, we aim to provide an instance showing that the bounds for $\Lambda_{\npcname}$ and $\Lambda_{\rnpcname}$ can be simultaneously achieved.
    
    Suppose a case where $\forall X\in\mathcal{X}$, there exists $\tilde{y}\in\mathcal{Y}$ such that $\mathbb{P}_\theta(A_{1:K}\in\mathcal{N}(\tilde{y},r)\mid X) = 1$. There are $2n+1$ nodes in $\mathcal{N}(\tilde{y},r)$ and the probabilities of $n$ of them are increased after attack, in particular, $\mathbb{P}_\theta(A_{1:K}=a_{1:K}\mid \tilde{X}) = \mathbb{P}_\theta(A_{1:K}=a_{1:K}\mid X) + \alpha_\epsilon$.
    In contrast, the remaining $n+1$ of them are decreased, in particular, $\mathbb{P}_\theta(A_{1:K}=a_{1:K}\mid \tilde{X}) = \mathbb{P}_\theta(A_{1:K}=a_{1:K}\mid X) - \alpha_\epsilon$. Overall, $\mathbb{P}_\theta(A_{1:K}\in\mathcal{N}(\tilde{y},r)\mid \tilde{X}) = \mathbb{P}_\theta(A_{1:K}\in\mathcal{N}(\tilde{y},r)\mid X) - \alpha_\epsilon = 1-\alpha_\epsilon$.
    
    On the other hand, suppose there are $2m+1$ nodes in the complement set $\Omega\backslash\mathcal{N}(\tilde{y},r)$. The probabilities of $m$ of them are decreased after attack, in particular, $\mathbb{P}_\theta(A_{1:K}=a_{1:K}\mid \tilde{X}) = \mathbb{P}_\theta(A_{1:K}=a_{1:K}\mid X) - \alpha_\epsilon$.
    In contrast, the remaining $m+1$ of them are increased, in particular, $\mathbb{P}_\theta(A_{1:K}=a_{1:K}\mid \tilde{X}) = \mathbb{P}_\theta(A_{1:K}=a_{1:K}\mid X) + \alpha_\epsilon$. Overall, $\mathbb{P}_\theta(A_{1:K}\in\Omega\backslash\mathcal{N}(\tilde{y},r)\mid \tilde{X}) = \mathbb{P}_\theta(A_{1:K}\in\Omega\backslash\mathcal{N}(\tilde{y},r)\mid X) + \alpha_\epsilon = \alpha_\epsilon$.
    
    In the above case, it is easy to show that $\Lambda_{\rnpcname}=\alpha_\epsilon$. In addition, we notice that $\forall a_{1:K},~|\mathbb{P}_\theta(A_{1:K}=a_{1:K}\mid X) - \mathbb{P}_\theta(A_{1:K}=a_{1:K}\mid \tilde{X})| = \alpha_\epsilon$. Thus, $\Lambda_{\npcname} = \frac{|\mathcal{A}_1|\ldots|\mathcal{A}_K|}{2}\alpha_\epsilon$.
    Therefore, in the case constructed above, the bounds for $\Lambda_{\npcname}$ and $\Lambda_{\rnpcname}$ are simultaneously achieved.
\end{proof}

%%% THEOREM %%%
\begin{theorem} [Direct comparison in adversarial robustness]
    Assume that there exists \(c \in (0,1)\) such that for all \(X \in \mathcal{X}\) and \(\tilde{y} \in \mathcal{Y}\), \(\mathbb{P}_\theta(A_{1:K} \in \mathcal{N}(\tilde{y}, r) \mid X) \geqslant c\). Then, the following inequality holds: $\Lambda_{\rnpcname} \leqslant \frac{1}{c}\Lambda_{\npcname}$.

    \label{thm:direct_comparison}
\end{theorem}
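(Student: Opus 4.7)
The plan is to work pointwise in $X$ and $\tilde{X}$, fixing a class $\tilde{y} \in \mathcal{Y}$, and denoting $p_{\tilde{y}}(X) := \mathbb{P}_\theta(A_{1:K}\in\mathcal{N}(\tilde{y},r)\mid X)$ and $p_{\tilde{y}}(\tilde{X}) := \mathbb{P}_\theta(A_{1:K}\in\mathcal{N}(\tilde{y},r)\mid \tilde{X})$. The proof then reduces to two short steps plus standard monotonicity of $\max$ and $\mathbb{E}$.

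First, I would use the uniform lower bound hypothesis to control the denominator inside the integrand of $\Lambda_{\rnpcname}$ (as defined after Lemma~\ref{thm:rnpc_perturbation}). Specifically, since $p_{\tilde{y}}(X) \geqslant c$ for every $X$ and every $\tilde{y}$,
\[
\left|1 - \frac{p_{\tilde{y}}(\tilde{X})}{p_{\tilde{y}}(X)}\right| \;=\; \frac{\bigl|p_{\tilde{y}}(X)-p_{\tilde{y}}(\tilde{X})\bigr|}{p_{\tilde{y}}(X)} \;\leqslant\; \frac{1}{c}\bigl|p_{\tilde{y}}(X)-p_{\tilde{y}}(\tilde{X})\bigr|.
\]

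Second, I would relate the numerator to the TV distance between the full joint attribute distributions. Using the standard characterization $d_{\mathrm{TV}}(\mu,\nu) = \sup_{A}|\mu(A)-\nu(A)|$ with the event $A = \mathcal{N}(\tilde{y},r) \subseteq \mathcal{A}_1\times\cdots\times\mathcal{A}_K$, we obtain
\[
\bigl|p_{\tilde{y}}(X)-p_{\tilde{y}}(\tilde{X})\bigr| \;\leqslant\; d_{\mathrm{TV}}\!\bigl(\mathbb{P}_\theta(A_{1:K}\mid X),\, \mathbb{P}_\theta(A_{1:K}\mid \tilde{X})\bigr).
\]
Crucially, the right-hand side no longer depends on $\tilde{y}$, so I can freely take the maximum over $\tilde{y} \in \mathcal{Y}$ on the left. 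Combining the two displayed inequalities, then taking $\max_{\tilde{X}\in \mathbb{B}_p(X,\ell)}$ and $\mathbb{E}_X$ on both sides, and recognizing the right-hand side as $\tfrac{1}{c}\Lambda_{\npcname}$ (the first bound appearing in Theorem~\ref{thm:npc_perturbation}), yields the claim.

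There is no substantive obstacle in this argument; it is essentially a two-line chain. The only subtlety is conceptual rather than technical: one must recognize that $\Lambda_{\npcname}$ is defined via the TV distance on the full joint $\mathbb{P}_\theta(A_{1:K}\mid\cdot)$, which by its supremum-over-events form automatically dominates the probability gap on any single event $\mathcal{N}(\tilde{y},r)$, uniformly in $\tilde{y}$. This is what allows the max over $\tilde{y}$ inside $\Lambda_{\rnpcname}$ to be absorbed without losing a factor of $|\mathcal{Y}|$, and makes the $1/c$ factor the only price paid. Note also that the bound is tight in the sense that it holds for every $\tilde{X}$, so no union bounds or additional assumptions (such as Assumption~\ref{assump:complete} or the DP condition used in Theorem~\ref{thm:comparison}) are needed.
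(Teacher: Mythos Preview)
Your proposal is correct and follows essentially the same approach as the paper: bound the denominator using the uniform lower bound $c$, then bound the resulting probability gap on the event $\mathcal{N}(\tilde{y},r)$ by the full TV distance via its supremum-over-events characterization, and finally pass the $\max_{\tilde{y}}$, $\max_{\tilde{X}}$, and $\mathbb{E}_X$ through. If anything, you are slightly more explicit than the paper about the $\max_{\tilde{X}}$ step and the reason the $\max_{\tilde{y}}$ can be absorbed without an extra factor.
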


\begin{proof}
    By the definition of $\Lambda_{\rnpcname}$ and the given conditions, the following holds,
    {
    \scriptsize
    \begin{align*}
    \Lambda_{\rnpcname} &= \mathbb{E}_X\left[ \max_{\tilde{y}} \frac{1}{\mathbb{P}_{\theta}(A_{1:K}\in \mathcal{N}(\tilde{y}, r)\mid X)}\left| \mathbb{P}_{\theta}(A_{1:K}\in \mathcal{N}(\tilde{y}, r)\mid X)-\mathbb{P}_{\theta}(A_{1:K}\in \mathcal{N}(\tilde{y}, r)\mid \tilde{X}) \right| \right] \\
    &\leqslant \frac{1}{c}\mathbb{E}_X\left[ \max_{\tilde{y}} \left| \mathbb{P}_{\theta}(A_{1:K}\in \mathcal{N}(\tilde{y}, r)\mid X)-\mathbb{P}_{\theta}(A_{1:K}\in \mathcal{N}(\tilde{y}, r)\mid \tilde{X}) \right| \right] \\
    &\leqslant \frac{1}{c}\mathbb{E}_X\left[ d_{\mathrm{TV}}\left( \mathbb{P}_{\theta}(A_{1:K}\mid X),\mathbb{P}_{\theta}(A_{1:K}\mid \tilde{X}) \right) \right] = \frac{1}{c}\Lambda_{\npcname}.\qedhere
    \end{align*}
    }
    
    % \resizebox{\textwidth}{!}{%
    % \begin{minipage}{\textwidth}
    % \begin{align*}
    % \Lambda_{\rnpcname} &= \mathbb{E}_X\left[ \max_{\tilde{y}} \frac{1}{\mathbb{P}_{\theta}(A_{1:K}\in \mathcal{N}(\tilde{y}, r)\mid X)}\left| \mathbb{P}_{\theta}(A_{1:K}\in \mathcal{N}(\tilde{y}, r)\mid X)-\mathbb{P}_{\theta}(A_{1:K}\in \mathcal{N}(\tilde{y}, r)\mid \tilde{X}) \right| \right] \\
    % &\leqslant \frac{1}{c}\mathbb{E}_X\left[ \max_{\tilde{y}} \left| \mathbb{P}_{\theta}(A_{1:K}\in \mathcal{N}(\tilde{y}, r)\mid X)-\mathbb{P}_{\theta}(A_{1:K}\in \mathcal{N}(\tilde{y}, r)\mid \tilde{X}) \right| \right] \\
    % &\leqslant \frac{1}{c}\mathbb{E}_X\left[ d_{\mathrm{TV}}\left( \mathbb{P}_{\theta}(A_{1:K}\mid X),\mathbb{P}_{\theta}(A_{1:K}\mid \tilde{X}) \right) \right] \\
    % &= \frac{1}{c}\Lambda_{\npcname}.\qedhere
    % \end{align*}
    % \end{minipage}
    % }
\end{proof}

Compared to Theorem \ref{thm:comparison}, which compares the upper bounds for \(\Lambda_{\npcname}\) and \(\Lambda_{\rnpcname}\), Theorem \ref{thm:direct_comparison} provides a more direct relationship between them. Specifically, Theorem \ref{thm:direct_comparison} demonstrates that \(\Lambda_{\rnpcname}\) cannot exceed a fixed multiple of \(\Lambda_{\npcname}\), with the multiplier inversely proportional to the lower bound \(c\) of the neighborhood probabilities.

\subsection{Benign task performance of \rnpcname s} \label{app:benign}
\begin{definition}
    The \textit{prediction error} of \rnpcname~is defined as the expected TV distance between the predicted distribution and the ground-truth distribution, \ie, $\varepsilon_{\theta, w}^{\rnpcname} := \mathbb{E}_X\left[ d_{\mathrm{TV}}\left( \hat{\Phi}_{\theta,w}(Y\mid X), \mathbb{P}^*(Y\mid X) \right) \right].$
\end{definition}
Note that the definition of \textit{prediction error} is different from that of \textit{estimation error}. The latter is defined as the expected TV distance between the predicted distribution and the \textbf{optimal distribution}.

\begin{theorem}[Prediction error of \rnpcname]
    The prediction error of \rnpcname~is bounded as follows,
    {
    \small
    \begin{align*}
        \varepsilon_{\theta, w}^{\rnpcname} \leqslant 
        &\mathbb{E}_X\left[ \min \left\{ \max_{\tilde{y}} \left| \frac{\mathbb{P}_\theta(A_{1:K}\in \mathcal{N}(\tilde{y}, r) \mid X)}{\mathbb{P}^*(A_{1:K}\in \mathcal{N}(\tilde{y}, r) \mid X)}-1 \right|, 
        \max_{\tilde{y}} \left| \frac{\mathbb{P}^*(A_{1:K}\in \mathcal{N}(\tilde{y}, r) \mid X)}{\mathbb{P}_\theta(A_{1:K}\in \mathcal{N}(\tilde{y}, r) \mid X)}-1 \right| \right\} \right] \\
        &+ \frac{2}{\gamma} d_{\mathrm{TV}}\left(\mathbb{P}_w(Y, A_{1:K}), \mathbb{P}^*(Y, A_{1:K})\right) 
        + \mathbb{E}_X\left[ \max_{\tilde{y}} d_{\mathrm{TV}}\left( \bar{\mathbb{P}}^*(Y\mid A_{1:K}\in V_{\tilde{y}}), \mathbb{P}^*(Y\mid X) \right) \right],
    \end{align*}
    }
    where $\bar{\mathbb{P}}^*(Y\mid A_{1:K}\in V_{\tilde{y}}) := \frac{1}{|V_{\tilde{y}}|}\sum_{a_{1:K}\in V_{\tilde{y}}}\mathbb{P}^*(Y\mid A_{1:K}=a_{1:K})$ represents the average ground-truth conditional distribution of $Y$ given $A_{1:K}\in V_{\tilde{y}}$.
    \label{thm:general_error}
\end{theorem}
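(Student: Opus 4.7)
The plan is to bound the prediction error by triangle inequality through the optimal \rnpcname, decomposing it into the estimation error and the trade-off term, and then invoking the two previously established theorems. Specifically, I would start from
\begin{align*}
d_{\mathrm{TV}}\bigl(\hat{\Phi}_{\theta,w}(Y\mid X),\, \mathbb{P}^*(Y\mid X)\bigr)
\leqslant d_{\mathrm{TV}}\bigl(\hat{\Phi}_{\theta,w}(Y\mid X),\, \hat{\Phi}^*(Y\mid X)\bigr)
+ d_{\mathrm{TV}}\bigl(\hat{\Phi}^*(Y\mid X),\, \mathbb{P}^*(Y\mid X)\bigr),
\end{align*}
and take expectation over $X$. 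The first summand is exactly the estimation error $\hat{\varepsilon}_{\theta,w}^{\rnpcname}$ from Theorem~\ref{thm:compositional_error}, and the second summand is exactly the quantity bounded in Theorem~\ref{thm:tradeoff}.

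Applying Theorem~\ref{thm:compositional_error} to the first summand contributes the term with $\bigl|1-\mathbb{P}_\theta(A_{1:K}\in\mathcal{N}(\tilde{y},r)\mid X)/\mathbb{P}^*(A_{1:K}\in\mathcal{N}(\tilde{y},r)\mid X)\bigr|$ together with the probabilistic-circuit term $\frac{2}{\gamma}\,d_{\mathrm{TV}}(\mathbb{P}_w(Y,A_{1:K}),\mathbb{P}^*(Y,A_{1:K}))$. To obtain the $\min$ appearing in the statement, I would observe that $d_{\mathrm{TV}}$ is symmetric in its two arguments, so the same compositional-error derivation can be repeated with the roles of $\mathbb{P}_\theta$ and $\mathbb{P}^*$ swapped in the intermediate steps of that proof. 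This yields a twin bound with $\bigl|\mathbb{P}^*(A_{1:K}\in\mathcal{N}(\tilde{y},r)\mid X)/\mathbb{P}_\theta(A_{1:K}\in\mathcal{N}(\tilde{y},r)\mid X)-1\bigr|$ in place of the original ratio, and taking the pointwise minimum of the two bounds (which remains a valid bound because each individually upper bounds the same quantity) produces the $\min\{\cdot,\cdot\}$ form in the theorem. Finally, applying Theorem~\ref{thm:tradeoff} to the second summand contributes the $\max_{\tilde{y}} d_{\mathrm{TV}}\bigl(\bar{\mathbb{P}}^*(Y\mid A_{1:K}\in V_{\tilde{y}}),\mathbb{P}^*(Y\mid X)\bigr)$ term, completing the three-term bound.

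The main obstacle I anticipate is the symmetric version of Theorem~\ref{thm:compositional_error}. The original proof in Appendix~\ref{app:proof} normalizes differences by $Z_\theta(X)$ (the partition function tied to $\mathbb{P}_\theta$), which is why the ratio in the statement has $\mathbb{P}^*$ in the denominator. To legitimately obtain the swapped ratio, I would need to redo the partition-function splitting of Lemma~\ref{thm:rnpc_perturbation}-style, now dividing and multiplying by $Z^*(X)$ instead, and track that the circuit-error term $\frac{2}{\gamma}d_{\mathrm{TV}}(\mathbb{P}_w,\mathbb{P}^*)$ is unaffected (it arises from bounding conditional probabilities by marginals over $V$, and the $1/\gamma$ factor is symmetric in the two distributions provided $\mathbb{P}^*$ also assigns mass at least $\gamma$ to each node of $V$, which is the defining property of $V$). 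Once this symmetric estimate is in hand, combining it with the original one via pointwise minimization and then invoking Theorem~\ref{thm:tradeoff} assembles the claimed bound; the remaining steps are routine applications of triangle inequality and linearity of expectation.
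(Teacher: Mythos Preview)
Your proposal is correct and mirrors the paper's proof exactly: the same triangle-inequality split through $\hat{\Phi}^*$, the same partition-function manipulation yielding the ratio term plus the $\tfrac{2}{\gamma}\,d_{\mathrm{TV}}$ circuit term (with the swapped-ratio variant obtained ``similarly,'' just as you outline), and the same convexity bound for the trade-off piece. The only caveat is organizational: in the paper's logical order, Theorems~\ref{thm:compositional_error} and~\ref{thm:tradeoff} are \emph{corollaries} of Theorem~\ref{thm:general_error} (their appendix proofs read ``See proof for Theorem~\ref{thm:general_error}''), so your invocations of them should be understood as inlining their arguments rather than citing independently established results---which is exactly what the paper does, just packaged in the opposite direction.
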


\begin{proof}
    Define $\Phi^*(Y\mid X) := \sum_{\tilde{y}}\mathbb{P}^*(A_{1:K}\in \mathcal{N}(\tilde{y},r)\mid X)\cdot \sum_{a_{1:K}\in V_{\tilde{y}}}\mathbb{P}^*(Y\mid A_{1:K}=a_{1:K})$ and $Z^*(X) := \sum_{\tilde{y}} \mathbb{P}^*(A_{1:K}\in\mathcal{N}(\tilde{y},r)\mid X)\cdot |V_{\tilde{y}}|$.

    By applying the triangle inequality, the following holds,
    {
    \scriptsize
    \begin{equation}
    \varepsilon_{\theta, w}^{\rnpcname} \leqslant 
    \mathbb{E}_X\left[ d_{\mathrm{TV}}\left( \frac{\Phi_{\theta,w}(Y\mid X)}{Z_\theta(X)}, \frac{\Phi^*(Y\mid X)}{Z^*(X)} \right) \right] + \mathbb{E}_X\left[ d_{\mathrm{TV}}\left( \frac{\Phi^*(Y\mid X)}{Z^*(X)}, \mathbb{P}^*(Y\mid X) \right) \right]. \label{general_error}
    \end{equation}
    }
    
    % \resizebox{\textwidth}{!}{%
    % \begin{minipage}{\textwidth}
    % \begin{equation}
    % \varepsilon_{\theta, w}^{\rnpcname} \leqslant 
    % \mathbb{E}_X\left[ d_{\mathrm{TV}}\left( \frac{\Phi_{\theta,w}(Y\mid X)}{Z_\theta(X)}, \frac{\Phi^*(Y\mid X)}{Z^*(X)} \right) \right] + \mathbb{E}_X\left[ d_{\mathrm{TV}}\left( \frac{\Phi^*(Y\mid X)}{Z^*(X)}, \mathbb{P}^*(Y\mid X) \right) \right]. \label{general_error}
    % \end{equation}
    % \end{minipage}
    % }

    \textbf{For the first term in Equation (\ref{general_error}):}
    {
    \scriptsize
    \begin{align*}
    &d_{\mathrm{TV}}\left( \frac{\Phi_{\theta,w}(Y\mid X)}{Z_\theta(X)}, \frac{\Phi^*(Y\mid X)}{Z^*(X)} \right) = \mathbb{E}_X\left[\frac{1}{2} \sum_y\left|\frac{\Phi_{\theta, w}(Y=y \mid X)}{Z_\theta(X)}-\frac{\Phi^*(Y=y \mid X)}{Z^*(X)}\right|\right] \\
    &= \mathbb{E}_X\left[\frac{1}{2} \cdot \frac{1}{Z_\theta(X) \cdot Z^*(X)} \sum_y \left| Z^*(X)\cdot \Phi_{\theta, w}(Y=y \mid X) - Z_\theta(X)\cdot \Phi^*(Y=y \mid X) \right|\right] .
    \end{align*}
    }
    
    % \resizebox{\textwidth}{!}{%
    % \begin{minipage}{\textwidth}
    % \begin{align*}
    % d_{\mathrm{TV}}\left( \frac{\Phi_{\theta,w}(Y\mid X)}{Z_\theta(X)}, \frac{\Phi^*(Y\mid X)}{Z^*(X)} \right) &= \mathbb{E}_X\left[\frac{1}{2} \sum_y\left|\frac{\Phi_{\theta, w}(Y=y \mid X)}{Z_\theta(X)}-\frac{\Phi^*(Y=y \mid X)}{Z^*(X)}\right|\right] \\
    % &= \mathbb{E}_X\left[\frac{1}{2} \cdot \frac{1}{Z_\theta(X) \cdot Z^*(X)} \sum_y \left| Z^*(X)\cdot \Phi_{\theta, w}(Y=y \mid X) - Z_\theta(X)\cdot \Phi^*(Y=y \mid X) \right|\right] .
    % \end{align*}
    % \end{minipage}
    % }

    In particular, for the term $\left| Z^*(X)\cdot \Phi_{\theta, w}(Y=y \mid X) - Z_\theta(X)\cdot \Phi^*(Y=y \mid X) \right|$, we have,
    
    \resizebox{\textwidth}{!}{%
    \begin{minipage}{\textwidth}
    \begin{align*}
    &\left| Z^*(X)\cdot \Phi_{\theta, w}(Y=y \mid X) - Z_\theta(X)\cdot \Phi^*(Y=y \mid X) \right| \\
    &= \left| Z^*(X)\cdot \Phi_{\theta, w}(Y=y \mid X) - Z_\theta(X)\cdot \Phi_{\theta, w}(Y=y \mid X) + Z_\theta(X)\cdot \Phi_{\theta, w}(Y=y \mid X) - Z_\theta(X)\cdot \Phi^*(Y=y \mid X) \right| \\
    &\leqslant \Phi_{\theta, w}(Y=y \mid X)\cdot\left| Z_\theta(X)-Z^*(X) \right| + Z_\theta(X)\cdot\left| \Phi_{\theta, w}(Y=y \mid X)-\Phi^*(Y=y \mid X) \right|.
    \end{align*}
    \end{minipage}
    }

    Consequently, the following holds,
    {
    \scriptsize
    \begin{align*}
    &d_{\mathrm{TV}}\left( \frac{\Phi_{\theta,w}(Y\mid X)}{Z_\theta(X)}, \frac{\Phi^*(Y\mid X)}{Z^*(X)} \right) \\
    &\leqslant \mathbb{E}_X\left[\frac{1}{2} \cdot \frac{1}{Z_\theta(X) \cdot Z^*(X)} \sum_y \Phi_{\theta, w}(Y=y \mid X)\cdot\left| Z_\theta(X)-Z^*(X) \right| + Z_\theta(X)\cdot\left| \Phi_{\theta, w}(Y=y \mid X)-\Phi^*(Y=y \mid X) \right| \right] \\
    &= \mathbb{E}_X\left[ \frac{1}{2}\cdot\frac{1}{Z^*(X)} |Z_\theta(X)-Z^*(X)| \right] + \mathbb{E}_X\left[ \frac{1}{2}\cdot\frac{1}{Z^*(X)}\sum_y|\Phi_{\theta, w}(Y=y \mid X)-\Phi^*(Y=y \mid X)| \right].
    \end{align*}
    }

    % \resizebox{\textwidth}{!}{%
    % \begin{minipage}{\textwidth}
    % \begin{align*}
    % &d_{\mathrm{TV}}\left( \frac{\Phi_{\theta,w}(Y\mid X)}{Z_\theta(X)}, \frac{\Phi^*(Y\mid X)}{Z^*(X)} \right) \\
    % &\leqslant \mathbb{E}_X\left[\frac{1}{2} \cdot \frac{1}{Z_\theta(X) \cdot Z^*(X)} \sum_y \Phi_{\theta, w}(Y=y \mid X)\cdot\left| Z_\theta(X)-Z^*(X) \right| + Z_\theta(X)\cdot\left| \Phi_{\theta, w}(Y=y \mid X)-\Phi^*(Y=y \mid X) \right| \right] \\
    % &= \mathbb{E}_X\left[ \frac{1}{2}\cdot\frac{1}{Z^*(X)} |Z_\theta(X)-Z^*(X)| \right] + \mathbb{E}_X\left[ \frac{1}{2}\cdot\frac{1}{Z^*(X)}\sum_y|\Phi_{\theta, w}(Y=y \mid X)-\Phi^*(Y=y \mid X)| \right].
    % \end{align*}
    % \end{minipage}
    % }
    
    Moreover, we can bound the first term as follows,
    
    \resizebox{\textwidth}{!}{%
    \begin{minipage}{\textwidth}
    \begin{align*}
    \mathbb{E}_X\left[ \frac{1}{2}\cdot\frac{1}{Z^*(X)} |Z_\theta(X)-Z^*(X)| \right] &\leqslant \mathbb{E}_X\left[ \frac{1}{2}\cdot\frac{1}{Z^*(X)} \sum_{\tilde{y}}|\mathbb{P}_\theta(A_{1:K}\in \mathcal{N}(\tilde{y}, r)\mid X)-\mathbb{P}^*(A_{1:K}\in \mathcal{N}(\tilde{y}, r) \mid X)|\cdot|V_{\tilde{y}}| \right]\\
    &\leqslant \mathbb{E}_X\left[ \frac{1}{2}\max_{\tilde{y}} \left| \frac{\mathbb{P}_\theta(A_{1:K}\in \mathcal{N}(\tilde{y}, r) \mid X)}{\mathbb{P}^*(A_{1:K}\in \mathcal{N}(\tilde{y}, r) \mid X)}-1 \right| \right],
    \end{align*}
    \end{minipage}
    }

    and bound the second term as follows,

    \resizebox{\textwidth}{!}{%
    \begin{minipage}{\textwidth}
    \begin{align}
    &|\Phi_{\theta, w}(Y \mid X)-\Phi^*(Y \mid X)| \notag \\
    &= |\sum_{\tilde{y}} [\mathbb{P}_\theta(A_{1:K}\in \mathcal{N}(\tilde{y}, r)\mid X)\cdot\sum_{\tilde{a}_{1:K}\in V_{\tilde{y}}}\mathbb{P}_w(Y\mid A_{1:K}=\tilde{a}_{1:K}) - \mathbb{P}^*(A_{1:K}\in \mathcal{N}(\tilde{y}, r)\mid X)\cdot\sum_{\tilde{a}_{1:K}\in V_{\tilde{y}}}\mathbb{P}_w(Y\mid A_{1:K}=\tilde{a}_{1:K}) \notag \\
    &\quad + \mathbb{P}^*(A_{1:K}\in \mathcal{N}(\tilde{y}, r)\mid X)\cdot\sum_{\tilde{a}_{1:K}\in V_{\tilde{y}}}\mathbb{P}_w(Y\mid A_{1:K}=\tilde{a}_{1:K}) - \mathbb{P}^*(A_{1:K}\in \mathcal{N}(\tilde{y}, r)\mid X)\cdot\sum_{\tilde{a}_{1:K}\in V_{\tilde{y}}}\mathbb{P}^*(Y\mid A_{1:K}=\tilde{a}_{1:K})
    ]| \notag \\
    &\leqslant  \sum_{\tilde{y}} |\mathbb{P}_\theta(A_{1:K}\in \mathcal{N}(\tilde{y}, r)\mid X)-\mathbb{P}^*(A_{1:K}\in \mathcal{N}(\tilde{y}, r)\mid X)|\cdot\sum_{\tilde{a}_{1:K}\in V_{\tilde{y}}}\mathbb{P}_w(Y\mid A_{1:K}=\tilde{a}_{1:K}) \label{eq1_1}\\
    &\quad + \sum_{\tilde{y}} \mathbb{P}^*(A_{1:K}\in \mathcal{N}(\tilde{y}, r)\mid X)\cdot|\sum_{\tilde{a}_{1:K}\in V_{\tilde{y}}}\mathbb{P}_w(Y\mid A_{1:K}=\tilde{a}_{1:K})-\sum_{\tilde{a}_{1:K}\in V_{\tilde{y}}}\mathbb{P}^*(Y\mid A_{1:K}=\tilde{a}_{1:K})|. \label{eq1_2}
    \end{align}
    \end{minipage}
    }

    In particular, the following holds:
    {
    \scriptsize
    \begin{align*}
    \mathbb{E}_X\left[ \frac{1}{2}\cdot\frac{1}{Z^*(X)}\sum_y (Eq. (\ref{eq1_1})) \right] &= \mathbb{E}_X\left[ \frac{1}{2}\cdot\frac{1}{Z^*(X)}\sum_{\tilde{y}} |\mathbb{P}_\theta(A_{1:K}\in \mathcal{N}(\tilde{y}, r)\mid X)-\mathbb{P}^*(A_{1:K}\in \mathcal{N}(\tilde{y}, r)\mid X)|\cdot|V_{\tilde{y}}| \right] \\
    &\leqslant \mathbb{E}_X\left[ \frac{1}{2}\max_{\tilde{y}} \left| \frac{\mathbb{P}_\theta(A_{1:K}\in \mathcal{N}(\tilde{y}, r) \mid X)}{\mathbb{P}^*(A_{1:K}\in \mathcal{N}(\tilde{y}, r) \mid X)}-1 \right| \right],
    \end{align*}
    }
    
    % \resizebox{\textwidth}{!}{%
    % \begin{minipage}{\textwidth}
    % \begin{align*}
    % \mathbb{E}_X\left[ \frac{1}{2}\cdot\frac{1}{Z^*(X)}\sum_y (Eq. (\ref{eq1_1})) \right] &= \mathbb{E}_X\left[ \frac{1}{2}\cdot\frac{1}{Z^*(X)}\sum_{\tilde{y}} |\mathbb{P}_\theta(A_{1:K}\in \mathcal{N}(\tilde{y}, r)\mid X)-\mathbb{P}^*(A_{1:K}\in \mathcal{N}(\tilde{y}, r)\mid X)|\cdot|V_{\tilde{y}}| \right] \\
    % &\leqslant \mathbb{E}_X\left[ \frac{1}{2}\max_{\tilde{y}} \left| \frac{\mathbb{P}_\theta(A_{1:K}\in \mathcal{N}(\tilde{y}, r) \mid X)}{\mathbb{P}^*(A_{1:K}\in \mathcal{N}(\tilde{y}, r) \mid X)}-1 \right| \right],
    % \end{align*}
    % \end{minipage}
    % }

    and

    \resizebox{\textwidth}{!}{%
    \begin{minipage}{\textwidth}
    \begin{align*}
    &\mathbb{E}_X\left[ \frac{1}{2}\cdot\frac{1}{Z^*(X)}\sum_y (Eq. (\ref{eq1_2})) \right] \\
    &\leqslant \mathbb{E}_X\left[ \frac{1}{Z^*(X)}\sum_{\tilde{y}} \mathbb{P}^*(A_{1:K}\in \mathcal{N}(\tilde{y}, r) \mid X)\cdot \frac{1}{2}\sum_y\sum_{\tilde{a}_{1:K}\in V_{\tilde{y}}} |\mathbb{P}_w(Y=y\mid A_{1:K}=\tilde{a}_{1:K})-\mathbb{P}^*(Y=y\mid A_{1:K}=\tilde{a}_{1:K})| \right] \\
    &= \mathbb{E}_X\left[ \frac{1}{Z^*(X)}\sum_{\tilde{y}} \mathbb{P}^*(A_{1:K}\in \mathcal{N}(\tilde{y}, r) \mid X)\cdot \sum_{\tilde{a}_{1:K}\in V_{\tilde{y}}} d_{\mathrm{TV}}\left(\mathbb{P}_w(Y\mid A_{1:K}=\tilde{a}_{1:K}),\mathbb{P}^*(Y\mid A_{1:K}=\tilde{a}_{1:K})  \right)\right] \\
    &\leqslant \mathbb{E}_X\left[ \frac{1}{Z^*(X)}\sum_{\tilde{y}} \mathbb{P}^*(A_{1:K}\in \mathcal{N}(\tilde{y}, r) \mid X)\cdot \sum_{\tilde{a}_{1:K}\in V_{\tilde{y}}} \frac{1}{\mathbb{P}^*(A_{1:K}=\tilde{a}_{1:K})}\cdot\sum_y \left|\mathbb{P}_w(Y=y, A_{1:K}=\tilde{a}_{1:K})-\mathbb{P}^*(Y=y, A_{1:K}=\tilde{a}_{1:K})\right|\right] \\
    &\leqslant \frac{1}{\gamma}\max_{\tilde{y}} \frac{1}{|V_{\tilde{y}}|} \cdot \sum_{\tilde{a}_{1:K}\in V_{\tilde{y}}}\sum_y \left|\mathbb{P}_w(Y=y, A_{1:K}=\tilde{a}_{1:K})-\mathbb{P}^*(Y=y, A_{1:K}=\tilde{a}_{1:K})\right| \\
    &\leqslant \frac{2}{\gamma} d_{\mathrm{TV}}\left(\mathbb{P}_w(Y, A_{1:K}), \mathbb{P}^*(Y, A_{1:K})\right).
    \end{align*}
    \end{minipage}
    }

    Within the above derivation, we utilize two facts. The first one is,
    
    \resizebox{\textwidth}{!}{%
    \begin{minipage}{\textwidth}
    \begin{align*}
    &d_{\mathrm{TV}}\left(\mathbb{P}_w(Y\mid A_{1:K}=a_{1:K}),\mathbb{P}^*(Y\mid A_{1:K}=a_{1:K}) \right) \\
    &= \frac{1}{2}\sum_y\left| \mathbb{P}_w(Y=y\mid A_{1:K}=a_{1:K})-\mathbb{P}^*(Y=y\mid A_{1:K}=a_{1:K}) \right| \\
    &= \frac{1}{2}\cdot\frac{1}{\mathbb{P}^*(A_{1:K}=a_{1:K})}\sum_y |\mathbb{P}_w(Y=y\mid A_{1:K}=a_{1:K})\cdot\mathbb{P}^*(A_{1:K}=a_{1:K}) - \mathbb{P}_w(Y=y\mid A_{1:K}=a_{1:K})\cdot\mathbb{P}_w(A_{1:K}=a_{1:K}) \\
    &\quad + \mathbb{P}_w(Y=y\mid A_{1:K}=a_{1:K})\cdot\mathbb{P}_w(A_{1:K}=a_{1:K}) - \mathbb{P}^*(Y=y\mid A_{1:K}=a_{1:K})\cdot\mathbb{P}^*(A_{1:K}=a_{1:K}) | \\
    &\leqslant \frac{1}{2}\cdot\frac{1}{\mathbb{P}^*(A_{1:K}=a_{1:K})}\sum_y \mathbb{P}_w(Y=y\mid A_{1:K}=a_{1:K})\cdot|\mathbb{P}^*(A_{1:K}=a_{1:K}) - \mathbb{P}_w(A_{1:K}=a_{1:K})| \\
    &\quad + \frac{1}{2}\cdot\frac{1}{\mathbb{P}^*(A_{1:K}=a_{1:K})}\sum_y |\mathbb{P}_w(Y=y, A_{1:K}=a_{1:K}) - \mathbb{P}^*(Y=y, A_{1:K}=a_{1:K}) |\\
    &\leqslant \frac{1}{\mathbb{P}^*(A_{1:K}=a_{1:K})}\sum_y |\mathbb{P}_w(Y=y, A_{1:K}=a_{1:K}) - \mathbb{P}^*(Y=y, A_{1:K}=a_{1:K}) |.
    \end{align*}
    \end{minipage}
    }

    The second one is,
    {
    \scriptsize
    \begin{align*}
    \forall \tilde{a}_{1:K}\in V_{\tilde{y}}, \mathbb{P}^*(A_{1:K}=\tilde{a}_{1:K}) \geqslant \gamma.
    \end{align*}
    }
    
    % \resizebox{\textwidth}{!}{%
    % \begin{minipage}{\textwidth}
    % \begin{align*}
    % \forall \tilde{a}_{1:K}\in V_{\tilde{y}}, \mathbb{P}^*(A_{1:K}=\tilde{a}_{1:K}) \geqslant \gamma.
    % \end{align*}
    % \end{minipage}
    % }
    
    Combining the above, we have,
    {
    \scriptsize
    \begin{align*}
    d_{\mathrm{TV}}\left( \frac{\Phi_{\theta,w}(Y\mid X)}{Z_\theta(X)}, \frac{\Phi^*(Y\mid X)}{Z^*(X)} \right)
    \leqslant &\mathbb{E}_X\left[ \max_{\tilde{y}} \left| \frac{\mathbb{P}_\theta(A_{1:K}\in \mathcal{N}(\tilde{y}, r) \mid X)}{\mathbb{P}^*(A_{1:K}\in \mathcal{N}(\tilde{y}, r) \mid X)}-1 \right| \right] + \frac{2}{\gamma} d_{\mathrm{TV}}\left(\mathbb{P}_w(Y, A_{1:K}), \mathbb{P}^*(Y, A_{1:K})\right).
    \end{align*}
    }

    % \resizebox{\textwidth}{!}{%
    % \begin{minipage}{\textwidth}
    % \begin{align*}
    % d_{\mathrm{TV}}\left( \frac{\Phi_{\theta,w}(Y\mid X)}{Z_\theta(X)}, \frac{\Phi^*(Y\mid X)}{Z^*(X)} \right)
    % \leqslant &\mathbb{E}_X\left[ \max_{\tilde{y}} \left| \frac{\mathbb{P}_\theta(A_{1:K}\in \mathcal{N}(\tilde{y}, r) \mid X)}{\mathbb{P}^*(A_{1:K}\in \mathcal{N}(\tilde{y}, r) \mid X)}-1 \right| \right] + \frac{2}{\gamma} d_{\mathrm{TV}}\left(\mathbb{P}_w(Y, A_{1:K}), \mathbb{P}^*(Y, A_{1:K})\right).
    % \end{align*}
    % \end{minipage}
    % }

    Similarly, we can derive,
    {
    \scriptsize
    \begin{align*}
    d_{\mathrm{TV}}\left( \frac{\Phi_{\theta,w}(Y\mid X)}{Z_\theta(X)}, \frac{\Phi^*(Y\mid X)}{Z^*(X)} \right) \leqslant \mathbb{E}_X\left[ \max_{\tilde{y}} \left| \frac{\mathbb{P}^*(A_{1:K}\in \mathcal{N}(\tilde{y}, r) \mid X)}{\mathbb{P}_\theta(A_{1:K}\in \mathcal{N}(\tilde{y}, r) \mid X)}-1 \right| \right] + \frac{2}{\gamma} d_{\mathrm{TV}}\left(\mathbb{P}_w(Y, A_{1:K}), \mathbb{P}^*(Y, A_{1:K})\right).
    \end{align*}
    }
    
    % \resizebox{\textwidth}{!}{%
    % \begin{minipage}{\textwidth}
    % \begin{align*}
    % d_{\mathrm{TV}}\left( \frac{\Phi_{\theta,w}(Y\mid X)}{Z_\theta(X)}, \frac{\Phi^*(Y\mid X)}{Z^*(X)} \right) \leqslant \mathbb{E}_X\left[ \max_{\tilde{y}} \left| \frac{\mathbb{P}^*(A_{1:K}\in \mathcal{N}(\tilde{y}, r) \mid X)}{\mathbb{P}_\theta(A_{1:K}\in \mathcal{N}(\tilde{y}, r) \mid X)}-1 \right| \right] + \frac{2}{\gamma} d_{\mathrm{TV}}\left(\mathbb{P}_w(Y, A_{1:K}), \mathbb{P}^*(Y, A_{1:K})\right).
    % \end{align*}
    % \end{minipage}
    % }
    
    Therefore, the following holds,

    \resizebox{\textwidth}{!}{%
    \begin{minipage}{\textwidth}
    \begin{align*}
    d_{\mathrm{TV}}\left( \frac{\Phi_{\theta,w}(Y\mid X)}{Z_\theta(X)}, \frac{\Phi^*(Y\mid X)}{Z^*(X)} \right) &\leqslant \mathbb{E}_X\left[ \min\left\{\max_{\tilde{y}} \left| \frac{\mathbb{P}^*(A_{1:K}\in \mathcal{N}(\tilde{y}, r) \mid X)}{\mathbb{P}_\theta(A_{1:K}\in \mathcal{N}(\tilde{y}, r) \mid X)}-1 \right|, \max_{\tilde{y}} \left| \frac{\mathbb{P}_\theta(A_{1:K}\in \mathcal{N}(\tilde{y}, r) \mid X)}{\mathbb{P}^*(A_{1:K}\in \mathcal{N}(\tilde{y}, r) \mid X)}-1 \right| \right\}\right] \\
    &\quad + \frac{2}{\gamma} d_{\mathrm{TV}}\left(\mathbb{P}_w(Y, A_{1:K}), \mathbb{P}^*(Y, A_{1:K})\right).
    \end{align*}
    \end{minipage}
    }

    \textbf{For the second term in Equation (\ref{general_error}):} The following holds,
    {
    \scriptsize
    \begin{align*}
    \mathbb{E}_X\left[ d_{\mathrm{TV}}\left( \frac{\Phi^*(Y\mid X)}{Z^*(X)}, \mathbb{P}^*(Y\mid X) \right) \right] &\leqslant \mathbb{E}_X\left[ \max_{\tilde{y}} d_{\mathrm{TV}}\left( \bar{\mathbb{P}}^*(Y\mid A_{1:K}\in V_{\tilde{y}}), \mathbb{P}^*(Y\mid X) \right) \right],
    \end{align*}
    }

    % \resizebox{\textwidth}{!}{%
    % \begin{minipage}{\textwidth}
    % \begin{align*}
    % \mathbb{E}_X\left[ d_{\mathrm{TV}}\left( \frac{\Phi^*(Y\mid X)}{Z^*(X)}, \mathbb{P}^*(Y\mid X) \right) \right] &\leqslant \mathbb{E}_X\left[ \max_{\tilde{y}} d_{\mathrm{TV}}\left( \bar{\mathbb{P}}^*(Y\mid A_{1:K}\in V_{\tilde{y}}), \mathbb{P}^*(Y\mid X) \right) \right],
    % \end{align*}
    % \end{minipage}
    % }

    because:
    {
    \scriptsize
    \begin{align}
    d_{\mathrm{TV}}\left( \frac{\Phi^*(Y\mid X)}{Z^*(X)}, \mathbb{P}^*(Y\mid X) \right) &= \frac{1}{2}\sum_Y\left| \frac{\sum_{\tilde{y}}\mathbb{P}^*(A_{1:K}\in \mathcal{N}(\tilde{y},r)\mid X)\cdot \sum_{a_{1:K}\in V_{\tilde{y}}}\mathbb{P}^*(Y\mid A_{1:K}=a_{1:K})}{\sum_{\tilde{y}} \mathbb{P}^*(A_{1:K}\in\mathcal{N}(\tilde{y},r)\mid X)\cdot |V_{\tilde{y}}|} - \mathbb{P}^*(Y\mid X)  \right| \nonumber\\
    &= \frac{1}{2}\sum_Y\left| \frac{\sum_{\tilde{y}}\mathbb{P}^*(A_{1:K}\in \mathcal{N}(\tilde{y},r)\mid X)\cdot |V_{\tilde{y}}| \cdot \bar{\mathbb{P}}^*(Y\mid A_{1:K}\in V_{\tilde{y}})}{\sum_{\tilde{y}} \mathbb{P}^*(A_{1:K}\in\mathcal{N}(\tilde{y},r)\mid X)\cdot |V_{\tilde{y}}|} - \mathbb{P}^*(Y\mid X)  \right|. \label{han}
    \end{align}
    }
    
    % \resizebox{\textwidth}{!}{%
    % \begin{minipage}{\textwidth}
    % \begin{align}
    % d_{\mathrm{TV}}\left( \frac{\Phi^*(Y\mid X)}{Z^*(X)}, \mathbb{P}^*(Y\mid X) \right) &= \frac{1}{2}\sum_Y\left| \frac{\sum_{\tilde{y}}\mathbb{P}^*(A_{1:K}\in \mathcal{N}(\tilde{y},r)\mid X)\cdot \sum_{a_{1:K}\in V_{\tilde{y}}}\mathbb{P}^*(Y\mid A_{1:K}=a_{1:K})}{\sum_{\tilde{y}} \mathbb{P}^*(A_{1:K}\in\mathcal{N}(\tilde{y},r)\mid X)\cdot |V_{\tilde{y}}|} - \mathbb{P}^*(Y\mid X)  \right| \nonumber\\
    % &= \frac{1}{2}\sum_Y\left| \frac{\sum_{\tilde{y}}\mathbb{P}^*(A_{1:K}\in \mathcal{N}(\tilde{y},r)\mid X)\cdot |V_{\tilde{y}}| \cdot \bar{\mathbb{P}}^*(Y\mid A_{1:K}\in V_{\tilde{y}})}{\sum_{\tilde{y}} \mathbb{P}^*(A_{1:K}\in\mathcal{N}(\tilde{y},r)\mid X)\cdot |V_{\tilde{y}}|} - \mathbb{P}^*(Y\mid X)  \right|. \label{han}
    % \end{align}
    % \end{minipage}
    % }

    Define $W_{\tilde{y}} := \mathbb{P}^*(A_{1:K}\in \mathcal{N}(\tilde{y},r)\mid X)\cdot |V_{\tilde{y}}|$ and $\alpha_{\tilde{y}} := \frac{W_{\tilde{y}}}{\sum_{y^{\prime}}W_{y^{\prime}}}$. We have $\sum_{\tilde{y}}\alpha_{\tilde{y}}=1$.

    Then, for Equation (\ref{han}),
    {
    \scriptsize
    \begin{align*}
    \text{Equation} (\ref{han}) = \frac{1}{2}\sum_Y \left| \sum_{\tilde{y}}\alpha_{\tilde{y}}\bar{\mathbb{P}}^*(Y\mid A_{1:K}\in V_{\tilde{y}}) - \mathbb{P}^*(Y\mid X)  \right| &\leqslant \frac{1}{2}\sum_Y \sum_{\tilde{y}}\alpha_{\tilde{y}}\left| \bar{\mathbb{P}}^*(Y\mid A_{1:K}\in V_{\tilde{y}}) - \mathbb{P}^*(Y\mid X)  \right| \\
    &= \sum_{\tilde{y}} \alpha_{\tilde{y}}~d_{TV}\left( \bar{\mathbb{P}}^*(Y\mid A_{1:K}\in V_{\tilde{y}}), \mathbb{P}^*(Y\mid X) \right) \\
    &\leqslant \max_{\tilde{y}} d_{TV}\left( \bar{\mathbb{P}}^*(Y\mid A_{1:K}\in V_{\tilde{y}}), \mathbb{P}^*(Y\mid X) \right).
    \end{align*}
    }

    % \resizebox{\textwidth}{!}{%
    % \begin{minipage}{\textwidth}
    % \begin{align*}
    % \text{Equation} (\ref{han}) &= \frac{1}{2}\sum_Y \left| \sum_{\tilde{y}}\alpha_{\tilde{y}}\bar{\mathbb{P}}^*(Y\mid A_{1:K}\in V_{\tilde{y}}) - \mathbb{P}^*(Y\mid X)  \right| \\
    % &\leqslant \frac{1}{2}\sum_Y \sum_{\tilde{y}}\alpha_{\tilde{y}}\left| \bar{\mathbb{P}}^*(Y\mid A_{1:K}\in V_{\tilde{y}}) - \mathbb{P}^*(Y\mid X)  \right| \\
    % &= \sum_{\tilde{y}} \alpha_{\tilde{y}}~d_{TV}\left( \bar{\mathbb{P}}^*(Y\mid A_{1:K}\in V_{\tilde{y}}), \mathbb{P}^*(Y\mid X) \right) \\
    % &\leqslant \max_{\tilde{y}} d_{TV}\left( \bar{\mathbb{P}}^*(Y\mid A_{1:K}\in V_{\tilde{y}}), \mathbb{P}^*(Y\mid X) \right).
    % \end{align*}
    % \end{minipage}
    % }

    By combining the bounds for the first and second terms in Equation (\ref{general_error}), we establish the bound for $\varepsilon_{\theta,w}^{\rnpcname}$.
\end{proof}

\begin{proposition}[Optimal \rnpcname s \textbf{(Restatement of Proposition \ref{prop:optimal})}]
    The optimal \rnpcname~\wrt the prediction error $\varepsilon_{\theta,w}^{\rnpcname}$ is $\hat{\Phi}^*(Y\mid X) := \frac{\Phi^*(Y\mid X)}{Z^*(X)}$, where
    {
    \small
    \begin{align*}
    \Phi^*(Y\mid X) &:= \sum_{\tilde{y}}\mathbb{P}^*(A_{1:K}\in \mathcal{N}(\tilde{y},r)\mid X)\cdot \sum_{a_{1:K}\in V_{\tilde{y}}}\mathbb{P}^*(Y\mid A_{1:K}=a_{1:K}), \\
    Z^*(X) &:= \sum_{\tilde{y}} \mathbb{P}^*(A_{1:K}\in\mathcal{N}(\tilde{y},r)\mid X)\cdot |V_{\tilde{y}}|.
    \end{align*}
    }

    % \resizebox{\textwidth}{!}{%
    % \begin{minipage}{\textwidth}
    % \begin{align*}
    % \Phi^*(Y\mid X) &:= \sum_{\tilde{y}}\mathbb{P}^*(A_{1:K}\in \mathcal{N}(\tilde{y},r)\mid X)\cdot \sum_{a_{1:K}\in V_{\tilde{y}}}\mathbb{P}^*(Y\mid A_{1:K}=a_{1:K}), \\
    % Z^*(X) &:= \sum_{\tilde{y}} \mathbb{P}^*(A_{1:K}\in\mathcal{N}(\tilde{y},r)\mid X)\cdot |V_{\tilde{y}}|.
    % \end{align*}
    % \end{minipage}
    % }

    % \label{prop:optimal}
\end{proposition}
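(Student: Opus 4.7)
The plan is to establish Proposition~\ref{prop:optimal} in two steps: verify the closed form of $\hat{\Phi}^*$ by direct substitution into the RNPC formula, and then justify that this RNPC indeed achieves the minimum expected TV distance to the ground-truth distribution within the RNPC family.

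First, I would substitute the ground-truth module distributions $\mathbb{P}_\theta(A_{1:K}\mid X) = \mathbb{P}^*(A_{1:K}\mid X)$ and $\mathbb{P}_w(Y\mid A_{1:K}) = \mathbb{P}^*(Y\mid A_{1:K})$ into the defining equation~(\ref{eq:rnpc_unml}) of an RNPC, along with its partition function $Z_\theta(X) = \sum_{\tilde{y}}\mathbb{P}_\theta(A_{1:K}\in\mathcal{N}(\tilde{y},r)\mid X)\cdot|V_{\tilde{y}}|$. A term-by-term check shows that $\Phi_{\theta,w}(Y\mid X)$ becomes $\Phi^*(Y\mid X)$ and $Z_\theta(X)$ becomes $Z^*(X)$; normalizing then produces $\hat{\Phi}^*(Y\mid X)$ as claimed.

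Second, I would argue optimality via the triangle inequality
\begin{align*}
\mathbb{E}_X\bigl[d_{\mathrm{TV}}(\hat{\Phi}_{\theta,w}(Y\mid X),\mathbb{P}^*(Y\mid X))\bigr] \leqslant \hat{\varepsilon}_{\theta,w}^{\rnpcname} + \mathbb{E}_X\bigl[d_{\mathrm{TV}}(\hat{\Phi}^*(Y\mid X),\mathbb{P}^*(Y\mid X))\bigr].
\end{align*}
By the compositional estimation error bound of Theorem~\ref{thm:compositional_error}, the estimation error $\hat{\varepsilon}_{\theta,w}^{\rnpcname}$ is controlled by module-specific error terms and vanishes at the ground-truth parameters. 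The second summand, the trade-off term, depends only on the ground-truth distributions and on the partition $\{V_y\}$, so it is independent of $(\theta, w)$. Hence plugging in $\mathbb{P}^*$ for both modules attains the minimum of this upper bound, and by the first step that minimizer is exactly $\hat{\Phi}^*$.

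The main obstacle is upgrading this upper-bound optimality into a genuine minimizer statement, since the triangle inequality does not a priori preclude cancellation that would let a non-ground-truth RNPC achieve a smaller expected TV distance. I would address this by assuming that both the attribute recognition model and the probabilistic circuit are expressive enough to realize the ground-truth distributions, so that the RNPC family can be reparameterized by its underlying attribute-marginal and class-conditional distributions. Under this assumption any deviation from the ground-truth modules strictly increases $\hat{\varepsilon}_{\theta,w}^{\rnpcname}$ without decreasing the parameter-independent trade-off term, making $\hat{\Phi}^*$ the true minimizer. Formalizing this no-cancellation claim, especially around edge cases in the TV geometry where $\hat{\Phi}^*$ and $\mathbb{P}^*$ nearly coincide, is the delicate step.
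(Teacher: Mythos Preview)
Your approach matches the paper's: invoke the upper bound on the prediction error (the paper cites Theorem~\ref{thm:general_error}; your triangle-inequality-plus-Theorem~\ref{thm:compositional_error} decomposition is exactly how that bound is derived), observe that its parameter-dependent terms vanish precisely when $\mathbb{P}_\theta=\mathbb{P}^*$ and $\mathbb{P}_w=\mathbb{P}^*$, and substitute into the RNPC formula to recover $\hat{\Phi}^*$. The paper's proof is in fact terser than yours and literally stops at ``the minimum of the \emph{bound} for $\varepsilon_{\theta,w}^{\rnpcname}$ is achieved when\ldots'', so the gap you flag between bound-optimality and genuine minimization of $\varepsilon_{\theta,w}^{\rnpcname}$ is real and the paper makes no attempt to close it either.
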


\begin{proof}
    According to Theoremn \ref{thm:general_error}, the minimum of the bound for $\varepsilon_{\theta,w}^{\rnpcname}$ is achieved when $\mathbb{P}_\theta(A_{1:K}\mid X)=\mathbb{P}^*(A_{1:K}\mid X)$ and $\mathbb{P}_w(Y, A_{1:K})=\mathbb{P}^*(Y, A_{1:K})$.
    In this case, $\Phi_{\theta,w}(Y\mid X)=\Phi^*(Y\mid X)$ and $Z_\theta(X)=Z^*(X)$.

\end{proof}

%%% THEOREM %%%
\begin{theorem}[Trade-off of \rnpcname s \textbf{(Restatement of Theoremn \ref{thm:tradeoff})}]
    The trade-off of \rnpcname s in benign performance, defined as the expected TV distance between the optimal \rnpcname~\(\hat{\Phi}^*(Y \mid X)\) and the ground-truth distribution \(\mathbb{P}^*(Y \mid X)\), is bounded as follows,    
    {
    \small
    \begin{align*}
        &\mathbb{E}_X \left[ d_{\mathrm{TV}}\left( \hat{\Phi}^*(Y\mid X), \mathbb{P}^*(Y\mid X) \right) \right] 
        \leqslant \mathbb{E}_X\left[ \max_{\tilde{y}}~d_{\mathrm{TV}}\left( \bar{\mathbb{P}}^*(Y\mid A_{1:K}\in V_{\tilde{y}}), \mathbb{P}^*(Y\mid X) \right) \right],
    \end{align*}
    }
    % \resizebox{0.5\textwidth}{!}{%
    % \begin{minipage}{0.5\textwidth}
    % \begin{align*}
    %     &\mathbb{E}_X \left[ d_{\mathrm{TV}}\left( \hat{\Phi}^*(Y\mid X), \mathbb{P}^*(Y\mid X) \right) \right] \\
    %     &\leqslant \mathbb{E}_X\left[ \max_{\tilde{y}} d_{\mathrm{TV}}\left( \bar{\mathbb{P}}^*(Y\mid A_{1:K}\in V_{\tilde{y}}), \mathbb{P}^*(Y\mid X) \right) \right],
    % \end{align*}
    % \end{minipage}
    % }
    where $\bar{\mathbb{P}}^*(Y\mid A_{1:K}\in V_{\tilde{y}}) := \frac{1}{|V_{\tilde{y}}|}\sum_{a_{1:K}\in V_{\tilde{y}}}\mathbb{P}^*(Y\mid A_{1:K}=a_{1:K})$ represents the average ground-truth conditional distribution of $Y$ given $A_{1:K}\in V_{\tilde{y}}$.
\end{theorem}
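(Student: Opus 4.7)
The plan is to observe that $\hat{\Phi}^*(Y\mid X)$ is, by construction, a convex combination of the class-wise averaged conditionals $\bar{\mathbb{P}}^*(Y\mid A_{1:K}\in V_{\tilde{y}})$, and then apply the joint convexity of the total variation distance to obtain the stated upper bound.

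Concretely, I would first unfold the definition of $\Phi^*$ using the identity $\sum_{a_{1:K}\in V_{\tilde{y}}}\mathbb{P}^*(Y\mid A_{1:K}=a_{1:K}) = |V_{\tilde{y}}|\cdot \bar{\mathbb{P}}^*(Y\mid A_{1:K}\in V_{\tilde{y}})$ to rewrite
\begin{align*}
\hat{\Phi}^*(Y\mid X) \;=\; \frac{\Phi^*(Y\mid X)}{Z^*(X)} \;=\; \sum_{\tilde{y}\in\mathcal{Y}} \alpha_{\tilde{y}}(X)\cdot \bar{\mathbb{P}}^*(Y\mid A_{1:K}\in V_{\tilde{y}}),
\end{align*}
where $\alpha_{\tilde{y}}(X):=\mathbb{P}^*(A_{1:K}\in\mathcal{N}(\tilde{y},r)\mid X)\cdot |V_{\tilde{y}}|/Z^*(X)$ are nonnegative weights summing to one by the definition of $Z^*(X)$. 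Next, since $\sum_{\tilde{y}}\alpha_{\tilde{y}}(X)=1$, I would trivially write $\mathbb{P}^*(Y\mid X)=\sum_{\tilde{y}}\alpha_{\tilde{y}}(X)\cdot \mathbb{P}^*(Y\mid X)$ and invoke joint convexity of $d_{\mathrm{TV}}$ to conclude
\begin{align*}
d_{\mathrm{TV}}\!\left(\hat{\Phi}^*(Y\mid X),\mathbb{P}^*(Y\mid X)\right) &\leqslant \sum_{\tilde{y}}\alpha_{\tilde{y}}(X)\, d_{\mathrm{TV}}\!\left(\bar{\mathbb{P}}^*(Y\mid A_{1:K}\in V_{\tilde{y}}),\mathbb{P}^*(Y\mid X)\right) \\
&\leqslant \max_{\tilde{y}}\, d_{\mathrm{TV}}\!\left(\bar{\mathbb{P}}^*(Y\mid A_{1:K}\in V_{\tilde{y}}),\mathbb{P}^*(Y\mid X)\right),
\end{align*}
where the second inequality is the standard bound of a weighted average by its largest term. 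Taking expectation over $X$ on both sides then yields the claim.

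There is essentially no analytic obstacle here: the proof reduces to an algebraic rearrangement plus two one-line convexity arguments. The only subtlety is keeping track of the factor $|V_{\tilde{y}}|$ that appears both in the numerator of $\Phi^*$ (via the sum over $V_{\tilde{y}}$) and in the denominator $Z^*(X)$, so that the weights $\alpha_{\tilde{y}}(X)$ genuinely form a probability distribution over $\mathcal{Y}$. In fact, this computation is already embedded as the ``second-term'' sub-argument in the proof of Theorem \ref{thm:general_error} in the appendix, so one can simply specialize that calculation to the optimal RNPC.
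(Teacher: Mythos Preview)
Your proposal is correct and matches the paper's own argument essentially line for line: the paper likewise rewrites $\hat{\Phi}^*(Y\mid X)$ as the convex combination $\sum_{\tilde{y}}\alpha_{\tilde{y}}\,\bar{\mathbb{P}}^*(Y\mid A_{1:K}\in V_{\tilde{y}})$ with the same weights $\alpha_{\tilde{y}}=\mathbb{P}^*(A_{1:K}\in\mathcal{N}(\tilde{y},r)\mid X)\cdot|V_{\tilde{y}}|/Z^*(X)$, applies the triangle inequality (your ``joint convexity of $d_{\mathrm{TV}}$''), and bounds the resulting weighted average by its maximum term. As you anticipated, the paper simply refers back to the ``second term'' computation in the proof of Theorem~\ref{thm:general_error}.
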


\begin{proof}
    See proof for Theorem \ref{general_error}.
\end{proof}

%%% THEOREM %%%
\begin{theorem}[Compositional estimation error \textbf{(Extension of Theorem \ref{thm:compositional_error})}]
    The estimation error of \rnpcname~is bounded by a linear combination of the errors from the attribute recognition model and the probabilistic circuit, \ie,
    {
    \small
    \begin{align*}
        \hat{\varepsilon}_{\theta, w}^{\rnpcname} 
        &\leqslant 
        \mathbb{E}_X\left[ \min \left\{ \max_{\tilde{y}} \left| \frac{\mathbb{P}_\theta(A_{1:K}\in \mathcal{N}(\tilde{y}, r) \mid X)}{\mathbb{P}^*(A_{1:K}\in \mathcal{N}(\tilde{y}, r) \mid X)}-1 \right|, 
        \max_{\tilde{y}} \left| \frac{\mathbb{P}^*(A_{1:K}\in \mathcal{N}(\tilde{y}, r) \mid X)}{\mathbb{P}_\theta(A_{1:K}\in \mathcal{N}(\tilde{y}, r) \mid X)}-1 \right| \right\} \right] \\
        &\quad + \frac{2}{\gamma} d_{\mathrm{TV}}\left(\mathbb{P}_w(Y, A_{1:K}), \mathbb{P}^*(Y, A_{1:K})\right), \\
    \end{align*}
    }
    % \resizebox{\textwidth}{!}{%
    % \begin{minipage}{\textwidth}
    % \begin{align*}
    %     \hat{\varepsilon}_{\theta, w}^{\rnpcname} 
    %     &\leqslant 
    %     \mathbb{E}_X\left[ \min \left\{ \max_{\tilde{y}} \left| \frac{\mathbb{P}_\theta(A_{1:K}\in \mathcal{N}(\tilde{y}, r) \mid X)}{\mathbb{P}^*(A_{1:K}\in \mathcal{N}(\tilde{y}, r) \mid X)}-1 \right|, 
    %     \max_{\tilde{y}} \left| \frac{\mathbb{P}^*(A_{1:K}\in \mathcal{N}(\tilde{y}, r) \mid X)}{\mathbb{P}_\theta(A_{1:K}\in \mathcal{N}(\tilde{y}, r) \mid X)}-1 \right| \right\} \right] \\
    %     &\quad + \frac{2}{\gamma} d_{\mathrm{TV}}\left(\mathbb{P}_w(Y, A_{1:K}), \mathbb{P}^*(Y, A_{1:K})\right), \\
    % \end{align*}
    % \end{minipage}
    % }
    where $P^*$ represents the ground-truth distribution.
\end{theorem}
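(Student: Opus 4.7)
By definition, $\hat{\varepsilon}_{\theta,w}^{\rnpcname} = \mathbb{E}_X[d_{\mathrm{TV}}(\hat{\Phi}_{\theta,w}(Y\mid X), \hat{\Phi}^*(Y\mid X))]$, and by Proposition~\ref{prop:optimal} both quantities share the same class-wise integration structure, differing only in that $\hat{\Phi}_{\theta,w}$ uses $(\mathbb{P}_\theta, \mathbb{P}_w)$ while $\hat{\Phi}^*$ uses $(\mathbb{P}^*, \mathbb{P}^*)$. I would pull both fractions to the common denominator $Z_\theta(X)\,Z^*(X)$ and write
\[
d_{\mathrm{TV}}\!\left(\hat{\Phi}_{\theta,w},\hat{\Phi}^*\right) = \frac{1}{2\,Z_\theta(X)\,Z^*(X)}\sum_y \left|\, Z^*(X)\,\Phi_{\theta,w}(y\mid X)\;-\;Z_\theta(X)\,\Phi^*(y\mid X)\,\right|.
\]
Inserting the pivot $Z_\theta(X)\,\Phi_{\theta,w}(y\mid X)$ and applying the triangle inequality decomposes the bound into a \emph{partition-function mismatch} term proportional to $|Z_\theta(X)-Z^*(X)|/Z^*(X)$ and a \emph{score mismatch} term proportional to $\sum_y |\Phi_{\theta,w}(y\mid X) - \Phi^*(y\mid X)|/Z^*(X)$.

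\textbf{Bounding the two terms.} For the partition-function mismatch, I would use $|Z_\theta(X)-Z^*(X)|\leqslant \sum_{\tilde y}|V_{\tilde y}|\cdot|\mathbb{P}_\theta(\mathcal{N}(\tilde y,r)\mid X)-\mathbb{P}^*(\mathcal{N}(\tilde y,r)\mid X)|$, factor the absolute difference as $\mathbb{P}^*(\mathcal{N}(\tilde y,r)\mid X)\cdot|\mathbb{P}_\theta/\mathbb{P}^*-1|$, and recognize the result as a weighted average of $|\mathbb{P}_\theta(\mathcal{N}(\tilde y,r)\mid X)/\mathbb{P}^*(\mathcal{N}(\tilde y,r)\mid X)-1|$ with weights $\mathbb{P}^*(\mathcal{N}(\tilde y,r)\mid X)\,|V_{\tilde y}|$ that sum to $Z^*(X)$; this weighted average is bounded by $\max_{\tilde y}|\mathbb{P}_\theta(\mathcal{N}(\tilde y,r)\mid X)/\mathbb{P}^*(\mathcal{N}(\tilde y,r)\mid X)-1|$. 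For the score mismatch, in each summand of $\Phi_{\theta,w}-\Phi^*$ I add and subtract $\mathbb{P}^*(\mathcal{N}(\tilde y,r)\mid X)\cdot\sum_{\tilde a\in V_{\tilde y}}\mathbb{P}_w(y\mid A=\tilde a)$, yielding one piece controlled (after summing over $y$ and using $\sum_y\sum_{\tilde a\in V_{\tilde y}}\mathbb{P}_w(y\mid\tilde a)=|V_{\tilde y}|$) by the same neighborhood-probability ratio as above, and a second piece equal to $\sum_{\tilde y}\mathbb{P}^*(\mathcal{N}(\tilde y,r)\mid X)\cdot \sum_{\tilde a\in V_{\tilde y}}\sum_y|\mathbb{P}_w(y\mid\tilde a)-\mathbb{P}^*(y\mid\tilde a)|$.

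\textbf{Main obstacle.} The delicate step is turning the conditional circuit error $|\mathbb{P}_w(y\mid\tilde a)-\mathbb{P}^*(y\mid\tilde a)|$ into the joint TV distance $d_{\mathrm{TV}}(\mathbb{P}_w(Y,A_{1:K}),\mathbb{P}^*(Y,A_{1:K}))$ that the theorem demands. I would use a second pivot $\mathbb{P}_w(y\mid\tilde a)\,\mathbb{P}^*(\tilde a)$ to obtain
\[
|\mathbb{P}_w(y\mid\tilde a)-\mathbb{P}^*(y\mid\tilde a)| \leqslant \frac{2}{\mathbb{P}^*(A_{1:K}=\tilde a)}\,|\mathbb{P}_w(y,\tilde a)-\mathbb{P}^*(y,\tilde a)|,
\]
and then invoke the defining property $\mathbb{P}^*(A_{1:K}=\tilde a)\geqslant \gamma$ for every $\tilde a\in V_{\tilde y}\subseteq V$ to replace the denominator by $\gamma$; after weighting by $\mathbb{P}^*(\mathcal{N}(\tilde y,r)\mid X)$, normalizing by $Z^*(X)$, and bounding the weighted maximum, this produces the desired $\tfrac{2}{\gamma}\,d_{\mathrm{TV}}(\mathbb{P}_w(Y,A_{1:K}),\mathbb{P}^*(Y,A_{1:K}))$ contribution.

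\textbf{Finishing.} Combining the two parts yields the first of the two ratio bounds, namely with $\max_{\tilde y}|\mathbb{P}_\theta/\mathbb{P}^*-1|$. Repeating the entire argument with the symmetric pivot $Z^*(X)\,\Phi^*(y\mid X)$ (and, in the score term, adding/subtracting $\mathbb{P}_\theta(\mathcal{N}(\tilde y,r)\mid X)\cdot\sum_{\tilde a\in V_{\tilde y}}\mathbb{P}^*(y\mid\tilde a)$) gives the companion bound with $\max_{\tilde y}|\mathbb{P}^*/\mathbb{P}_\theta-1|$; taking the pointwise minimum inside the expectation gives the stated compositional bound.
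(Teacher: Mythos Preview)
Your plan matches the paper's proof essentially step for step: common denominator, the pivot $Z_\theta(X)\,\Phi_{\theta,w}(y\mid X)$, the split into partition-function and score mismatches, the further pivot $\mathbb{P}^*(\mathcal{N}(\tilde y,r)\mid X)\sum_{\tilde a}\mathbb{P}_w(y\mid\tilde a)$ inside the score term, the weighted-average-to-max trick against $Z^*(X)=\sum_{\tilde y}\mathbb{P}^*(\mathcal{N}(\tilde y,r)\mid X)\,|V_{\tilde y}|$, and finally the symmetric rerun for the companion ratio.

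There is one slip. The displayed pointwise inequality
\[
|\mathbb{P}_w(y\mid\tilde a)-\mathbb{P}^*(y\mid\tilde a)|\;\leqslant\;\frac{2}{\mathbb{P}^*(\tilde a)}\,|\mathbb{P}_w(y,\tilde a)-\mathbb{P}^*(y,\tilde a)|
\]
is \emph{false} for individual $y$ (take $\mathbb{P}^*(\tilde a)=0.5$, $\mathbb{P}_w(\tilde a)=0.01$, $\mathbb{P}^*(y,\tilde a)=0$, $\mathbb{P}_w(y,\tilde a)=0.01$: the left side is $1$, the right side is $0.04$). What your pivot $\mathbb{P}_w(y\mid\tilde a)\,\mathbb{P}^*(\tilde a)$ actually yields is
\[
|\mathbb{P}_w(y\mid\tilde a)-\mathbb{P}^*(y\mid\tilde a)|\;\leqslant\;\frac{1}{\mathbb{P}^*(\tilde a)}\Bigl[\mathbb{P}_w(y\mid\tilde a)\,|\mathbb{P}^*(\tilde a)-\mathbb{P}_w(\tilde a)|+|\mathbb{P}_w(y,\tilde a)-\mathbb{P}^*(y,\tilde a)|\Bigr],
\]
and only \emph{after summing over $y$} (using $\sum_y\mathbb{P}_w(y\mid\tilde a)=1$ and $|\mathbb{P}^*(\tilde a)-\mathbb{P}_w(\tilde a)|\leqslant\sum_y|\mathbb{P}_w(y,\tilde a)-\mathbb{P}^*(y,\tilde a)|$) does the factor-$2$ bound hold. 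This summed version is exactly what the paper states and is all you need, since the score-mismatch term is already summed over $y$ before you invoke it; so the argument goes through once you restate the inequality at the summed level.
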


\begin{proof}
    See proof for Theorem \ref{general_error}.
\end{proof}

\end{document}